\newcommand\norm[1]{\lVert#1\rVert}
\newcommand{\txtt}{\Tilde{x}_{t+1}}
\newcommand{\stt}{\alpha_t}
\newcommand{\td}[1]{\Tilde{#1}}
\newcommand{\hxt}{\hat{x}_{t}}
\newcommand{\hxtt}{\hat{x}_{t+1}}
\newtheorem{theorem}{Theorem}
\newtheorem{lemma}[theorem]{Lemma}
\newtheorem{definition}{Definition}
\newtheorem{remark}{Remark}
\def\Figurespath{./log_figures}
\def\Figurespath{./log_figures}
\title{Adaptive Step-Size Methods for Compressed SGD}
\author{\begin{tabular}{c} Adarsh M. Subramaniam,
Akshayaa Magesh,
Venugopal V. Veeravalli
\end{tabular} \\
Department of Electrical and Computer Engineering, \\
University of Illinois at Urbana-Champaign
%\thanks{
%This material is based upon work supported by the National Science Foundation (NSF) under Grant No.~CCF-.}
}
\begin{document}

\maketitle

\begin{abstract}
  Compressed Stochastic Gradient Descent (SGD) algorithms have been recently proposed to address the communication bottleneck in distributed and decentralized optimization problems, such as those that arise in federated machine learning.  Existing compressed SGD algorithms assume the use of non-adaptive step-sizes~(constant or diminishing) to provide theoretical convergence guarantees. Typically, the step-sizes are fine-tuned in practice to the dataset and the learning algorithm to provide good empirical performance. Such fine-tuning might be impractical in many learning scenarios, and it is therefore of interest to study compressed SGD using adaptive step-sizes. Motivated by prior work on adaptive step-size methods for SGD to train neural networks efficiently in the uncompressed setting, we develop an adaptive step-size method for compressed SGD.
  In particular, we introduce a \emph{scaling} technique for the descent step in compressed SGD, which we use to establish order-optimal convergence rates for convex-smooth and strong convex-smooth objectives under an \textit{interpolation condition} and for non-convex objectives under a \textit{strong growth condition}. We also show through simulation examples that without this scaling, the algorithm can fail to converge. We present experimental results on deep neural networks for real-world datasets, and compare the performance of our proposed algorithm with previously proposed compressed SGD methods in literature, and demonstrate improved performance on ResNet-18, ResNet-34 and DenseNet architectures for CIFAR-100 and CIFAR-10 datasets at various levels of compression.
\end{abstract}

\section{Introduction}

Consider the optimization problem of minimizing the sum of $n$ functions :
\begin{equation}\label{eq:min_func}
\min_{x} f(x) = \min_{x} \frac{1}{n}\sum_{i=1}^{n} f_i(x).
\end{equation}
This formulation is widely used in machine learning, where $f_i$ is the loss function corresponding to datapoint $i$, $n$ is the size of the training set, and $x$ corresponds to the parameters of the learning model.  
%Akshayaa: I think you should write some more motivation here for compression not just for distributed setting, since most of our theory, and all our experiments are in a single node case. We should maybe talk about memory constraints or things like that.

A practical approach to solving the optimization problem in \eqref{eq:min_func} is stochastic gradient descent~(SGD) where one of the component functions $f_i$ is sampled at random at a given iteration step, and the iterate $x$ is updated using the gradient of $f_i$. The main problem of interest in this paper is one where compressed versions of the gradients are used in the SGD iterations. The use of compressed gradients allows for the alleviation of the communication bottleneck in distributed and decentralized optimization settings such as those that arise in federated learning \cite{kairouz2021advances}.

There have been several recent papers on compressed SGD algorithms (see, e.g., \cite{dryden_quant,aji_hea,stich2018sparsified,alistarh2018convergence,Tang2018CommunicationCF,sign_sgd,basu_qsparselocalsgd,pmlr-v130-shokri-ghadikolaei21a,vqSGDVQ,Kovalev2021ALC,lin2021quasiglobal}).   Compressed gradient transmission was first explored in \cite{dryden_quant,aji_hea}, where only the top $k$~(e.g., $k= 1\%$) values of the components of the gradient (with feedback) are used in the iteration update. %Feedback has been shown to be crucial for convergence.
%Akshayaa: The above sentence regarding memory feedback needs citations
The convergence properties of such compressed gradient methods with feedback were studied in a number of subsequent papers (e.g.,  \cite{alistarh2018convergence,stich2018sparsified}). Feedback has also been shown to fix convergence issues in compression algorithms such as Sign-SGD \cite{karimireddy2019error}. %( %Akshayaa: This sentence doesnt make much sense).
In the decentralized setting, the convergence of SGD with compression and the use of gossip algorithms has been established  in~\cite{koloskova2019decentralized}. 

Existing works on compressed SGD algorithms assume the use of  diminishing or constant step-sizes based on system parameters to show convergence properties. In applications to machine learning, the step-sizes are fine tuned to the dataset and the machine learning model to provide good empirical performance. However, such fine tuning can be impractical and can result in sub-optimal rates of convergence. Hence it is of interest to study adaptive step-size methods for compressed SGD.  

%The work in \cite{koloskova2019decentralized} has shown convergence guarantees for the decentralized SGD setting with compressed gradients using gossip algorithms. 
%Akshayaa: Above statement needs some kind of explanation as to why we are mentioning it, since we are actually using this compression in a part of our work

%Akshayaa: Need to talk about motivation for adaptive step-sizes - drawbacks of tuning, advantages of these methods, lack of theoretical guarantees for some of these methods, and how we try to address both. I have written a couple of sentences, but I think this paragraph needs to be re-written by you.

% Federated learning is a real time process \textit{i.e.}, there is only one attempt at training the dataset stored on a collection of devices. Hence fine tuning SGD in a federated learning setup is impractical. We are not aware of any adaptive step-size search techniques for first order compressed SGD methods with \textit{rigorous theoretical guarantees}.

Adaptive step-size techniques for (uncompressed) SGD have been studied in several recent works~(see, e.g.,~\cite{stoc_painless,duboistaine2021svrg,vaswani2021adaptive}). An efficient implementation of the classical Armijo step-search method~\cite{armijo_ls} to train neural networks has been proposed in the work of~\cite{stoc_painless}. Extensions to improve the performance of adaptive gradient methods such as Adagrad~\cite{adagrad} and AMSgrad~\cite{ams_grad} have been proposed in~\cite{vaswani2021adaptive}. Adaptive step-size methods for variance reduction algorithms like SVRG have also been proposed in~\cite{duboistaine2021svrg}. In order to establish convergence, these works assume that the objective satisfies an \textit{interpolation condition}. The condition essentially implies that there exists a point $x^*$ which is a minimum for all the functions $f_i$ in the optimization problem~\eqref{eq:min_func}. 
Several other recent works,~\cite{bassily2018exponential,Liu2018AcceleratingST,ma2018power} have used the \textit{interpolation condition} to establish convergence of non-compressed SGD in various settings. 
The use of this condition has also been supported by theoretical works  \cite{arora2019finegrained,montanari2021interpolation}, in the context of modern neural networks, where the number of parameters is much larger than the number of datapoints in the training dataset. The interpolation condition is also satisfied by models such as  non-parameteric regression~\cite{Liang_2020}, boosting~\cite{boosting_schapire} and linear classification on separable data \cite{hastie_stat}.

Motivated by adaptive step-size techniques for uncompressed SGD, we develop an adaptive step-size technique for compressed SGD that results in faster convergence at the same compression level than existing non-adaptive step-size compressed SGD algorithms. To this end, we develop a \textit{scaling technique}, which is crucial to the convergence of our adaptive step-size compressed SGD algorithm. To the best of our knowledge, we are not aware of any adaptive step-size SGD methods for compressed SGD with feedback that have theoretical guarantees.

\subsection*{Our Contributions}
\begin{enumerate}
%Akshayaa: We need to talk about what $\sigma$ is earlier only. Otherwise it makes no sense in referring to it here for the first time.

\item We introduce the idea of scaling for the classical Armijo step-size search, and demonstrate that it can accelerate convergence in certain asymmetric optimization problems. %where the gradient does not point in the direction of the minima.

\item {Using a scaling technique, we prove that the Armijo step-size search \cite{armijo_ls} for Gradient Descent~(GD) converges at the rate of $O(\frac{1}{T})$ for convex and smooth functions, where $T$ is the number of iterations,  for all values of $\sigma \in (0,1)$}. Here $\sigma$ is a parameter of Armijo's rule (see \eqref{eq:armijo_ls} for a definition). To the best of our understanding, prior works have been able to show $O(\frac{1}{T})$ convergence only for values of $\sigma\geq 0.5$ .

\item  We propose a computationally feasible and efficient adaptive step-size search compressed SGD algorithm that uses the scaling technique and the biased $top_k$ operator for compression. We prove under an \textit{interpolation condition} (see Definition~\ref{def:interpol}), our proposed algorithm achieves a convergence rate of $O(\frac{1}{T})$ when the objective function is convex and smooth, and $O(\beta^T)$ ($\beta < 1$) when the objective function is strongly-convex and smooth. We also show that under a \textit{strong growth condition} (see Definition~\ref{def:sgc}), the algorithm achieves a convergence rate of $O(\frac{1}{T})$ in the non-convex setting.  The above results hold for all values of the parameter $\sigma \in (0,1)$ of the step-search. %(as opposed to previous theoretical results \cite{stoc_painless} in the uncompressed regime that have been able to show rates only for $\sigma \geq 0.5$ for convex and strong-convex objectives but use $\sigma = 0.1$ in practice).

\item We demonstrate through experiments that the scaling technique is not just a proof technicality; without scaling, a direct implementation of Armijo step-size search for compressed SGD can result in divergence of the algorithm from the minimum.

\item  We demonstrate that our algorithm for compressed SGD outperforms existing non-adaptive SGD methods on neural network training tasks at various levels of compression.

\end{enumerate}
%Akshayaa: I think the contributions need to be revised a little more.

% Contribution has to be extended.

\section{Preliminaries and Notation}

In this section, we lay out the notation used in this paper, discuss the classical Armijo step-size search, the compression operator $top_k$,  and the conditions under which we prove our convergence results.

\subsection{Notation}
Let $x_t$ denote the iterate of the optimization algorithm at time $t$, and $i_t$ denote the datapoint or batch of data chosen at time $t$. The loss function of the batch $i_t$ chosen at time $t$ at an iterate $x_t$ is denoted by $f_{i_t}(x_t)$. The step-size used at time $t$ is denoted by $\eta_t$. 

\subsection{Armijo Step-Size Search in Gradient Descent} \label{sec:Armijo_GD}
The algorithm proposed in this work makes use of a modified and efficient implementation of the classical Armijo step-size search method~\cite{armijo_ls}. For a function $f$, the method at iteration $t$ searches for the step-size starting at $\alpha_{\mathrm{max}}$, and decreases the step-size by a factor $\rho<1$ until the condition
\begin{equation}\label{eq:armijo_ls}
 f(x_t -  \alpha_t \nabla f(x_t)) \leq f(x_t) - \sigma \alpha_t ||\nabla f(x_t)||^2
\end{equation}
is met, where $\sigma$ is a parameter of the algorithm. The step-size $\alpha_t$ that satisfies \eqref{eq:armijo_ls} is substituted as the value of the step-size in the iteration of the GD algorithm. The pseudocode for Armijo step-size search is given in  Algorithm~\ref{alg:armijo_ls}.

%A : We shouldnt talk about this in the section about classical Armijo's rule
%Recently, vaswani et.al[] proposed using the Armijo-step-size search for choosing step-sizes for SGD in neural networks. It essentially involves searching for a step-size according to equation(\ref{eq:armijo_ls}), but with a randomly chosen data point (or batch) $i_t$ at each time instant $t$. 
%Essentially $  f_{i_t}(x_t -  \eta_t \nabla f_{i_t}(x_t)) \leq f_{i_t}(x_t) - \sigma \eta_t ||\nabla f_{i_t}(x_t)||^2$. 
\begin{algorithm} 
\caption{Armijo Step-Size Search}
\label{alg:armijo_ls}
\begin{algorithmic}[1]
\Procedure{armijo step-size search}{$f,\alpha_{\mathrm{max}},x_t,t$}       
    \State $\alpha_t = \alpha_{\mathrm{max}}$
        \Repeat
        \State $\alpha_t \leftarrow \alpha_t\rho$ 
        \State $\tilde{x}_{t+1} \leftarrow x_{t} - \alpha_t  \nabla f(x_t)$ \label{step:armijo_line}
        \Until{$f(\tilde{x}_{t+1}) \leq f(x_{t}) - \sigma\alpha_t ||\nabla f(x_t)||^2$}\label{step:armi_stop} \\
    \hspace{0.025\textwidth} \Return $\alpha_t$
\EndProcedure
\end{algorithmic}
\end{algorithm}

\subsection{Top\textsubscript{k} Compression Operator}
%A : Add reference for this

In our compressed SGD algorithm, the gradient in each iteration is compressed using the $top_k$ compression operator (see, e.g., \cite{stich2018sparsified,basu_qsparselocalsgd}). Let $x \in \mathbb{R}^d$. Let $\mathsf{T_{k}}$ be the set of indices of the $k$ elements of $x$ with maximum magnitude, and define $\gamma \triangleq \frac{k}{d}$. The $top_k$ operator compresses a vector $x$ such that 
\begin{equation}
top_k(x)_i = \begin{cases}
(x)_i & \text{If $i\in\ \mathsf{T_{k}}$} \\
0 & \text{otherwise}\\
\end{cases}.
\end{equation}

%Need to talk about why we are introducing this here at the beginning of this section. Needs more motivation
\subsection{Definitions}
%(some references : look at [87, 54, 5, 49, 14, 34, 75]  of Painless Stochastic Gradient: Interpolation, Line-Search, and Convergence Rates https://arxiv.org/pdf/1905.09997.pdf

To establish the convergence of our algorithm, we require the following interpolation condition in the convex setting, and the strong growth condition in the non-convex setting.

\begin{definition}[Interpolation]\cite{liu2019accelerating}
A function $f(x) = \frac{1}{n} \sum_{i=1}^{n}f_i(x)$ satisfies the interpolation condition if 
\begin{equation}\label{eq:interpol}
 \exists \ x^* \;\;\; \text{s.t.} \;\;\; \nabla f_i(x^*) =0, \forall  i\in \{1,2,\cdots,n\}.   
\end{equation}
\label{def:interpol}
\end{definition} 
%Essentially the interpolation condition states that $\exists\ x^*$ s.t $x^*$ is a minimum for all loss functions $f_i,\ i\in[n]$ 
\begin{definition}[Strong Growth Condition]\cite{stoc_painless}
A function $f= \frac{1}{n} \sum_{i=1}^{n}f_i(x)$ satisfies the strong growth condition with constant $\nu$ if 
\begin{equation}\label{eq:sgc}
    \frac{1}{n}\sum_{i=1}^{n}||\nabla f_i(x)||^2 \leq \nu ||\nabla f(x)||^2
\end{equation}
\label{def:sgc}
\end{definition}

\begin{comment}
\section{Unbiased compressed stochastic gradient descent with line search}\label{sec:csgd-uc}
In the appendix section, we present a compressed SGD algorithm with unbiased compression~(CSGD-UC) based on the random-top-k operator in \cite{jordan_operator}, and Armijo line search for the choosing the step-size. The algorithm attains the same convergence rates as the uncompressed SGD with Armijo line search~(USGD-AL) in \cite{stoc_painless} in the convex and non-convex settings. The CSGD-UC algorithm is more of theoretical interest and helps motivate scaling in the CSGD-ASSS algorithm, which is the main result we present in this paper.
\end{comment}

\section{Compressed SGD with Adaptive Step-Size}\label{sec:biased_comp}

Our proposed compressed SGD algorithm is motivated by the idea of scaling, where the step-size returned by the Armijo rule is scaled by a factor $a$ in the descent step. We now discuss the intuition for such scaling, and propose an algorithm for compressed SGD with Armijo step-size search and scaling.

\subsection{Armijo Step-Size Search with Scaling}
Optimization problems that arise in machine learning applications are usually asymmetric, and have gradients that do not point in the direction of a minimizer. For example, consider the function $f(x) = \frac{x_1^2}{4} + \frac{x_2^2}{9}$. The negative gradient at a point is orthogonal to the tangent and does not {necessarily} point towards the minimizer $(0,0)$. Hence, GD with Armijo step-size search may suffer from slower convergence if $\alpha_{\mathrm{max}}$ is large. In order to alleviate this issue, we introduce the idea of scaling where the step-size $\alpha_t$ returned by  Armijo's rule \eqref{eq:armijo_ls} is scaled by a factor $a< 2\sigma$, and $\eta_t \triangleq a \alpha_t$ is used as the step-size in the descent step, i.e.,
\[
x_{t+1} = x_t -  \eta_t \nabla f(x_t).
\]
It is important to note that the stopping criterion for Armijo's rule  \eqref{eq:armijo_ls} uses $\alpha_t$ (and not $\eta_t$) to scale the gradient. In the Appendix~(Figure~\ref{fig:scale_plots}), we plot the loss for GD with Armijo step-size search for the asymmetric function 
$f(x) = \sum_{i=1}^{10} \frac{x_i^2}{2^i}$, and show that the scaled GD algorithm outperforms the non-scaled one by several orders of magnitude in terms of convergence time to the minimum.

% As a demonstration, we plot the loss for GD with Armijo step-size search on the asymmetric curve $\sum_{i=1}^{10} \frac{x_i^2}{2^i}$ and the symmetric curve $\sum_{i=1}^{10} \frac{x_i^2}{2^5}$ in the Appendix, which shows that the scaled GD algorithm outperforms non-scaled GD by several orders of magnitude in the asymmetric setting.

Another aspect of GD algorithm with Armijo step-size search is that while the algorithm converges for all values of $\sigma\in(0,1)$, a convergence rate of $O(1/T)$ for convex objectives (where $T$ is the number of iterations of the GD algorithm) has been proven only for values of $\sigma \geq 0.5$ \cite{armijo_ls}. On the other hand, commonly used values of $\sigma$ in practice are in the range $[10^{-5} ,\ 10^{-1}]$ \cite{bertsekas_book}.  With scaling added to Armijo step-size search, we prove a convergence rate of $O(\frac{1}{T})$ for convex objectives for all values of $\sigma \in (0,1)$. To the best of our knowledge, this is the first such proof, and is presented in Section~\ref{sec:det_gd} of the Appendix. 
\subsection{Compressed SGD with Armijo Step-Size Search and Scaling (CSGD-ASSS)}
Following \cite{aji_hea}, the method that we used for gradient compression is one that utilizes memory feedback to compensate for the error due to compression. The corresponding update rule for the iterate at time $t$ is given by:
\begin{equation}
\begin{aligned}
x_{t+1} = x_t - top_k(m_t + \eta_t \nabla f_{i_t}(x_t))
\end{aligned}    
\end{equation}
where $\eta_t$ is the step-size, $i_t$ is the index of the function sampled at time $t$, and $m_t$ is the error value due to compression that is updated as in step~\ref{step:error_update} of Algorithm~\ref{alg:biased_compressed_sgd}.

In the compressed SGD setting, even if the negative gradient $- \nabla f_{i_t} (x_t)$ points towards a minimizer of $f$,
%~(for example the symmetric curve $\frac{x^2}{4} + \frac{y^2}{4}$ as shown in Figure~\ref{fig:symmetric_compressed_gradient} in the Appendix), 
the descent direction $-top_k(m_t + \eta_t \nabla f_{i_t}(x_t))$ may be skewed away from the minimizer. Hence step-size search based on Armijo's rule (without scaling) used in~\cite{stoc_painless} for uncompressed SGD may diverge (exponentially) for compressed SGD,  as we show in Section~\ref{subsec:scaling_fundamental_plots}. 
%Hence, in this paper, we pose and answer the following question; can scaling be proposed as a solution to achieve convergence in Armijo step search for Compressed Stochastic Gradient Descent~?
Motivated by our results on scaling for GD with Armijo step-size search, we propose scaling as a fix for compressed SGD with Armijo step-size search as discussed below.

%To this end, we propose Algorithm~\ref{alg:biased_compressed_sgd}; Compressed SGD with Biased Compression and Memory Feedback~(CSGD-ASSS).

At iteration $t$, the CSGD-ASSS algorithm implements Armijo step-size search~(Algorithm \ref{alg:armijo_ls}) with the  function $f_{i_t}$ and returns a value $\alpha_t$ that satisfies 
\begin{equation}
f_{i_t}(x_t - \alpha_t \nabla f_{i_t}(x_t)) \leq f_{i_t}(x_t) - \sigma \alpha_t ||f_{i_t}(x_t)||^2.
\end{equation}

The step-size at iteration $t$ is subsequently chosen as $\eta_t = a\alpha_t$, where $a$ is a scaling factor. The algorithm computes the compressed gradient with feedback error $m_t$ and scaled step-size $\eta_t$ as
\begin{equation}
    g_t = top_k(\eta_t \nabla f_{i_t}(x_t) + m_t).
\end{equation}

The iterate and error for iteration $t+1$ are updated, respectively, as 
\[
x_{t+1} = x_t - g_t
\]
and 
\[
m_{t+1} =\eta_t \nabla f_{i_t}(x_t) + m_t - top_k(\eta_t \nabla f_{i_t}(x_t) + m_t).
\]

\begin{algorithm}
\caption{Compressed SGD + Armijo Step-Size Search and Scaling~(CSGD-ASSS)}
\label{alg:biased_compressed_sgd}
\begin{algorithmic}[1]
\For{$t = 1,\cdots, T$}
    \State Sample batch $i_t$ of data
    \State $\alpha_{\mathrm{max}} = \omega\alpha_{t-1}$
    \State $\alpha_{t} \leftarrow \text{Armijo Step-Size Search}(f_{i_t},\alpha_{\mathrm{max}},x_t,t)$\label{step:ls_csgd}
    \State $\eta_t = a \alpha_t$
    \State $g_{t} = top_k(m_t + \eta_t \nabla f_{i_t}(x_t))$
    \State $x_{t+1} \leftarrow x_{t} -  g_{t} $ 
    \State $m_{t+1} = m_t + \eta_t \nabla f_{i_t}(x_t) - top_k(m_t + \eta_t \nabla f_{i_t}(x_t)) $\label{step:error_update}
\EndFor
\end{algorithmic}
\end{algorithm}
\begin{remark}
 The CSGD-ASSS algorithm in the distributed setting~(presented in the Appendix) adaptively tunes the step-size $\eta_t$ specifically to the data at each node.
\end{remark}

\subsection{Convergence Analysis}
We now study the convergence of the CSGD-ASSS algorithm in the convex and strong convex settings under the interpolation condition \eqref{eq:interpol}, and in the non-convex setting under the strong growth condition \eqref{def:sgc}. These rates achieved under compression are the same as those in the uncompressed setting studied in \cite{stoc_painless}.

\subsubsection{Main Results}

Consider the minimization of the objective $f(x) = \frac{1}{n} \sum_{i=1}^{n} f_i(x)$ with CSGD-ASSS algorithm. For the convex setting we prove the following convergence result under the interpolation condition.

\begin{theorem}[CSGD-ASSS convex] \label{thm:bcmf_convex}
Let $f_i(x)$ be convex and $L_i$ smooth for $i\in[n]$, and assume that the interpolation condition \eqref{eq:interpol} is satisfied. Then there exists $\hat{a} >0$ such that for  $0< a \leq \hat{a}$, the CSGD-ASSS algorithm with scaling $a$, parameter $\sigma \in (0,1)$, and compression factor $\gamma = \frac{k}{d}$,  satisfies
\begin{equation}
 E\left[f\left(\frac{1}{T} \sum_{t=0}^{T-1}x_t \right)\right] -E[f(x^*)] \leq  \frac{1}{\delta_1 T}  ( E[||x_0 - x^*||^2] )
\end{equation}
where for any $0< \epsilon < \zeta$, $\zeta \triangleq \frac{\sigma \gamma}{(2-\gamma)}$ and $L_{\mathrm{max}} \triangleq \max_{i}L_i$,

\begin{equation}
\begin{aligned}
&\delta_1 \triangleq \rho \frac{2(1-\sigma)}{L_{\mathrm{max}}} \Big(2 a  - \frac{a^2 }{\sigma} - \frac{a^2 }{\sigma p(\epsilon)}   - (1-\gamma)\left(1+ \frac{1}{r(\epsilon)}\right) \frac{a^2 }{\sigma}  \Big),\\
& \hat{a} = (\zeta-\epsilon). \\
\end{aligned}    
\end{equation}

\end{theorem}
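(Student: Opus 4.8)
The plan is to reduce the compressed, error-feedback iteration to an ordinary (uncompressed) SGD iteration by passing to the virtual sequence $\hat{x}_t \triangleq x_t - m_t$. Substituting the iterate update $x_{t+1} = x_t - top_k(m_t + \eta_t\nabla f_{i_t}(x_t))$ together with the error update $m_{t+1} = m_t + \eta_t\nabla f_{i_t}(x_t) - top_k(m_t + \eta_t\nabla f_{i_t}(x_t))$, the two $top_k$ terms cancel and one obtains the clean recursion $\hat{x}_{t+1} = \hat{x}_t - \eta_t\nabla f_{i_t}(x_t)$, with the gradient still evaluated at $x_t$ rather than $\hat{x}_t$. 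Since $m_0 = 0$ we have $\hat{x}_0 = x_0$, so it suffices to track $\|\hat{x}_t - x^*\|^2$ while separately controlling the discrepancy $m_t = x_t - \hat{x}_t$.

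First I would expand $\|\hat{x}_{t+1} - x^*\|^2 = \|\hat{x}_t - x^*\|^2 - 2\eta_t\langle\nabla f_{i_t}(x_t),\, \hat{x}_t - x^*\rangle + \eta_t^2\|\nabla f_{i_t}(x_t)\|^2$ and write $\hat{x}_t - x^* = (x_t - x^*) - m_t$. Convexity of $f_{i_t}$ gives $\langle\nabla f_{i_t}(x_t),\, x_t - x^*\rangle \geq f_{i_t}(x_t) - f_{i_t}(x^*)$, producing a progress term $-2\eta_t(f_{i_t}(x_t) - f_{i_t}(x^*))$, a cross term $+2\eta_t\langle\nabla f_{i_t}(x_t),\, m_t\rangle$, and the gradient-norm term $\eta_t^2\|\nabla f_{i_t}(x_t)\|^2$. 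The key device for the last term is the Armijo stopping rule combined with interpolation: since $x^*$ minimizes $f_{i_t}$, we have $f_{i_t}(x_t - \alpha_t\nabla f_{i_t}(x_t)) \geq f_{i_t}(x^*)$, so the Armijo condition \eqref{eq:armijo_ls} yields $\alpha_t\|\nabla f_{i_t}(x_t)\|^2 \leq \tfrac{1}{\sigma}(f_{i_t}(x_t) - f_{i_t}(x^*))$. Writing $\eta_t = a\alpha_t$, this converts $\eta_t^2\|\nabla f_{i_t}(x_t)\|^2$ into $\tfrac{a^2\alpha_t}{\sigma}(f_{i_t}(x_t) - f_{i_t}(x^*))$, which is the source of the $a^2/\sigma$ contributions to $\delta_1$.

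The crux, and the step I expect to be hardest, is controlling the cross term $2\eta_t\langle\nabla f_{i_t}(x_t),\, m_t\rangle$ together with the accumulated compression error. I would split the cross term by Young's inequality (its free parameter becoming $p(\epsilon)$) and bound the error sequence via the contraction property of the compression operator, $\|y - top_k(y)\|^2 \leq (1-\gamma)\|y\|^2$. Applied to $y = m_t + \eta_t\nabla f_{i_t}(x_t)$ and combined with a second Young split (parameter $r(\epsilon)$), this produces a recursion of the form $\|m_{t+1}\|^2 \leq (1-\gamma)(1+\tfrac{1}{r(\epsilon)})\,\eta_t^2\|\nabla f_{i_t}(x_t)\|^2 + (\text{contraction})\,\|m_t\|^2$, which unrolls into a geometric bound on $\sum_t\|m_t\|^2$ in terms of $\sum_t\eta_t^2\|\nabla f_{i_t}(x_t)\|^2$. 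Reusing the Armijo bound on $\alpha_t\|\nabla f_{i_t}(x_t)\|^2$ then contributes exactly the $\tfrac{a^2}{\sigma p(\epsilon)}$ and $(1-\gamma)(1+\tfrac{1}{r(\epsilon)})\tfrac{a^2}{\sigma}$ terms. Collecting the coefficient of $\alpha_t(f_{i_t}(x_t) - f_{i_t}(x^*))$ leaves precisely $2a - \tfrac{a^2}{\sigma} - \tfrac{a^2}{\sigma p(\epsilon)} - (1-\gamma)(1+\tfrac{1}{r(\epsilon)})\tfrac{a^2}{\sigma}$; the entire purpose of restricting $a \leq \hat{a} = \zeta - \epsilon$ with $\zeta = \tfrac{\sigma\gamma}{2-\gamma}$ is to keep this quantity strictly positive, and $p(\epsilon), r(\epsilon)$ must be tuned so that the error recursion stays contractive while the positivity threshold lands exactly at $\zeta$.

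Finally I would lower-bound the Armijo step-size using $L_{i_t}$-smoothness: because the Armijo condition holds for every $\alpha \leq \tfrac{2(1-\sigma)}{L_{i_t}}$, backtracking by $\rho$ guarantees $\alpha_t \geq \tfrac{2(1-\sigma)\rho}{L_{\max}}$. Since the collected coefficient is positive and $f_{i_t}(x_t) - f_{i_t}(x^*) \geq 0$ under interpolation, replacing each $\alpha_t$ by this uniform lower bound and summing the descent relation over $t = 0,\dots,T-1$ (folding in the geometric bound on $\sum_t\|m_t\|^2$) leaves $\delta_1\sum_{t=0}^{T-1}(f_{i_t}(x_t) - f_{i_t}(x^*)) \leq \|\hat{x}_0 - x^*\|^2 - \|\hat{x}_T - x^*\|^2 \leq \|x_0 - x^*\|^2$, with $\delta_1$ exactly as stated. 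Taking expectations, so that $E[f_{i_t}(x_t) - f_{i_t}(x^*)] = E[f(x_t) - f(x^*)]$ by the uniform random sampling of $i_t$, gives $\delta_1\sum_{t=0}^{T-1}E[f(x_t) - f(x^*)] \leq E[\|x_0 - x^*\|^2]$. Dividing by $\delta_1 T$ and applying Jensen's inequality to the convex $f$ delivers the claimed $O(1/T)$ bound on $E[f(\tfrac{1}{T}\sum_{t=0}^{T-1} x_t)] - E[f(x^*)]$.
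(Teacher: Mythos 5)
Your proposal is correct and follows essentially the same route as the paper: the perturbed iterate $\hat{x}_t = x_t - m_t$, convexity for the progress term, the Armijo-plus-interpolation device $\alpha_t\|\nabla f_{i_t}(x_t)\|^2 \leq \tfrac{1}{\sigma}(f_{i_t}(x_t)-f_{i_t}(x^*))$, Young splits with parameters $p(\epsilon), r(\epsilon)$ for the cross term and the compression contraction, the lower bound $\alpha_t \geq \tfrac{2(1-\sigma)\rho}{L_{\max}}$, telescoping, and Jensen. The only cosmetic difference is that the paper aggregates the memory terms by adding the $\|m_{t+1}\|^2$ recursion to form the joint potential $\|\hat{x}_t-x^*\|^2+\|m_t\|^2$ (telescoping under the constraint $p+(1-\gamma)(1+r)\le 1$), whereas you unroll the $m_t$ recursion geometrically; at the boundary choice of $p(\epsilon),r(\epsilon)$ the two give the same $\delta_1$, and otherwise yours is no worse.
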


The values of $p(\epsilon),r(\epsilon)$ are obtained from the following convex problem.

\begin{equation}\label{eq:p_r_def_problem}
\begin{aligned}
    \max_{p,r} \tilde{a}_1(p,r)  =  2 \sigma/ \Big(1+ \frac{1}{p} + (1-\gamma)(1+ \frac{1}{r}) \Big)\\
    \mathrm{s.t.} (p + (1-\gamma)(1+ r)) \leq 1\\
    p \geq 0, r \geq 0
\end{aligned}    
\end{equation}

Let the solution to the problem be $p_{\mathrm{max}},r_{\mathrm{max}}$. It is shown in the Appendix that the maximum value $\tilde{a}_1(p_{\mathrm{max}},r_{\mathrm{max}})$ is $\zeta $. Due to the continuity of $\tilde{a}_1(p,r)$, the parameters $p(\epsilon),r(\epsilon)$ can be chosen to be any values that satisfy:
\begin{equation}\label{eq:epsilon_ball}
\begin{aligned}
  &(p(\epsilon) + (1-\gamma)(1+ r(\epsilon))) < 1 \\
  & \tilde{a}_1(p(\epsilon),r(\epsilon)) > \zeta - \epsilon.
\end{aligned}
\end{equation}

%\textit{Remark: Since $\gamma = \frac{k}{d} \geq \frac{1}{d}$, given the compression ratio $\frac{k}{d}$, there certainly exists $r,d$ such that $(d+(1-\gamma)(1+r)) < 1$. The $(1 -\gamma)$ term in $(d+(1-\gamma)(1+r))$ essentially comes from property(\ref{prop:compression_bound}) of the compression operator $top$-$k$. Hence if the gradient is sparse, it is possible that $||x - top_k(x)|| \approx 0$. $r$ can be a very large number and $d\approx 1$ implying that $\hat{a} \approx \sigma$. Also, since $a < 2\sigma$ in the CSGD-UC, in simulations for CSGD-ASSS, we use multiples of $\sigma$ as scaling factor. }

\vspace{0.5em}

Consider the minimization of the function $f(x)$ in the convex setting, with some $f_i(x)$ satisfying strong convexity with parameter $\mu_i >0$. Then we can establish the following  `linear' convergence result.
\vspace{0.5em}
\begin{theorem}[CSGD-ASSS strong convex]\label{thm:strong_convex}
Let $f_i$ be convex and $L_i$ smooth for all $i\in[n]$, and $\mu_i$ strongly convex with $\mu_i >0$ for some $i\in[n]$. Assume that the interpolation condition \eqref{eq:interpol} is satisfied. Then there exists $\hat{a} >0$ such that for  $0< a \leq \hat{a}$, the CSGD-ASSS algorithm with scaling $a$ and $\sigma \in (0,1)$  satisfies
\begin{equation}
\begin{aligned}
E[||x_t - x^*||^2] \leq  2 (\hat{\beta})^t E[(||{x}_0 - x^*||^2  )],
\end{aligned}    
\end{equation}
for some $\hat{\beta}<1$, where for all $0<\epsilon< \zeta$
\begin{equation}
\begin{aligned}
&\hat{\beta} = \max{\{\beta_1(p{(\epsilon)}, r{(\epsilon)}),(1- \frac{E_{i_t}[\mu_{i_t}]a (1-\sigma)}{ L_{\mathrm{max}}})\}},\\
&\beta_1(p(\epsilon),r(\epsilon)) = \Big(\mu_{\mathrm{max}} a \alpha_{\mathrm{max}}  + p(\epsilon)  +(1-\gamma)(1+r(\epsilon)) \Big)\\
&\hat{a}=\zeta - \epsilon
\end{aligned}    
\end{equation}
\end{theorem}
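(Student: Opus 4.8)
The plan is to reduce the compressed, error-fed recursion to a clean stochastic-gradient recursion on a virtual iterate, and then run a Lyapunov argument that couples the distance-to-optimum with the accumulated compression error. First I would introduce the \emph{virtual sequence} $\hxt \triangleq \xt - m_t$ (with $m_0 = 0$, so $\hat{x}_0 = x_0$) and verify, by direct substitution of the iterate and error updates in Algorithm~\ref{alg:biased_compressed_sgd}, that $\hxtt = \hxt - \eta_t \gft$; that is, $\hxt$ obeys ordinary SGD with the \emph{scaled} step-size $\eta_t = a\stt$, while all the compression is pushed into $m_t = \xt - \hxt$. This standard error-feedback device is what lets the Armijo guarantee, stated for $\ft{\cdot}$ at $\xt$, interact with a distance recursion written at $\hxt$.

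Next I would expand $\norm{\hxtt - x^*}^2 = \norm{\hxt - x^*}^2 - 2\eta_t \ip{\gft, \hxt - x^*} + \eta_t^2 \norm{\gft}^2$ and split $\hxt - x^* = (\xt - x^*) - m_t$ to isolate the genuine descent from the error contamination $2\eta_t\ip{\gft, m_t}$. For the true inner product I would use strong convexity of $\ft{\cdot}$ (plain convexity for indices with $\mu_i = 0$) to get $\ip{\gft, \xt - x^*} \geq \ft{\xt} - \ft{x^*} + \tfrac{\mu_{i_t}}{2}\norm{\xt - x^*}^2$; under the interpolation condition \eqref{eq:interpol}, $x^*$ minimizes every $\ft{\cdot}$, so $\ft{\xt} - \ft{x^*} \geq 0$ and $\nabla \ft{x^*} = 0$. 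The strong-convexity term, once multiplied by $2\eta_t = 2a\stt$ and combined with the smoothness lower bound on the Armijo output below, is precisely what produces the distance contraction factor $1 - \frac{E_{i_t}[\mu_{i_t}]\,a(1-\sigma)}{L_{\max}}$ after taking the conditional expectation over $i_t$.

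I would then control the step-size and the gradient-squared terms using the Armijo rule together with smoothness, exactly as in the convex Theorem~\ref{thm:bcmf_convex}: $L_{i_t}$-smoothness forces test \eqref{eq:armijo_ls} to pass once $\stt \leq 2(1-\sigma)/L_{i_t}$, so the backtracked value obeys $\tfrac{\rho\,2(1-\sigma)}{L_{\max}} \leq \stt \leq \alpha_{\max}$, converting $\eta_t^2\norm{\gft}^2$ into a controllable multiple of $\stt\norm{\gft}^2$ and hence of the Armijo function decrease. The contamination term $2\eta_t\ip{\gft, m_t}$ I would split by Young's inequality with a free parameter $p$, and I would carry $m_t$ forward with the error recursion $\norm{m_{t+1}}^2 \leq (1-\gamma)(1+r)\norm{m_t}^2 + (1-\gamma)(1+\tfrac{1}{r})\eta_t^2\norm{\gft}^2$, obtained from the contraction $\norm{z - top_k(z)}^2 \leq (1-\gamma)\norm{z}^2$ and a second Young split with parameter $r$. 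These are exactly the $p,r$ of program \eqref{eq:p_r_def_problem}, and the gradient-squared coefficients $\tfrac{a^2}{\sigma}(1 + \tfrac{1}{p} + (1-\gamma)(1+\tfrac{1}{r}))$ reproduce the $\delta_1$ structure; the residual gradient mass on the error side is re-expressed through smoothness, strong convexity and interpolation, which is where the $\mu_{\max}a\alpha_{\max}$ contribution to $\beta_1$ surfaces. Forming the Lyapunov quantity $V_t = E\norm{\hxt - x^*}^2 + E\norm{m_t}^2$ and reading off coefficients, the $\norm{m_t}^2$-coefficient becomes $\beta_1(p(\epsilon),r(\epsilon)) = \mu_{\max}a\alpha_{\max} + p(\epsilon) + (1-\gamma)(1+r(\epsilon))$ and the $\norm{\hxt-x^*}^2$-coefficient becomes $1 - \frac{E_{i_t}[\mu_{i_t}]a(1-\sigma)}{L_{\max}}$, so $V_{t+1} \leq \hat\beta V_t$ with $\hat\beta$ their maximum. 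Iterating from $m_0 = 0$, $\hat{x}_0 = x_0$ and using $\norm{\xt - x^*}^2 \leq 2\norm{\hxt - x^*}^2 + 2\norm{m_t}^2 \leq 2V_t$ delivers the stated factor-of-$2$ bound.

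The hard part is choosing the scaling $a$ and the Young parameters $p(\epsilon), r(\epsilon)$ simultaneously so that a \emph{single} $\hat\beta < 1$ governs both recursions. The error rate $\beta_1$ must stay strictly below $1$, which forces $a$ small and leans entirely on the feasibility margin $p(\epsilon) + (1-\gamma)(1+r(\epsilon)) < 1$ from \eqref{eq:epsilon_ball}, while at the same time the fed-back error must not swamp the strong-convexity contraction of the distance term. Establishing that the admissible window $0 < a \leq \hat{a} = \zeta - \epsilon$ is nonempty is exactly where the scaling threshold (equivalently $\zeta = \tfrac{\sigma\gamma}{2-\gamma}$) earns its keep: without scaling the coefficient on $\norm{m_t}^2$ exceeds the $top_k$ contraction budget, $\hat\beta \geq 1$, and the coupled recursion fails to contract, matching the divergence the paper reports for unscaled Armijo search under compression.
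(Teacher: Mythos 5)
Your proposal follows essentially the same route as the paper's proof: the perturbed iterate $\hat{x}_t = x_t - m_t$ evolving by plain scaled SGD, strong convexity plus interpolation for the inner product, Young splits with parameters $p$ and $r$, the $top_k$ contraction for the memory recursion, the Armijo bounds $\rho\,2(1-\sigma)/L_{\mathrm{max}} \le \alpha_t \le \alpha_{\mathrm{max}}$, the coupled Lyapunov recursion on $\|\hat{x}_t - x^*\|^2 + \|m_t\|^2$, and the feasibility program yielding $\zeta = \sigma\gamma/(2-\gamma)$. The one step you leave implicit is the relocation of the strong-convexity quadratic from $x_t$ to $\hat{x}_t$ via $-\|x_t - x^*\|^2 \le \|m_t\|^2 - \tfrac{1}{2}\|\hat{x}_t - x^*\|^2$, which is precisely where the $\mu_{\mathrm{max}} a \alpha_{\mathrm{max}}$ term in $\beta_1$ and the factor $\tfrac{1}{2}$ in the distance contraction originate, but this is consistent with the mechanism you describe.
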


The values $p(\epsilon),r(\epsilon)$ are as defined in \eqref{eq:p_r_def_problem} and \eqref{eq:epsilon_ball}, and $\mu_{\mathrm{max}}$ is defined in the Appendix.

%\textit{Remark: Even though at the outset it might seem that the scaling constant} \textit{might depend on the problem parameter $\mu_{max}$ due to the constant $a_1$, it is not the case since if $f_i$ is $\mu_i$ strongly convex and $\hat{\mu}_i < \mu_i$, $\nabla^2 f_i(x) \succcurlyeq \mu_i I$ $\implies$ $ \nabla^2 f_i(x) \succcurlyeq \hat{\mu}_i I$. We only require $\mu_i>0$ for some $i\in[n]$ and hence $\mu_{max}$ can be made arbitrarily small. The choice of scaling parameter can be made as in the convex case.}

To show convergence guarantees in the non-convex case, we use the strong growth condition \eqref{eq:sgc}.
\begin{theorem}[CSGD-ASSS non-convex]\label{thm:non_conv_asss} Let $f_i$ be convex, $L_i$ smooth and let $f_i$ $i\in[n]$ satisfy the strong growth condition \eqref{eq:sgc}. Then, there exists $\hat{a},\hat{\alpha}$ such that for $0< a\leq \hat{a}$ and $\alpha_{\mathrm{max}} \leq \hat{\alpha}$ , 

\begin{equation}
\begin{aligned}
\frac{1}{T} \sum_{t=0}^{T-1} E[\norm{\nabla f(x_t)}^2] \leq \frac{(E[f({x}_{0})] - E[f(\hat{x}_{T})])}{\delta T}  
\end{aligned}    
\end{equation}
\end{theorem}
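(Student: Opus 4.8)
The plan is to analyze a \emph{corrected} (virtual) iterate that absorbs the accumulated compression error, thereby reducing the compressed update to an ordinary uncompressed SGD update, and then to run a smoothness-based descent argument on that virtual sequence. Concretely, I would define $\hat{x}_t \triangleq x_t - m_t$. Substituting the iterate update $x_{t+1} = x_t - g_t$ and the memory update $m_{t+1} = m_t + \eta_t \nabla f_{i_t}(x_t) - g_t$, with $g_t = top_k(m_t + \eta_t \nabla f_{i_t}(x_t))$, the compressed term $g_t$ cancels and yields the clean recursion $\hat{x}_{t+1} = \hat{x}_t - \eta_t \nabla f_{i_t}(x_t)$. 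This is precisely why the statement is phrased in terms of $f(\hat{x}_T)$: it is the virtual iterate, and not $x_t$, that descends in expectation, with $\hat{x}_0 = x_0$ under the initialization $m_0 = 0$.

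Next I would invoke the $L_{\mathrm{max}}$-smoothness of $f = \frac{1}{n}\sum_i f_i$ and apply the descent inequality to $\hat{x}_{t+1} = \hat{x}_t - \eta_t \nabla f_{i_t}(x_t)$:
\[
f(\hat{x}_{t+1}) \leq f(\hat{x}_t) - \eta_t \langle \nabla f(\hat{x}_t), \nabla f_{i_t}(x_t) \rangle + \frac{L_{\mathrm{max}} \eta_t^2}{2} \|\nabla f_{i_t}(x_t)\|^2.
\]
Because the gradient driving the update is evaluated at $x_t$ rather than at the virtual point $\hat{x}_t$, I would split $\nabla f(\hat{x}_t) = \nabla f(x_t) + (\nabla f(\hat{x}_t) - \nabla f(x_t))$ and control the mismatch through smoothness, $\|\nabla f(\hat{x}_t) - \nabla f(x_t)\| \leq L_{\mathrm{max}}\|m_t\|$, together with Young's inequality. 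Taking the conditional expectation $E_{i_t}[\cdot]$ and using the deterministic bounds on $\eta_t = a\alpha_t$ furnished by the Armijo rule (for $L_i$-smooth $f_i$, backtracking terminates with $\alpha_t \in [\rho\,\tfrac{2(1-\sigma)}{L_{\mathrm{max}}},\, \alpha_{\mathrm{max}}]$) turns the leading cross term into $-\eta_t\|\nabla f(x_t)\|^2$, while the strong growth condition \eqref{eq:sgc} converts $E_{i_t}\|\nabla f_{i_t}(x_t)\|^2$ into $\nu\|\nabla f(x_t)\|^2$.

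The remaining ingredient is a bound on the accumulated memory. Since $top_k$ is a $(1-\gamma)$-contraction, the memory update obeys $\|m_{t+1}\|^2 \leq (1-\gamma)\|m_t + \eta_t \nabla f_{i_t}(x_t)\|^2$; unrolling this recursion and summing (the standard error-feedback estimate) bounds $\sum_t \|m_t\|^2$ by a $\gamma$-dependent constant times $\sum_t \eta_t^2\|\nabla f_{i_t}(x_t)\|^2$, which the strong growth condition again reduces to a multiple of $\sum_t\|\nabla f(x_t)\|^2$. Substituting this estimate into the telescoped descent inequality collects every contribution onto $\sum_{t=0}^{T-1}\|\nabla f(x_t)\|^2$, leaving a net coefficient $\delta$ assembled from $a$, $\alpha_{\mathrm{max}}$, $\sigma$, $\gamma$, $\nu$, and $L_{\mathrm{max}}$. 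Dividing by $T$ and using the telescoping of $f(\hat{x}_0) - f(\hat{x}_T)$ then delivers the claimed bound.

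The main obstacle is establishing that the net coefficient $\delta$ is strictly positive, i.e., that the negative $\|\nabla f(x_t)\|^2$ term produced by the descent step dominates all the positive error terms: the strong-growth-inflated second-order term $\tfrac{L_{\mathrm{max}}\eta_t^2\nu}{2}$, the smoothness mismatch between $x_t$ and $\hat{x}_t$, and the accumulated compression memory. Each of these scales with the step-size and hence with both the scaling factor $a$ and with $\alpha_{\mathrm{max}}$, which is exactly what forces the two thresholds $\hat{a}$ and $\hat{\alpha}$ in the statement. Choosing explicit constants so that $\delta$ remains positive while keeping the rate order-optimal is the delicate part, and the scaling factor $a$ is the decisive degree of freedom that makes it possible, mirroring its role in the convex Theorems~\ref{thm:bcmf_convex} and~\ref{thm:strong_convex}.
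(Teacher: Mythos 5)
Your proposal follows essentially the same route as the paper's proof: the virtual iterate $\hat{x}_t = x_t - m_t$ satisfying $\hat{x}_{t+1} = \hat{x}_t - \eta_t \nabla f_{i_t}(x_t)$, the smoothness descent lemma on $\hat{x}_t$, control of the $\nabla f(\hat{x}_t)$-versus-$\nabla f(x_t)$ mismatch through Lipschitzness of the gradient and $\|m_t\|$, the strong growth condition to absorb $E_{i_t}\|\nabla f_{i_t}(x_t)\|^2$, the $(1-\gamma)$-contraction of $top_k$ for the memory, and small $a$ and $\alpha_{\mathrm{max}}$ to keep the net coefficient $\delta$ positive. The only cosmetic differences are that you use a Young-type splitting where the paper uses the polarization identity $-2\langle u,v\rangle = \|u-v\|^2 - \|u\|^2 - \|v\|^2$, and you unroll the memory recursion where the paper adds $\theta\|m_t\|^2$ as a Lyapunov term and arranges its coefficient to telescope; both pairs are equivalent.

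One point you should make explicit when writing this out: $\eta_t = a\alpha_t$ is itself a function of the sampled index $i_t$ (the Armijo search is run on $f_{i_t}$), so after the descent lemma you cannot simply pass $E_{i_t}$ through the cross term to get $-\eta_t\|\nabla f(x_t)\|^2$. You must first decompose the inner product into sign-definite pieces and replace $\eta_t$ by $\eta_{\mathrm{min}}$ on the negative pieces and $\eta_{\mathrm{max}}$ on the positive ones \emph{before} taking the expectation; this is exactly why the paper's $\delta$ carries the loss term $\nu(\eta_{\mathrm{max}} - \eta_{\mathrm{min}})$. Your sketch gestures at this by invoking the deterministic bounds $\alpha_t \in [\rho\,\tilde{\alpha}_{\mathrm{min}}, \alpha_{\mathrm{max}}]$, so the argument goes through, but as stated the phrase ``turns the leading cross term into $-\eta_t\|\nabla f(x_t)\|^2$'' elides the step where the randomness of the step-size is actually handled.
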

where $\hat{x}_T$ is a perturbed iterate~\cite{Mania2017PerturbedIA} obtained from $\{x_i\}_{i=1}^{T}$. A discussion on the choice of $\hat{a}$, and $\hat{\alpha}$ is presented in the Appendix.

\begin{remark}
 Proofs for the uncompressed SGD setting~\cite{stoc_painless} constrain the values of both $\sigma$ and $ \alpha_{\mathrm{max}}$ in the non-convex setting, and the bounds are dependent on the function parameters $L_i$. Interestingly the scaling technique allows us to eliminate the bound on $\sigma$. 
\end{remark}

\section{Experimental Results}

%Done list
% R34 C100 1.5% 

%To do 
% R18 C100 1.8 - 1.5% - change -cr
%R34 C10 1.5 -cr
% R18 C10 1.5

\begin{figure*}%
\centering
\begin{subfigure}{.4\columnwidth}
\includegraphics[width=\columnwidth]{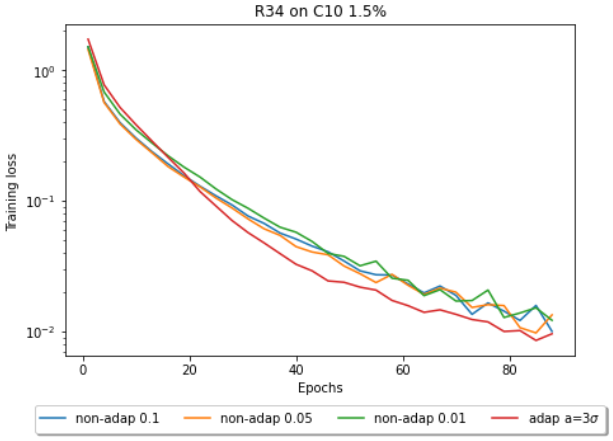}%
\caption{ResNet34,CIFAR10,1.5\% }%
\label{r34c10p1}%
\end{subfigure}\hfill%
\begin{subfigure}{.4\columnwidth}
\includegraphics[width=\columnwidth]{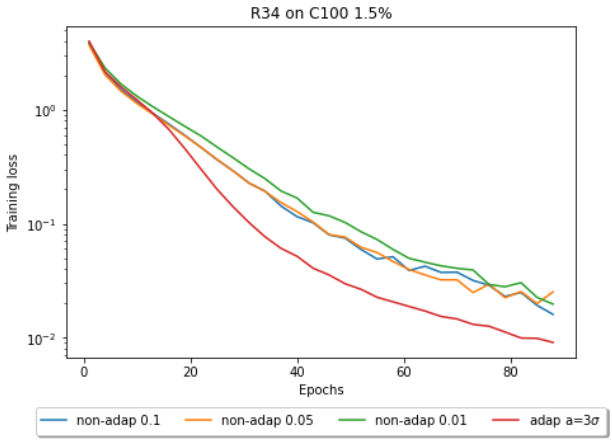}%
\caption{ResNet34,CIFAR100,1.5\%}%
\label{r34c100p1}%
\end{subfigure}\hfill%
\begin{subfigure}{.4\columnwidth}
\includegraphics[width=\columnwidth]{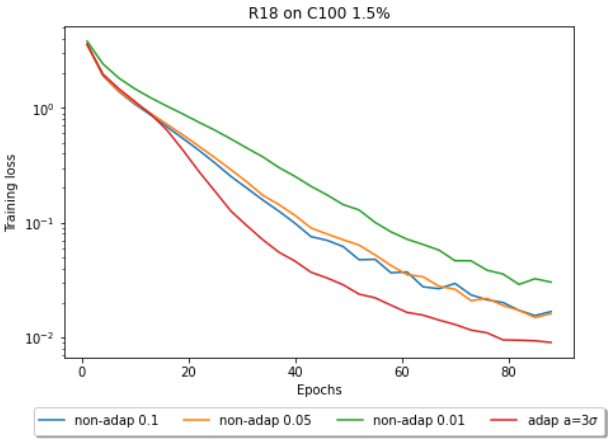}%
\caption{ResNet18,CIFAR100,1.5\%}%
\label{r10c10p1}%
\end{subfigure}\hfill%
\begin{subfigure}{.4\columnwidth}
\includegraphics[width=\columnwidth]{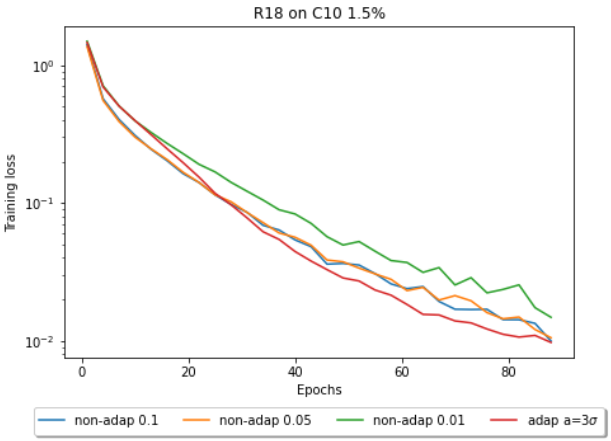}%
\caption{ResNet18,CIFAR10,1.5\%}%
\label{subfigc}%
\end{subfigure}%

\caption{Training loss at $\approx \ 1\%$ compression}
\label{fig:1}
\end{figure*}

\begin{figure*}%
\centering
\begin{subfigure}{.4\columnwidth}
\includegraphics[width=\columnwidth]{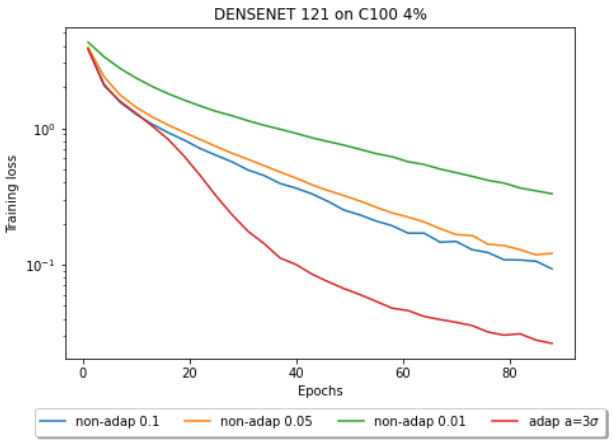}%
\caption{DenseNet121,CIFAR100, 4\% compression}%
\label{r34c10p10}%
\end{subfigure}\hfill%
\begin{subfigure}{.4\columnwidth}
\includegraphics[width=\columnwidth]{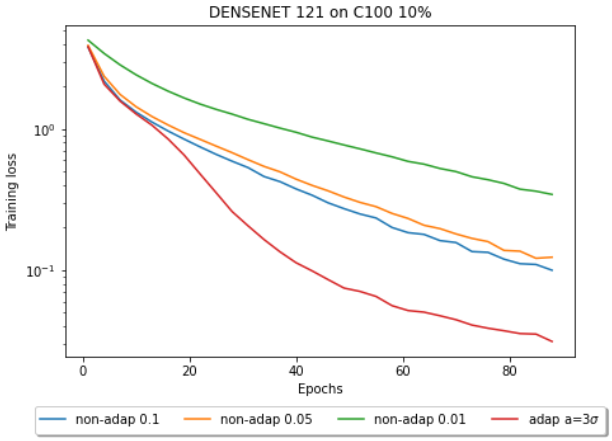}%
\caption{DenseNet121,CIFAR100, 10\% compression}%
\label{r34c100p10}%
\end{subfigure}\hfill%
\caption{Training loss of DenseNet 121 on CIFAR100,  at $4\%$, $10\%$ compression }
\label{fig:2}
\begin{subfigure}{.4\columnwidth}
\includegraphics[width=\columnwidth]{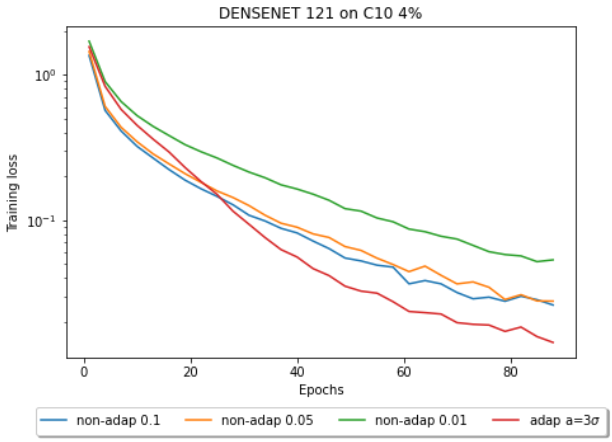}%
\caption{DenseNet121,CIFAR10, 4\% compression}%
\label{r18c100p10}%
\end{subfigure}\hfill%
\begin{subfigure}{.4\columnwidth}
\includegraphics[width=\columnwidth]{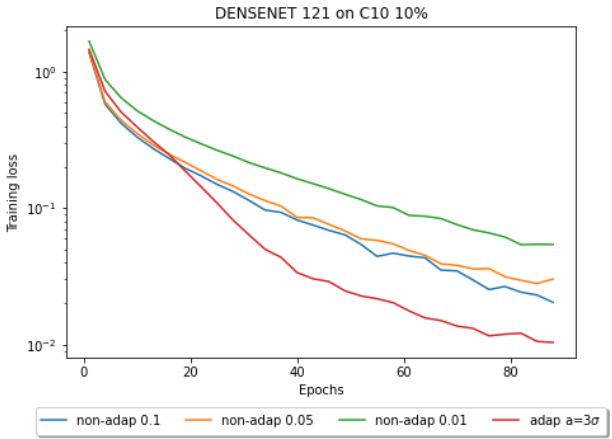}%
\caption{DenseNet121,CIFAR10, 10\% compression}%
\label{r18c10p10}%
\end{subfigure}%
\caption{Training loss of DenseNet 121 on CIFAR10,  at $4\%$, $10\%$ compression}
\label{fig:3}
\end{figure*}

In this section, we describe our experimental setup, discuss the deep neural network architectures we use and present simulation results to demonstrate the performance of our proposed method. Our simulations have been performed on NVIDIA GTX 1080 GPUs. 

\subsection{Experimental Setup}\label{subsec:exes}
We test our CSGD-ASSS algorithm on neural network architectures ResNet 18, ResNet 34 and DenseNet 121, on CIFAR 100 and CIFAR 10 datasets. Consistent with compression schemes in previous works \cite{basu_qsparselocalsgd,karimireddy2019error}, the compression is done at each layer. As in \cite{basu_qsparselocalsgd}, the layers with less than $1000$ parameters are not compressed.  We use a batch size of $64$ in our experiments and train for $90$ epochs. The training error is typically of the order of $10^{-2}$.

We set the initial value of $\alpha_{\mathrm{max}} =0.1$. In each subsequent iteration, the value of $\alpha_{\mathrm{max}}$ is updated as $ \alpha_{\mathrm{max}} = \omega \alpha_{t-1}$, where $\alpha_{t-1}$ is the step-size returned by the Armijo step-size search~(Step~\ref{step:ls_csgd}) in CSGD-ASSS in the previous iteration. In our simulations, $\omega = 1.2$ and the step-size decay factor used in the Armijo-step search, $\rho = 0.8$. 
%Essentially, we choose $\rho$ such that $\omega\rho \approx 1$. 
This renders the algorithm computationally efficient for neural network training tasks, as discussed in Section~\ref{subsec:sim_res}.

For the Armijo step-size search parameter $\sigma$, we use the value $0.1$ as in non-compressed SGD with Armijo step-size search\cite{stoc_painless}. Motivated by the analysis of scaled GD presented in  Section~\ref{sec:det_gd} in the Appendix, we set the scaling factor $a$ to be a multiple of $\sigma$. In our simulations, we set $a = 3\sigma$ for all neural network training tasks.

In our experiments we choose arbitrary compression rates for the neural networks. We train ResNet 34, ResNet 18 on CIFAR 100 and CIFAR 10 datasets at compression~($\gamma$ values) $ 1.5\%$ and $10\%$, and DenseNet 121 on CIFAR 100 and CIFAR 10 at $4\%$ and $10\%$ compression. This means that, for example, in ResNet 34, CSGD-ASSS uses only $1.5\%$ and $10\%$ of gradient components in each layer. We compare our CSGD-ASSS algorithm with the non-adaptive compressed SGD of \cite{aji_hea} on typically used step-sizes $0.1,0.05,0.01$. The reason for using the algorithm in \cite{aji_hea} as a baseline for comparison is that prevalent \emph{non-adaptive} compressed SGD algorithms build on the $top_k$ gradient compression operator with memory feedback introduced in \cite{aji_hea}. 

\begin{comment}
Even though methods like Adam, Adagrad, RMSprop, \cite{interTrainingNN,AdaptiveGM,StochasticPS} etc. are available for uncompressed SGD, to the best of our understanding, we are not aware of any \textit{theoretically rigorous adaptive step-size techniques for compressed SGD with memory feedback}.
\end{comment}
\subsection{Simulation Results}\label{subsec:sim_res}

From Figures \ref{fig:1},\ref{fig:2},\ref{fig:3} , we observe that our proposed CSGD-ASSS algorithm (denoted by $3\sigma$) outperforms fixed/non-adaptive step-size $top_k$ compressed SGD with step-sizes $0.1,0.05,0.01$ (denoted by non-adap $0.1$, non-adap $0.05$, non-adap $0.01$, respectively) on ResNet 34 and ResNet 18, on both the CIFAR 100 and CIFAR 10 datasets in training at $ 1\%,$ compression. The CSGD-ASSS algorithm also has the best performance on DenseNet 121 on the CIFAR datasets at $4\%$ and $10\%$ compression. In the Appendix, we show in Figures~\ref{fig:4},\ref{fig:5} and Table~\ref{tab:val} that the CSGD-ASSS algorithm outperforms non-adaptive compressed SGD at $10\%$ compression on ResNet 18, ResNet 34 on CIFAR 100 and CIFAR 10 datasets, and that the validation accuracy is also competitive.

% The simulation results demonstrate that CSGD-ASSS algorithm converges faster than non-adaptive step-size compressed SGD in training at step-sizes $0.1,0.05,0.01$ and hence we postulate that it would perform competitively with any tuned learning rate schedule in practice. The algorithm can possibly be extended to methods with momentum, quantization and local iterations~\cite{basu_qsparselocalsgd} and also to adaptive gradient methods. We postulate that CSGD-BCMF would outperform non-adaptive tuned step-sizes in these methods also. 

\noindent \textbf{\textit{Note on computational complexity:}} The Armijo step-size search of the CSGD-ASSS algorithm with $\omega=1.2$ and $\rho=0.8$, on ResNets and DenseNets, on average computes less than one additional forward pass compared to non-adaptive compressed SGD. This means that each iteration of Step~\ref{step:ls_csgd} of  CSGD-ASSS (Algorithm~\ref{alg:biased_compressed_sgd}) involves the computation of the gradient once, and subsequently multiplying the gradient with step-size $\alpha_t$ and evaluating the stopping condition~(Step \ref{step:armi_stop}) of the Armijo step-size search~(Algorithm~\ref{alg:armijo_ls}) twice. Evaluating the gradient is  computationally several orders more expensive compared to a forward pass in the SGD algorithm; in the federated learning~\cite{kairouz2021advances} setup, the forward pass steps can be parallelized in the worker nodes. With each gradient being several GigaBytes in size, and several hundred worker nodes communicating with the master node, the communication time would still remain the major bottleneck when implementing our proposed algorithm in federated/distributed learning scenarios.  

\subsection{The Necessity of Scaling for Convergence of CSGD-ASSS }\label{subsec:scaling_fundamental_plots}
 In this section, we illustrate through examples that scaling is necessary for convergence of compressed SGD with Armijo step-size search. To this end, we consider interpolated linear regression in the convex setting and neural-networks in the non-convex setting as examples. 
 
In interpolated linear regression, the loss function is 
\begin{equation}
\begin{aligned}
f(x) = \frac{1}{n} \sum_{i=1}^{n} (\langle a_i,x \rangle -b_i)^2.
\end{aligned}    
\end{equation}
where $\exists\ x^*$ such that $\langle a_i,x^* \rangle = b_i$, $\forall\ i\in[n]$. In our experiments, we choose $n=10000$, $\{a_i,x\} \in \mathbb{R}^{1024}$ and use $top_k$ compression with compression ratio~$\frac{k}{d} = 1\%$. The elements of $x^*$ are generated from $\mathcal{N}(0,1)$ and the elements of $a_i$ are distributed as $\mathcal{N}(0,1)$ and $\mathcal{N}(0,10)$ in Figures~\ref{fig:ilr_sig_1} and \ref{fig:ilr_sig_10}, respectively. The loss increases exponentially without scaling even in the convex setting. This clearly demonstrates that for compressed SGD with Armijo step-size search, scaling is a fundamental requirement and not just a proof technicality.
In Figure~\ref{fig:scale_nn}, the CSGD-ASSS algorithm is evaluated on ResNet-34 and ResNet-18 on CIFAR-100 dataset with scaling $a=3\sigma$ and $a=1$~(no scaling) at $\approx 1\%$ compression. The plot demonstrates that the conclusions regarding the convergence of the CSGD-ASSS algorithm also hold in (non-convex) neural network training tasks. 
\begin{figure*}%
\centering
\begin{subfigure}{.32\columnwidth}
\includegraphics[width=\columnwidth]{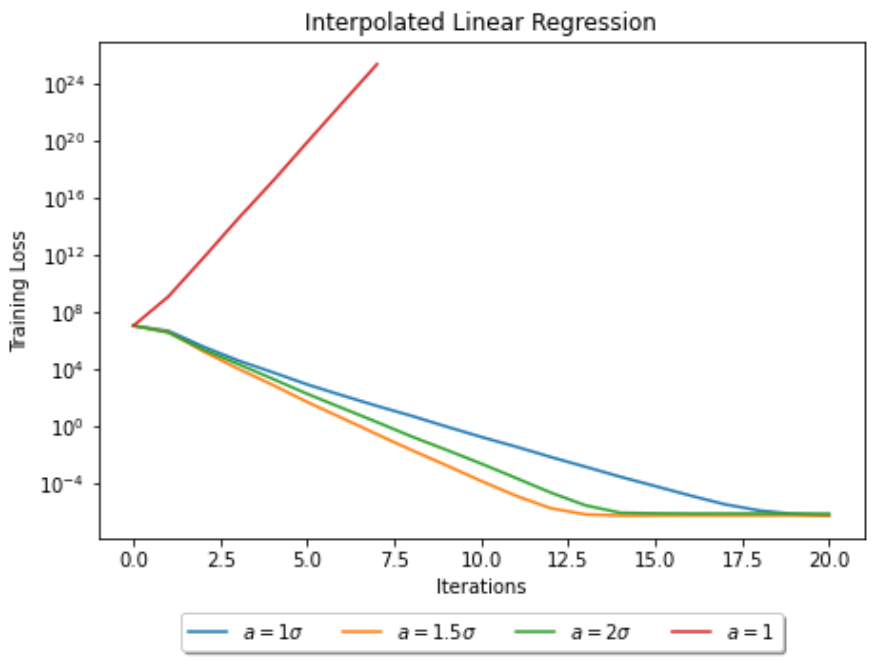}%
\caption{Interpolated linear regression \\* {\color{white}vvvvvvv} $a_i \in \mathcal{N}(0,1)$ }%
\label{fig:ilr_sig_1}%
\end{subfigure}\hfill%
\begin{subfigure}{.32\columnwidth}
\includegraphics[width=\columnwidth]{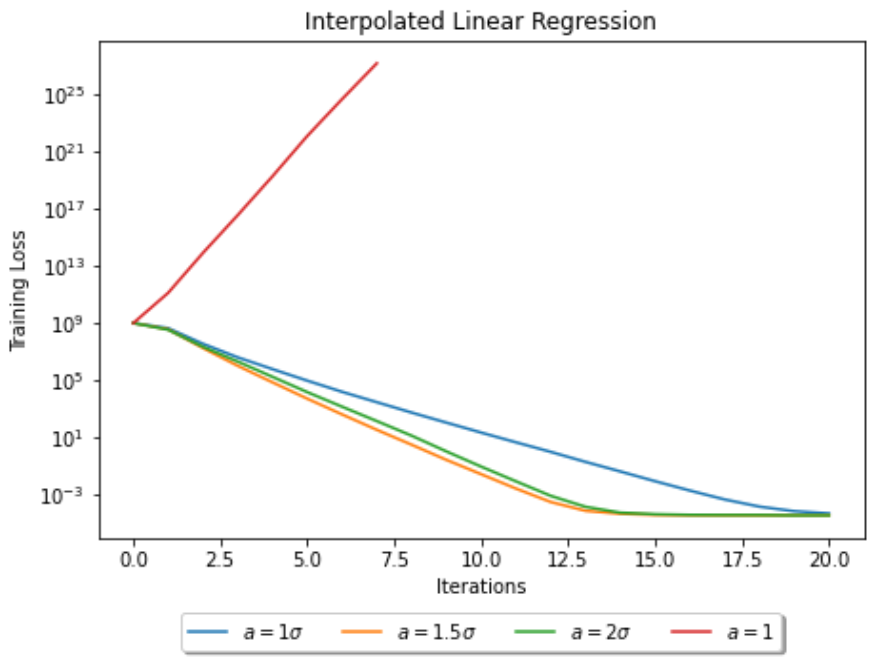}%
\caption{{Interpolated linear regression \\ {\color{white}vvvvvvv} $a_i \in \mathcal{N}(0,10)$}}%
\label{fig:ilr_sig_10}%
\end{subfigure}\hfill%
\begin{subfigure}{.32\columnwidth}
\includegraphics[width=\columnwidth]{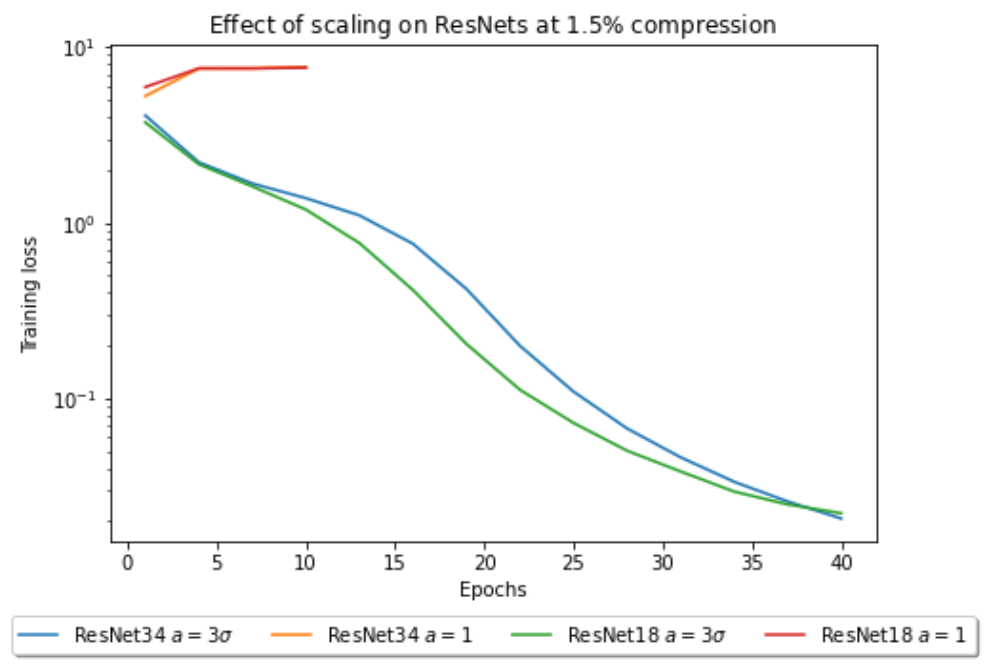}%
\caption{ResNets,CIFAR100,1.5\% comp- {\color{white}vvvvvvv}-ression}%
\label{fig:scale_nn}%
\end{subfigure}\hfill%

\caption{Scaled vs non-scaled CSGD-ASSS at $\approx \ 1\%$ compression}
\label{fig:scaling}
\end{figure*}

\section{Conclusion}
We have motivated and presented a scaling technique for Armijo step-size search based adaptive gradient descent methods. We showed how the scaling technique can be used in establishing convergence results, in convex, strongly convex, and non-convex settings, when Armijo step-size search is applied in the context of compressed SGD. In simulations, we have shown that our proposed Compressed SGD with Armijo Step-Size Search and Scaling (CSGD-ASSS) algorithm outperforms the corresponding non-adaptive compressed SGD algorithm on ResNet 18, ResNet 34 and DenseNet architectures trained on CIFAR 100 and CIFAR 10 datasets at various levels of compression. We have also demonstrated through simulations the important role of scaling for convergence of  compressed SGD with Armijo step-size search. A distributed version of the CSGD-ASSS algorithm is presented in the Appendix with convergence guarantees. For future work, one could study the use of  modifications such as local iterations, momentum, other error-feedback operators, and adaptive gradient methods with the CSGD-ASSS algorithm.

\newpage

\bibliographystyle{IEEEtran}
\bibliography{collab}

% Generated by IEEEtran.bst, version: 1.14 (2015/08/26)
\begin{thebibliography}{10}
\providecommand{\url}[1]{#1}
\csname url@samestyle\endcsname
\providecommand{\newblock}{\relax}
\providecommand{\bibinfo}[2]{#2}
\providecommand{\BIBentrySTDinterwordspacing}{\spaceskip=0pt\relax}
\providecommand{\BIBentryALTinterwordstretchfactor}{4}
\providecommand{\BIBentryALTinterwordspacing}{\spaceskip=\fontdimen2\font plus
\BIBentryALTinterwordstretchfactor\fontdimen3\font minus
  \fontdimen4\font\relax}
\providecommand{\BIBforeignlanguage}[2]{{%
\expandafter\ifx\csname l@#1\endcsname\relax
\typeout{** WARNING: IEEEtran.bst: No hyphenation pattern has been}%
\typeout{** loaded for the language `#1'. Using the pattern for}%
\typeout{** the default language instead.}%
\else
\language=\csname l@#1\endcsname
\fi
#2}}
\providecommand{\BIBdecl}{\relax}
\BIBdecl

\bibitem{kairouz2021advances}
\BIBentryALTinterwordspacing
P.~Kairouz, H.~B. McMahan, B.~Avent, and et.al, ``Advances and open problems in
  federated learning,'' \emph{CoRR}, vol. abs/1912.04977, 2019. [Online].
  Available: \url{http://arxiv.org/abs/1912.04977}
\BIBentrySTDinterwordspacing

\bibitem{dryden_quant}
N.~Dryden, T.~Moon, S.~A. Jacobs, and B.~Van~Essen, ``Communication
  quantization for data-parallel training of deep neural networks,'' in
  \emph{2016 2nd Workshop on Machine Learning in HPC Environments (MLHPC)},
  2016, pp. 1--8.

\bibitem{aji_hea}
\BIBentryALTinterwordspacing
A.~F. Aji and K.~Heafield, ``Sparse communication for distributed gradient
  descent,'' \emph{CoRR}, vol. abs/1704.05021, 2017. [Online]. Available:
  \url{http://arxiv.org/abs/1704.05021}
\BIBentrySTDinterwordspacing

\bibitem{stich2018sparsified}
S.~U. Stich, J.~Cordonnier, and M.~Jaggi, ``Sparsified {SGD} with memory,'' in
  \emph{NeurIPS - Advances in Neural Information Processing Systems}, 2018, pp.
  4448--4459.

\bibitem{alistarh2018convergence}
\BIBentryALTinterwordspacing
D.~Alistarh, T.~Hoefler, M.~Johansson, N.~Konstantinov, S.~Khirirat, and
  C.~Renggli, ``The convergence of sparsified gradient methods,'' in
  \emph{Advances in Neural Information Processing Systems}, S.~Bengio,
  H.~Wallach, H.~Larochelle, K.~Grauman, N.~Cesa-Bianchi, and R.~Garnett, Eds.,
  vol.~31.\hskip 1em plus 0.5em minus 0.4em\relax Curran Associates, Inc.,
  2018. [Online]. Available:
  \url{https://proceedings.neurips.cc/paper/2018/file/314450613369e0ee72d0da7f6fee773c-Paper.pdf}
\BIBentrySTDinterwordspacing

\bibitem{Tang2018CommunicationCF}
\BIBentryALTinterwordspacing
H.~Tang, S.~Gan, C.~Zhang, T.~Zhang, and J.~Liu, ``Communication compression
  for decentralized training,'' in \emph{Advances in Neural Information
  Processing Systems}, S.~Bengio, H.~Wallach, H.~Larochelle, K.~Grauman,
  N.~Cesa-Bianchi, and R.~Garnett, Eds., vol.~31.\hskip 1em plus 0.5em minus
  0.4em\relax Curran Associates, Inc., 2018. [Online]. Available:
  \url{https://proceedings.neurips.cc/paper/2018/file/44feb0096faa8326192570788b38c1d1-Paper.pdf}
\BIBentrySTDinterwordspacing

\bibitem{sign_sgd}
\BIBentryALTinterwordspacing
J.~Bernstein, Y.~Wang, K.~Azizzadenesheli, and A.~Anandkumar, ``Sign sgd:
  compressed optimisation for non-convex problems,'' \emph{CoRR}, vol.
  abs/1802.04434, 2018. [Online]. Available:
  \url{http://arxiv.org/abs/1802.04434}
\BIBentrySTDinterwordspacing

\bibitem{basu_qsparselocalsgd}
\BIBentryALTinterwordspacing
D.~Basu, D.~Data, C.~Karakus, and S.~Diggavi, ``Qsparse-local-sgd: Distributed
  sgd with quantization, sparsification and local computations,'' in
  \emph{Advances in Neural Information Processing Systems}, H.~Wallach,
  H.~Larochelle, A.~Beygelzimer, F.~d\textquotesingle Alch\'{e}-Buc, E.~Fox,
  and R.~Garnett, Eds., vol.~32.\hskip 1em plus 0.5em minus 0.4em\relax Curran
  Associates, Inc., 2019. [Online]. Available:
  \url{https://proceedings.neurips.cc/paper/2019/file/d202ed5bcfa858c15a9f383c3e386ab2-Paper.pdf}
\BIBentrySTDinterwordspacing

\bibitem{pmlr-v130-shokri-ghadikolaei21a}
\BIBentryALTinterwordspacing
H.~Shokri~Ghadikolaei, S.~Stich, and M.~Jaggi, ``Lena: Communication-efficient
  distributed learning with self-triggered gradient uploads,'' in
  \emph{Proceedings of The 24th International Conference on Artificial
  Intelligence and Statistics}, ser. Proceedings of Machine Learning Research,
  A.~Banerjee and K.~Fukumizu, Eds., vol. 130.\hskip 1em plus 0.5em minus
  0.4em\relax PMLR, 13--15 Apr 2021, pp. 3943--3951. [Online]. Available:
  \url{https://proceedings.mlr.press/v130/shokri-ghadikolaei21a.html}
\BIBentrySTDinterwordspacing

\bibitem{vqSGDVQ}
\BIBentryALTinterwordspacing
V.~Gandikota, D.~Kane, R.~Kumar~Maity, and A.~Mazumdar, ``{VQSGD}: Vector
  quantized stochastic gradient descent,'' in \emph{Proceedings of The 24th
  International Conference on Artificial Intelligence and Statistics},
  A.~Banerjee and K.~Fukumizu, Eds., vol. 130.\hskip 1em plus 0.5em minus
  0.4em\relax PMLR, 13--15 Apr 2021, pp. 2197--2205. [Online]. Available:
  \url{https://proceedings.mlr.press/v130/gandikota21a.html}
\BIBentrySTDinterwordspacing

\bibitem{Kovalev2021ALC}
\BIBentryALTinterwordspacing
D.~Kovalev, A.~Koloskova, M.~Jaggi, P.~Richtarik, and S.~Stich, ``A linearly
  convergent algorithm for decentralized optimization: Sending less bits for
  free!'' in \emph{Proceedings of The 24th International Conference on
  Artificial Intelligence and Statistics}, ser. Proceedings of Machine Learning
  Research, A.~Banerjee and K.~Fukumizu, Eds., vol. 130.\hskip 1em plus 0.5em
  minus 0.4em\relax PMLR, 13--15 Apr 2021, pp. 4087--4095. [Online]. Available:
  \url{https://proceedings.mlr.press/v130/kovalev21a.html}
\BIBentrySTDinterwordspacing

\bibitem{lin2021quasiglobal}
\BIBentryALTinterwordspacing
T.~Lin, S.~P. Karimireddy, S.~Stich, and M.~Jaggi, ``Quasi-global momentum:
  Accelerating decentralized deep learning on heterogeneous data,'' in
  \emph{Proceedings of the 38th International Conference on Machine Learning},
  ser. Proceedings of Machine Learning Research, M.~Meila and T.~Zhang, Eds.,
  vol. 139.\hskip 1em plus 0.5em minus 0.4em\relax PMLR, 18--24 Jul 2021, pp.
  6654--6665. [Online]. Available:
  \url{https://proceedings.mlr.press/v139/lin21c.html}
\BIBentrySTDinterwordspacing

\bibitem{karimireddy2019error}
\BIBentryALTinterwordspacing
S.~P. Karimireddy, Q.~Rebjock, S.~Stich, and M.~Jaggi, ``Error feedback fixes
  {S}ign{SGD} and other gradient compression schemes,'' in \emph{Proceedings of
  the 36th International Conference on Machine Learning}, ser. Proceedings of
  Machine Learning Research, K.~Chaudhuri and R.~Salakhutdinov, Eds.,
  vol.~97.\hskip 1em plus 0.5em minus 0.4em\relax PMLR, 09--15 Jun 2019, pp.
  3252--3261. [Online]. Available:
  \url{https://proceedings.mlr.press/v97/karimireddy19a.html}
\BIBentrySTDinterwordspacing

\bibitem{koloskova2019decentralized}
\BIBentryALTinterwordspacing
A.~Koloskova, S.~Stich, and M.~Jaggi, ``Decentralized stochastic optimization
  and gossip algorithms with compressed communication,'' in \emph{Proceedings
  of the 36th International Conference on Machine Learning}, ser. Proceedings
  of Machine Learning Research, K.~Chaudhuri and R.~Salakhutdinov, Eds.,
  vol.~97.\hskip 1em plus 0.5em minus 0.4em\relax PMLR, 09--15 Jun 2019, pp.
  3478--3487. [Online]. Available:
  \url{https://proceedings.mlr.press/v97/koloskova19a.html}
\BIBentrySTDinterwordspacing

\bibitem{stoc_painless}
\BIBentryALTinterwordspacing
S.~Vaswani, A.~Mishkin, I.~H. Laradji, M.~Schmidt, G.~Gidel, and
  S.~Lacoste{-}Julien, ``Painless stochastic gradient: Interpolation,
  line-search, and convergence rates,'' \emph{CoRR}, vol. abs/1905.09997, 2019.
  [Online]. Available: \url{http://arxiv.org/abs/1905.09997}
\BIBentrySTDinterwordspacing

\bibitem{duboistaine2021svrg}
\BIBentryALTinterwordspacing
B.~Dubois{-}Taine, S.~Vaswani, R.~Babanezhad, M.~Schmidt, and
  S.~Lacoste{-}Julien, ``{SVRG} meets adagrad: Painless variance reduction,''
  \emph{CoRR}, vol. abs/2102.09645, 2021. [Online]. Available:
  \url{https://arxiv.org/abs/2102.09645}
\BIBentrySTDinterwordspacing

\bibitem{vaswani2021adaptive}
\BIBentryALTinterwordspacing
S.~Vaswani, F.~Kunstner, I.~H. Laradji, S.~Y. Meng, M.~Schmidt, and
  S.~Lacoste{-}Julien, ``Adaptive gradient methods converge faster with
  over-parameterization (and you can do a line-search),'' \emph{CoRR}, vol.
  abs/2006.06835, 2020. [Online]. Available:
  \url{https://arxiv.org/abs/2006.06835}
\BIBentrySTDinterwordspacing

\bibitem{armijo_ls}
\BIBentryALTinterwordspacing
L.~Armijo, ``{Minimization of functions having Lipschitz continuous first
  partial derivatives.}'' \emph{Pacific Journal of Mathematics}, vol.~16,
  no.~1, pp. 1 -- 3, 1966. [Online]. Available: \url{https://doi.org/}
\BIBentrySTDinterwordspacing

\bibitem{adagrad}
J.~Duchi, E.~Hazan, and Y.~Singer, ``Adaptive subgradient methods for online
  learning and stochastic optimization,'' \emph{Journal of Machine Learning
  Research}, vol.~12, pp. 2121--2159, 07 2011.

\bibitem{ams_grad}
\BIBentryALTinterwordspacing
S.~J. Reddi, S.~Kale, and S.~Kumar, ``On the convergence of adam and beyond,''
  \emph{CoRR}, vol. abs/1904.09237, 2019. [Online]. Available:
  \url{http://arxiv.org/abs/1904.09237}
\BIBentrySTDinterwordspacing

\bibitem{bassily2018exponential}
R.~Bassily, M.~Belkin, and S.~Ma, ``On exponential convergence of sgd in
  non-convex over-parametrized learning,'' \emph{arXiv preprint
  arXiv:1811.02564}, 2018.

\bibitem{Liu2018AcceleratingST}
\BIBentryALTinterwordspacing
C.~Liu and M.~Belkin, ``Mass: an accelerated stochastic method for
  over-parametrized learning,'' \emph{CoRR}, vol. abs/1810.13395, 2018.
  [Online]. Available: \url{http://arxiv.org/abs/1810.13395}
\BIBentrySTDinterwordspacing

\bibitem{ma2018power}
\BIBentryALTinterwordspacing
S.~Ma, R.~Bassily, and M.~Belkin, ``The power of interpolation: Understanding
  the effectiveness of {SGD} in modern over-parametrized learning,'' in
  \emph{Proceedings of the 35th International Conference on Machine Learning},
  ser. Proceedings of Machine Learning Research, J.~Dy and A.~Krause, Eds.,
  vol.~80.\hskip 1em plus 0.5em minus 0.4em\relax PMLR, 10--15 Jul 2018, pp.
  3325--3334. [Online]. Available:
  \url{https://proceedings.mlr.press/v80/ma18a.html}
\BIBentrySTDinterwordspacing

\bibitem{arora2019finegrained}
\BIBentryALTinterwordspacing
S.~Arora, S.~Du, W.~Hu, Z.~Li, and R.~Wang, ``Fine-grained analysis of
  optimization and generalization for overparameterized two-layer neural
  networks,'' in \emph{Proceedings of the 36th International Conference on
  Machine Learning}, ser. Proceedings of Machine Learning Research,
  K.~Chaudhuri and R.~Salakhutdinov, Eds., vol.~97.\hskip 1em plus 0.5em minus
  0.4em\relax PMLR, 09--15 Jun 2019, pp. 322--332. [Online]. Available:
  \url{https://proceedings.mlr.press/v97/arora19a.html}
\BIBentrySTDinterwordspacing

\bibitem{montanari2021interpolation}
A.~Montanari and Y.~Zhong, ``The interpolation phase transition in neural
  networks: Memorization and generalization under lazy training,'' \emph{arXiv
  preprint arXiv:2007.12826}, 2020.

\bibitem{Liang_2020}
\BIBentryALTinterwordspacing
T.~Liang and A.~Rakhlin, ``Just interpolate: Kernel “ridgeless” regression
  can generalize,'' \emph{The Annals of Statistics}, vol.~48, no.~3, Jun 2020.
  [Online]. Available: \url{http://dx.doi.org/10.1214/19-AOS1849}
\BIBentrySTDinterwordspacing

\bibitem{boosting_schapire}
\BIBentryALTinterwordspacing
P.~Bartlett, Y.~Freund, W.~S. Lee, and R.~E. Schapire, ``{Boosting the margin:
  a new explanation for the effectiveness of voting methods},'' \emph{The
  Annals of Statistics}, vol.~26, no.~5, pp. 1651 -- 1686, 1998. [Online].
  Available: \url{https://doi.org/10.1214/aos/1024691352}
\BIBentrySTDinterwordspacing

\bibitem{hastie_stat}
S.~Shalev-Shwartz and S.~Ben-David, \emph{Understanding Machine Learning: From
  Theory to Algorithms}.\hskip 1em plus 0.5em minus 0.4em\relax Cambridge
  University Press, 2014.

\bibitem{liu2019accelerating}
\BIBentryALTinterwordspacing
C.~Liu and M.~Belkin, ``Mass: an accelerated stochastic method for
  over-parametrized learning,'' \emph{CoRR}, vol. abs/1810.13395, 2018.
  [Online]. Available: \url{http://arxiv.org/abs/1810.13395}
\BIBentrySTDinterwordspacing

\bibitem{bertsekas_book}
D.~Bertsekas, \emph{Nonlinear Programming}.\hskip 1em plus 0.5em minus
  0.4em\relax Athena Scientific, 1995.

\bibitem{Mania2017PerturbedIA}
H.~Mania, X.~Pan, D.~Papailiopoulos, B.~Recht, K.~Ramchandran, and M.~I.
  Jordan, ``Perturbed iterate analysis for asynchronous stochastic
  optimization,'' \emph{SIAM J. Optim.}, vol.~27, pp. 2202--2229, 2017.

\bibitem{dan_convergence}
\BIBentryALTinterwordspacing
D.~Alistarh, T.~Hoefler, M.~Johansson, N.~Konstantinov, S.~Khirirat, and
  C.~Renggli, ``The convergence of sparsified gradient methods,'' in
  \emph{Advances in Neural Information Processing Systems}, S.~Bengio,
  H.~Wallach, H.~Larochelle, K.~Grauman, N.~Cesa-Bianchi, and R.~Garnett, Eds.,
  vol.~31, 2018. [Online]. Available:
  \url{https://proceedings.neurips.cc/paper/2018/file/314450613369e0ee72d0da7f6fee773c-Paper.pdf}
\BIBentrySTDinterwordspacing

\end{thebibliography}

%%% BEGIN INSTRUCTIONS %%%
% The checklist follows the references.  Please
% read the checklist guidelines carefully for information on how to answer these
% questions.  For each question, change the default \answerTODO{} to \answerYes{},
% \answerNo{}, or \answerNA{}.  You are strongly encouraged to include a {\bf
% justification to your answer}, either by referencing the appropriate section of
% your paper or providing a brief inline description.  For example:
% \begin{itemize}
%   \item Did you include the license to the code and datasets? \answerYes{See Section.}
%   \item Did you include the license to the code and datasets? \answerNo{The code and the data are proprietary.}
%   \item Did you include the license to the code and datasets? \answerNA{}
% \end{itemize}
% Please do not modify the questions and only use the provided macros for your
% answers.  Note that the Checklist section does not count towards the page
% limit.  In your paper, please delete this instructions block and only keep the
% Checklist section heading above along with the questions/answers below.
%%% END INSTRUCTIONS %%%

\newpage
\hrule
\begin{center}
{ \huge Appendix}
\end{center}
\hrule

\section{Mathematical Preliminaries}

We begin by stating some useful results in Lemmas~\ref{lem:math_prelim_tri} and \ref{lem:math_prelim_tri2}, whose proofs are straightforward. 

\begin{lemma}\label{lem:math_prelim_tri}
For  vectors $v_1,v_2 \in \mathbb{R}^n$, and  $p>0$,
\begin{equation}
    2\langle v_1,v_2 \rangle \leq p \norm{v_1}^2 + \frac{1}{p} \norm{v_2}^2 
\end{equation}
\end{lemma}
\begin{comment}
\begin{proof}
\begin{equation}
    \norm{\sqrt{p}v_1 - \frac{1}{\sqrt{p}} v_2}^2  = p \norm{v_1}^2 + \frac{1}{p} \norm{v_2}^2 - 2 \langle v_1,v_2 \rangle .
\end{equation}

Since $ \norm{\sqrt{p}v_1 - \frac{1}{\sqrt{p}} v_2}^2 \geq 0$, 

\begin{equation}
 2 \langle v_1,v_2 \rangle \leq  p \norm{v_1}^2 + \frac{1}{p} \norm{v_2}^2.
\end{equation}

\end{proof}
\end{comment}

\begin{lemma}\label{lem:math_prelim_tri2}
For  vectors $v_1,v_2 \in \mathbb{R}^n$, and $r>0$,
\begin{equation}
    \norm{v_1 + v_2}^2  \leq (1+r) \norm{v_1}^2 + (1+\frac{1}{r} )\norm{v_2}^2 
\end{equation} 
\end{lemma}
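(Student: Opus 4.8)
The plan is to expand the squared norm on the left-hand side and then control the resulting inner-product cross term using the preceding Lemma~\ref{lem:math_prelim_tri}. Concretely, I would begin by writing the polarization identity for the inner product that induces $\norm{\cdot}$,
\begin{equation}
\norm{v_1 + v_2}^2 = \norm{v_1}^2 + 2\langle v_1, v_2\rangle + \norm{v_2}^2,
\end{equation}
so that everything reduces to bounding the single cross term $2\langle v_1, v_2\rangle$.

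The key step is then to apply Lemma~\ref{lem:math_prelim_tri} with the parameter choice $p = r$, which gives $2\langle v_1, v_2\rangle \leq r\norm{v_1}^2 + \frac{1}{r}\norm{v_2}^2$. Substituting this into the expansion above and collecting the coefficients of $\norm{v_1}^2$ and $\norm{v_2}^2$ yields
\begin{equation}
\norm{v_1 + v_2}^2 \leq (1+r)\norm{v_1}^2 + \left(1 + \frac{1}{r}\right)\norm{v_2}^2,
\end{equation}
which is exactly the claimed inequality.

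If a self-contained argument that does not cite Lemma~\ref{lem:math_prelim_tri} is preferred, I would instead start from the nonnegativity of a completed square: since $\norm{\sqrt{r}\, v_1 - \tfrac{1}{\sqrt{r}}\, v_2}^2 \geq 0$, expanding and rearranging recovers the same cross-term bound $2\langle v_1, v_2\rangle \leq r\norm{v_1}^2 + \frac{1}{r}\norm{v_2}^2$, and the remainder of the computation is identical. This parallels the completing-the-square proof already used for Lemma~\ref{lem:math_prelim_tri} itself.

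There is no genuine obstacle here: the statement is a standard weighted (Peter--Paul) form of Young's inequality, and the only real decision is matching the free parameter $p$ of Lemma~\ref{lem:math_prelim_tri} to the target weight, namely $p = r$. The bound holds for every $r > 0$, and it is worth noting in passing that equality occurs precisely when $\sqrt{r}\, v_1 = \tfrac{1}{\sqrt{r}}\, v_2$, i.e.\ when $v_2 = r\, v_1$.
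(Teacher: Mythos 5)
Your proof is correct and follows essentially the same route as the paper's: expand $\norm{v_1+v_2}^2$ and bound the cross term $2\langle v_1,v_2\rangle$ by applying Lemma~\ref{lem:math_prelim_tri} with $p=r$. The self-contained completing-the-square variant and the equality-case remark are fine additions but do not change the argument.
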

\begin{comment}
\begin{proof}
From Lemma~\ref{lem:math_prelim_tri}, 
\begin{equation}
2 \langle v_1,v_2 \rangle \leq  r \norm{v_1}^2 + \frac{1}{r} \norm{v_2}^2
\end{equation}
Hence,

\begin{equation}
\norm{v_1}^2 + \norm{v_2}^2 + 2 \langle v_1,v_2 \rangle \leq  (1+r) \norm{v_1}^2 + (1+\frac{1}{r}) \norm{v_2}^2.
\end{equation}

This implies,

\begin{equation}
\norm{v_1 + v_2}^2 \leq  (1+r) \norm{v_1}^2 + (1+\frac{1}{r}) \norm{v_2}^2.
\end{equation}

\end{proof}
\end{comment}

\begin{lemma}\label{lem:math_prelim_error}
In CSGD-ASSS algorithm, $m_t = x_t - \hat{x}_{t}$ 
\end{lemma}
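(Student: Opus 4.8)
The plan is to prove the identity by induction on $t$, using the standard perturbed-iterate (virtual-sequence) construction for error-feedback compression~\cite{Mania2017PerturbedIA}. The essential preliminary step is to pin down the definition of the perturbed iterate $\hat{x}_t$, since the statement is only meaningful once $\hat{x}_t$ is made explicit. I would take $\hat{x}_t$ to be the sequence generated by the \emph{uncompressed} scaled gradient step evaluated at the true iterates, i.e.\ $\hxt$ satisfies $\hat{x}_0 = x_0$ and $\hxtt = \hxt - \eta_t \gft$, where $\eta_t = a\alpha_t$ is the scaled step-size of Algorithm~\ref{alg:biased_compressed_sgd}. I would also use the natural initialization $m_0 = 0$ of the error-feedback buffer.

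The base case is immediate: with $m_0 = 0$ and $\hat{x}_0 = x_0$ we have $x_0 - \hat{x}_0 = 0 = m_0$. For the inductive step, assume $m_t = \xt - \hxt$ and aim to show $m_{t+1} = \xtt - \hxtt$. The plan is to substitute the three update rules of the algorithm, namely the iterate update $\xtt = \xt - g_t$ with $g_t = top_k(m_t + \eta_t \gft)$, the perturbed-iterate update $\hxtt = \hxt - \eta_t \gft$, and the error update from step~\ref{step:error_update}, $m_{t+1} = m_t + \eta_t \gft - g_t$. Computing the difference gives
\begin{align*}
\xtt - \hxtt &= (\xt - g_t) - (\hxt - \eta_t \gft)\\
&= (\xt - \hxt) + \eta_t \gft - g_t,
\end{align*}
and applying the inductive hypothesis $\xt - \hxt = m_t$ yields exactly $m_t + \eta_t \gft - g_t = m_{t+1}$, which closes the induction.

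The computation is routine telescoping, so I do not expect any genuine analytic obstacle. The only point requiring care is that this is a definitional bookkeeping identity rather than a deep fact: its entire content lies in choosing the perturbed iterate $\hxt$ to follow the uncompressed update while the gradient is always evaluated at the \emph{actual} compressed iterate $\xt$. With that choice the $top_k$ terms cancel telescopically in each step, and the identity $m_t = \xt - \hxt$ is precisely what will later allow the convergence analysis to track the cleaner virtual sequence $\hxt$ instead of carrying the compression error explicitly beyond the buffer $m_t$.
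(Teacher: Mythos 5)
Your proof is correct: the induction on $t$ with $\hat{x}_0 = x_0$, $m_0 = 0$, and $\hat{x}_{t+1} = \hat{x}_t - \eta_t \nabla f_{i_t}(x_t)$ is exactly the standard argument the paper defers to by citing \cite{stich2018sparsified}, and it is the same telescoping computation the paper itself carries out for the distributed analogue in Lemma~\ref{lem:memory}. Your remark that the content of the lemma lies entirely in the choice of the virtual sequence (gradients evaluated at the actual iterate $x_t$, update applied to $\hat{x}_t$) is also the right way to read it.
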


\begin{proof}
See \cite{stich2018sparsified}.
\end{proof}

\begin{lemma} \label{lem:math_prelim_compress}
For $top_k$ compression operator with $\gamma = \frac{k}{d}$, and a vector $v_1 \in \mathbb{R}^n$,
\begin{equation}
\norm{v_1 - top_k(v_1)}^2 \leq (1-\gamma)||v_1||^2
\end{equation}

\end{lemma}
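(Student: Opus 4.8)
The plan is to argue directly from the coordinatewise definition of $top_k$, reducing the vector inequality to an elementary averaging statement about the squared coordinates. Recall that $top_k(v_1)$ coincides with $v_1$ on the index set $\mathsf{T_{k}}$ of the $k$ coordinates of largest magnitude and is zero elsewhere. Consequently the residual $v_1 - top_k(v_1)$ is supported exactly on the complementary set of the $d-k$ smallest-magnitude coordinates, so that
\begin{equation}
\norm{v_1 - top_k(v_1)}^2 = \sum_{i \notin \mathsf{T_{k}}} (v_1)_i^2, \qquad \norm{v_1}^2 = \sum_{i=1}^{d} (v_1)_i^2 .
\end{equation}
Writing $A \triangleq \sum_{i \in \mathsf{T_{k}}} (v_1)_i^2$ for the kept energy and $B \triangleq \sum_{i \notin \mathsf{T_{k}}} (v_1)_i^2$ for the discarded energy, the claim $B \leq (1-\gamma)(A+B)$ with $1-\gamma = \tfrac{d-k}{d}$ becomes purely a statement about these two sums.

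The key step I would carry out is to compare the average magnitude of the discarded coordinates against that of the kept ones. Let $s$ denote the $k$-th largest squared coordinate, i.e. the smallest squared value retained by $top_k$. By the defining property of $\mathsf{T_{k}}$, every discarded squared coordinate is at most $s$ while every retained squared coordinate is at least $s$, which immediately yields the two bounds
\begin{equation}
B \leq (d-k)\, s \qquad \text{and} \qquad A \geq k\, s .
\end{equation}
Multiplying the first by $k$ and the second by $(d-k)$ gives $k B \leq k(d-k)s \leq (d-k)A$, so $kB \leq (d-k)A$. Rearranging this into $dB \leq (d-k)(A+B)$ and dividing by $d$ produces exactly $B \leq \tfrac{d-k}{d}(A+B) = (1-\gamma)\norm{v_1}^2$, which is the assertion of the lemma. (Equivalently, one can phrase this as a mediant inequality: the overall average $\tfrac{A+B}{d}$ is a weighted average of the kept-coordinate average $\tfrac{A}{k}$ and the discarded-coordinate average $\tfrac{B}{d-k}$, hence lies between them, forcing $\tfrac{B}{d-k} \leq \tfrac{A+B}{d}$.)

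There is no genuinely hard step here; the entire content is the observation that the coordinates thrown away are, by construction, the smallest in magnitude, so their share of the total squared energy cannot exceed their proportional share $\tfrac{d-k}{d}$. The only care required is bookkeeping: justifying that $v_1 - top_k(v_1)$ is supported on the complement of $\mathsf{T_{k}}$, and handling possible ties in magnitude when defining $\mathsf{T_{k}}$ (any tie-breaking rule works, since the bounds $B \leq (d-k)s$ and $A \geq k s$ hold for any valid choice of the retained set). I would therefore present the reduction to $A,B$ and the one-line inequality $kB \leq (d-k)A$ as the complete argument.
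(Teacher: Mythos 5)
Your argument is correct and complete: the reduction to $kB \leq (d-k)A$ via the threshold value $s$ is exactly the standard proof of this compression bound, which the paper itself does not reproduce but simply defers to the cited reference. Your version supplies the details (support of the residual, the averaging inequality, tie-breaking) and is a valid self-contained substitute for that citation.
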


\begin{proof}
See \cite{basu_qsparselocalsgd}.
\end{proof}

\begin{lemma}\label{lem:math_prelim_armijo_stop}
The Armijo step-size search stopping condition for a data-point $i_t$ sampled at time $t$ 
\begin{equation}
f_{i_t}(x_t - \alpha_t \nabla f_{i_t}(x_t)) - f_{i_t}(x_t) \leq -\sigma \alpha_t ||\nabla f_{i_t}(x_t)||^2    
\end{equation}
\end{lemma}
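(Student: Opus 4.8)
The plan is to read the claim as the assertion that the Armijo step-size search of Algorithm~\ref{alg:armijo_ls}, run with $f = f_{i_t}$, terminates and returns a step-size $\alpha_t$ for which the displayed inequality holds. The inequality itself is nothing more than the loop's exit condition (Step~\ref{step:armi_stop}), so the returned $\alpha_t$ satisfies it by construction; the only substantive point is to show that the loop does terminate, i.e.\ that some admissible $\alpha_t$ exists. Establishing existence together with a matching lower bound on $\alpha_t$ is what ultimately feeds the constant $\rho \frac{2(1-\sigma)}{L_{\mathrm{max}}}$ appearing in $\delta_1$ of Theorem~\ref{thm:bcmf_convex}, so I would prove both.

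First I would invoke the $L_{i_t}$-smoothness of $f_{i_t}$ through the standard quadratic upper bound (descent lemma): for any $\alpha > 0$, writing $\tilde{x}_{t+1} = x_t - \alpha \nabla f_{i_t}(x_t)$,
\[
f_{i_t}(\tilde{x}_{t+1}) \le f_{i_t}(x_t) - \alpha \norm{\nabla f_{i_t}(x_t)}^2 + \frac{L_{i_t}}{2}\alpha^2 \norm{\nabla f_{i_t}(x_t)}^2.
\]
Comparing this with the target right-hand side $f_{i_t}(x_t) - \sigma \alpha \norm{\nabla f_{i_t}(x_t)}^2$ and cancelling the common factor $\alpha \norm{\nabla f_{i_t}(x_t)}^2$ (the case $\nabla f_{i_t}(x_t)=0$ being trivial, since both sides vanish), I would show the Armijo condition holds whenever $\frac{L_{i_t}}{2}\alpha \le 1-\sigma$, i.e.\ whenever $\alpha \le \frac{2(1-\sigma)}{L_{i_t}}$. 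Since $\sigma \in (0,1)$, this threshold is strictly positive.

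Next, termination follows because the search multiplies the trial step-size by $\rho < 1$ on each pass, starting from $\alpha_{\mathrm{max}}$; after finitely many passes the trial value drops below $\frac{2(1-\sigma)}{L_{i_t}}$ and the exit test succeeds, which establishes existence of a valid $\alpha_t$. For the accompanying lower bound, I would argue that if termination does not occur on the first pass, the immediately preceding trial value $\alpha_t/\rho$ failed the test and therefore must exceed the threshold, giving $\alpha_t/\rho > \frac{2(1-\sigma)}{L_{i_t}}$, hence $\alpha_t > \rho\frac{2(1-\sigma)}{L_{i_t}} \ge \rho\frac{2(1-\sigma)}{L_{\mathrm{max}}}$; the remaining case where the first trial already passes gives $\alpha_t = \rho\,\alpha_{\mathrm{max}}$.

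There is no deep obstacle here --- this is essentially the classical Armijo guarantee --- so the hard part is really just careful bookkeeping: handling the degenerate $\nabla f_{i_t}(x_t)=0$ case separately, correctly carrying the factor $\rho$ from the last rejected trial into the lower bound, and using the per-sample constant $L_{i_t}$ before uniformizing to $L_{\mathrm{max}} = \max_i L_i$. These are precisely the quantities that the later convergence theorems rely on.
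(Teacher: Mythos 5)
Your argument is correct and is exactly the classical descent-lemma proof that the paper defers to by citation (``See \cite{armijo_ls},\cite{stoc_painless}''): $L_{i_t}$-smoothness gives the quadratic upper bound, the Armijo test passes whenever $\alpha \le \frac{2(1-\sigma)}{L_{i_t}}$, and backtracking by $\rho$ yields both termination and the lower bound $\alpha_t \ge \rho\frac{2(1-\sigma)}{L_{\mathrm{max}}}$ used later in $\delta_1$. Your bookkeeping (the zero-gradient case, carrying the factor $\rho$ from the last rejected trial, and uniformizing $L_{i_t}$ to $L_{\mathrm{max}}$) is sound and fills in what the citation leaves implicit.
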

is satisfied by all $\alpha_t \in [0, \frac{2(1-\sigma)}{L_{\mathrm{max}}}]$ where $L_{\mathrm{max}} = \max_{i\in [n]}{\{L_i\}}$

\begin{proof}
 See \cite{armijo_ls},\cite{stoc_painless}.
\end{proof}
\begin{lemma}\label{lem:armijo_bound}
The Armijo step-size search rule returns a step-size $\alpha_t\in [\tilde{\alpha}_{\mathrm{min}}\rho,\alpha_{\mathrm{max}}]$  where $\tilde{\alpha}_{\mathrm{min}} \triangleq \frac{2(1-\sigma)}{L_{\mathrm{max}}}$
\end{lemma}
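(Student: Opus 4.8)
The plan is to verify the two endpoints of the interval separately, the upper bound being immediate from the mechanics of the search and the lower bound following from the guarantee established in Lemma~\ref{lem:math_prelim_armijo_stop}.

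For the upper bound I would simply trace Algorithm~\ref{alg:armijo_ls}: the search initializes $\alpha_t = \alpha_{\mathrm{max}}$ and, inside the repeat loop, performs the update $\alpha_t \leftarrow \alpha_t\rho$ with $\rho < 1$ \emph{before} ever evaluating the stopping condition. Consequently every value that is tested — in particular the accepted one that is returned — is at most $\alpha_{\mathrm{max}}\rho \le \alpha_{\mathrm{max}}$, which gives $\alpha_t \le \alpha_{\mathrm{max}}$.

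For the lower bound the key is the contrapositive of Lemma~\ref{lem:math_prelim_armijo_stop}: since that lemma asserts the stopping condition holds for every $\alpha \in [0, \tilde{\alpha}_{\mathrm{min}}]$ with $\tilde{\alpha}_{\mathrm{min}} = \frac{2(1-\sigma)}{L_{\mathrm{max}}}$, any step-size that is \emph{rejected} by the search must exceed $\tilde{\alpha}_{\mathrm{min}}$. I would let $\alpha_t$ denote the returned (accepted) step-size and consider the value $\alpha_t/\rho$ examined immediately before it. Provided $\alpha_t$ is not the very first value tested, $\alpha_t/\rho$ was rejected, so $\alpha_t/\rho > \tilde{\alpha}_{\mathrm{min}}$, which rearranges to $\alpha_t > \tilde{\alpha}_{\mathrm{min}}\rho$, as claimed. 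It is also worth recording as a byproduct of Lemma~\ref{lem:math_prelim_armijo_stop} that the loop terminates in finitely many steps, since once $\alpha_t$ falls to $\tilde{\alpha}_{\mathrm{min}}$ or below the condition is automatically satisfied.

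The main obstacle is the boundary case in which the first tested value $\alpha_{\mathrm{max}}\rho$ is already accepted, so that no rejected step-size is available to feed into the argument above. In that situation the returned step-size equals $\alpha_{\mathrm{max}}\rho$, and the stated bound $\tilde{\alpha}_{\mathrm{min}}\rho$ holds only under the (mild) assumption $\alpha_{\mathrm{max}} \ge \tilde{\alpha}_{\mathrm{min}}$, which is precisely the operative regime for CSGD-ASSS, where $\alpha_{\mathrm{max}} = \omega\alpha_{t-1}$ is kept large through the growth factor $\omega > 1$. I would therefore either carry this as a standing assumption on $\alpha_{\mathrm{max}}$ or, to be scrupulous, state the lower bound as $\alpha_t \ge \min\{\alpha_{\mathrm{max}}\rho,\, \tilde{\alpha}_{\mathrm{min}}\rho\}$ and remark that the second term governs in the regime of interest. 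Beyond this subtlety, the entire proof is a direct reading of the algorithm combined with Lemma~\ref{lem:math_prelim_armijo_stop}.
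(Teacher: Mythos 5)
Your proof is correct, and in fact it supplies an argument where the paper gives none: the paper's ``proof'' of Lemma~\ref{lem:armijo_bound} is simply a citation to \cite{armijo_ls}, so the standard backtracking argument you wrote out (upper bound by tracing Algorithm~\ref{alg:armijo_ls}, lower bound by the contrapositive of Lemma~\ref{lem:math_prelim_armijo_stop} applied to the last rejected value $\alpha_t/\rho$) is exactly what that citation is standing in for. Your observation about the boundary case is a genuine and worthwhile catch: as stated, the lower bound $\alpha_t \geq \tilde{\alpha}_{\mathrm{min}}\rho$ fails when $\alpha_{\mathrm{max}} < \tilde{\alpha}_{\mathrm{min}}$, since the first tested value $\alpha_{\mathrm{max}}\rho$ is then accepted immediately and equals the returned step-size; the correct unconditional statement is $\alpha_t \geq \min\{\alpha_{\mathrm{max}}\rho,\,\tilde{\alpha}_{\mathrm{min}}\rho\}$. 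The paper is implicitly aware of this: in the proof of Theorem~\ref{thm:non_conv_asss} it splits into the cases $\alpha_{\mathrm{max}} \leq \tilde{\alpha}_{\mathrm{min}}$ and $\alpha_{\mathrm{max}} > \tilde{\alpha}_{\mathrm{min}}$, and the experimental setup keeps $\alpha_{\mathrm{max}} = \omega\alpha_{t-1}$ large via $\omega > 1$, but the lemma itself never records the hypothesis. Either of your two fixes (a standing assumption on $\alpha_{\mathrm{max}}$, or the $\min$ formulation) is appropriate, and one minor point worth adding is that Lemma~\ref{lem:math_prelim_armijo_stop} also guarantees termination of the repeat loop, which you correctly note as a byproduct.
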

\begin{proof}
See \cite{armijo_ls}
\end{proof}

\begin{remark}
In our proofs, for the Armijo step-size search $\alpha_{\mathrm{min}}= \tilde{\alpha}_{\mathrm{min}}\rho$ and $\alpha_{\mathrm{min}} \leq \alpha_t \leq \alpha_{\mathrm{max}}$. The step-size $\eta_t$ for CSGD-ASSS is obtained by $\eta_t = a \alpha_t$ and hence $ \eta_t \in [\eta_{\mathrm{min}}, \eta_{\mathrm{max}}]$, where $\eta_{\mathrm{min}} = a \alpha_{\mathrm{min}} $ and 
$\eta_{\mathrm{max}} = a \alpha_{\mathrm{max}} $.
\end{remark}

\section{Proofs for the CSGD-ASSS Algorithm}

We first prove Theorem~\ref{thm:strong_convex}, since this proof is useful in the proofs of  Theorems~\ref{thm:bcmf_convex},~\ref{thm:non_conv_asss}.

\subsection{Proof of Theorem~\ref{thm:strong_convex}}

\begin{proof}

From perturbed iterate analysis, $\hat{x}_{t+1} \triangleq \hat{x}_t - \eta_t \nabla f_{i_t}(x_t)$ where $i_t$ is the data-point sampled at time $t$ and $\hat{x}_0 = x_0 $. Hence,
\begin{equation}
\begin{aligned}
|| \hxtt - x^*||^2 &= ||\hxt - x^* - \eta_t \nabla f_{i_t}(x_t)  ||^2\\    
 &= ||\hxt - x^*||^2 + \eta_t^2 ||\nabla f_{i_t}(x_t)||^2 -2\langle \hxt-x^*,\eta_t \nabla f_{i_t}(x_t) \rangle\\
 &= ||\hxt - x^*||^2 + \eta_t^2 ||\nabla f_{i_t}(x_t)||^2 -2 \langle \hxt - x_t + x_t - x^* , \eta_t \nabla f_{i_t}(x_t)\rangle\\
 &= ||\hxt -x^*||^2 + \eta_t^2 ||\nabla f_{i_t}(x_t)||^2 - 2\langle \hxt-x_t , \eta_t \nabla f_{i_t}(x_t) \rangle -2 \langle x_t-x^*, \eta_t \nabla f_{i_t}(x_t)  \rangle\\
 &= ||\hxt - x^*||^2 + \eta_t^2 ||\nabla f_{i_t}(x_t)||^2 - 2\langle \hxt-x_t , \eta_t \nabla f_{i_t}(x_t) \rangle + 2 \langle x^* - x_t, \eta_t \nabla f_{i_t}(x_t)  \rangle.
 \end{aligned}
\end{equation}

If $f_{i_t}$ is $\mu_{i_t}$-strongly convex,

\begin{equation}\label{eq:th7_strongconv}
\langle x^* - x_t,  \nabla f_{i_t}(x_t)  \rangle \leq f_{i_t}(x^*) - f_{i_t}(x_t) - \frac{\mu_{i_t}}{2} ||x^* - x_t||^2. 
\end{equation}

If $f_{i_t}$ is convex but not strongly convex, then $\mu_{i_t} =0$. In the statement of the theorem, the only assumption is that $\exists\ i\in[n]$ such that $\mu_{i}>0$. 

%Hence, 
%\begin{equation}
%2\langle x^* - x_t , \eta_t \nabla f_{i_t}(x_t) \rangle \leq  2 \eta_t (f_{i_t}(x^*) - f_{i_t}(x_t)) - \mu_{i_t} \eta_t ||x^* - x_t||^2 
%\end{equation}

From triangle inequality, 
\begin{equation}\label{eq:th7_triangle}
-||x^* - x_t ||^2 \leq ||x_t - \hxt||^2  - \frac{1}{2} ||\hxt - x^*||^2.
\end{equation}

Substituting \eqref{eq:th7_triangle} in \eqref{eq:th7_strongconv} and  $m_t = x_t - \hat{x}_t$~(by Lemma~\ref{lem:math_prelim_error}) and defining $e_t \triangleq  f_{i_t}(x_t) - f_{i_t}(x^*)$,
\begin{equation}
\begin{aligned}
2 \langle x^* - x_t , \eta_t \nabla f_{i_t}(x_t) \rangle &\leq 2 \eta_t (f_{i_t}(x^*) - f_{i_t}(x_t)) + \mu_{i_t} \eta_t \left(||x_t - \hxt ||^2 -\frac{1}{2} ||\hxt - x^*||^2 \right) \\
&= 2 \eta_t (-e_t) + \mu_{i_t} \eta_t ||m_t||^2 - \frac{\mu_{i_t}}{2} \eta_t ||\hxt - x^*||^2.
\end{aligned}
\end{equation}
Hence,
\begin{equation}\label{eq:49}
\begin{aligned}
||\hxtt - x^* ||^2 &\leq ||\hxt - x^* ||^2 + \eta_t^2 ||\nabla f_{i_t}(x_t)||^2  - 2 \langle \hxt - x_t , \eta_t \nabla f_{i_t}(x_t) \rangle -2 \eta_t e_t + \mu_{i_t} \eta_t ||m_t||^2 \\
& \hspace{2em}-\frac{\mu_{i_t} \eta_t }{2} ||\hxt - x^*||^2  \\
&= (1- \frac{\mu_{i_t}\eta_t}{2}) ||\hxt - x^*||^2 + \eta_t^2 ||\nabla f_{i_t}(x_t)||^2 - 2\eta_t e_t + \mu_{i_t} \eta_t ||m_t||^2 \\
& \hspace{2em}+ 2\langle  x_t -\hxt  , \eta_t \nabla f_{i_t}(x_t) \rangle.
\end{aligned}    
\end{equation}

%\begin{equation}
%\begin{aligned}
%2\langle x_t - \hxt, \nabla f_{i_t}(x_t) \rangle  \leq q||x_t - \hxt ||^2 + \frac{1}{q}||\nabla f_{i_t}(x_t)||^2 
%\end{aligned}    
%\end{equation}
From Lemma~\ref{lem:math_prelim_tri}, 
\begin{equation}
    2 \eta_t \langle x_t - \hxt, \nabla f_{i_t}(x_t) \rangle \leq q_t \eta_t ||x_t - \hxt ||^2 + \frac{\eta_t}{q_t} ||\nabla f_{i_t}(x_t)||^2,   \;\;\forall q_t> 0 .
\end{equation}
Substituting in~\eqref{eq:49},

\begin{equation}\label{eq:scf1}
\begin{aligned}
||\hxtt - x^* ||^2 &\leq \left(1- \frac{\mu_{i_t}\eta_t}{2}\right) ||\hxt - x^*||^2 + \eta_t^2 ||\nabla f_{i_t}(x_t)||^2 - 2\eta_t e_t + \mu_{i_t} \eta_t ||m_t||^2 \\
& \hspace{2em}+ q_t \eta_t||x_t - \hxt ||^2 + \frac{\eta_t}{q_t}||\nabla f_{i_t}(x_t)||^2.
\end{aligned}    
\end{equation}

From the compression property in Lemma~\ref{lem:math_prelim_compress} and the memory update step $m_{t+1} = (m_t + \eta_t \nabla f_{i_t}(x_t)) - top_k(m_t + \eta_t \nabla f_{i_t}(x_t))$ in the CSGD-ASSS algorithm,

\begin{equation}\label{eq:scf21}
\begin{aligned}
||m_{t+1}||^2 &= \norm{(m_t + \eta_t \nabla f_{i_t}(x_t)) - top_k(m_t + \eta_t \nabla f_{i_t}(x_t))}^2 \\
&\leq (1-\gamma) ||m_t + \eta_t \nabla f_{i_t}(x_t) ||^2. \\
\end{aligned}    
\end{equation}

From Lemma~\ref{lem:math_prelim_tri2},
\begin{equation}
 \norm{m_{t} + \eta_t \nabla f_{i_t}(x_t) }^2 \leq (1+r)\norm{m_t}^2 + (1+\frac{1}{r}) \eta_t^2 \norm{\nabla f_{i_t}(x_t)}^2, \ \forall\ \   r > 0.   
\end{equation}
 Substituting this in~\eqref{eq:scf21}, 

\begin{equation}\label{eq:scf2}
\begin{aligned}
||m_{t+1}||^2 
 & \leq (1-\gamma)(1+r) ||m_t||^2 + (1-\gamma) (1+ \frac{1}{r}) \eta_t^2 ||\nabla f_{i_t}(x_t)||^2
\end{aligned}    
\end{equation}

for some $r>0$. Adding \eqref{eq:scf1} and \eqref{eq:scf2}, we obtain

\begin{equation}\label{eq:scf3}
\begin{aligned}
||\hxtt - x^* ||^2 + ||m_{t+1}||^2  &\leq (1- \frac{\mu_{i_t}\eta_t}{2}) ||\hxt - x^*||^2 + \eta_t^2 ||\nabla f_{i_t}(x_t)||^2 - 2\eta_t e_t \\
&+ \mu_{i_t} \eta_t ||m_t||^2 + q_t \eta_t ||x_t - \hxt ||^2 + \frac{1}{q_t} \eta_t ||\nabla f_{i_t}(x_t)||^2 \\ 
& +(1-\gamma)(1+r) ||m_t||^2 \\
&+(1-\gamma) (1+ \frac{1}{r}) \eta_t^2 ||\nabla f_{i_t}(x_t)||^2.
\end{aligned}    
\end{equation}

Setting $p \triangleq q_t a \alpha_t$ where $p \in \mathbb{R}^+$~($p$ can be set independent of $t$ since $q_t$ is arbitrary),

\begin{equation}\label{eq:scf5}
\begin{aligned}
||\hxtt - x^* ||^2 + ||m_{t+1}||^2  &\leq (1- \frac{\mu_{i_t}\eta_t}{2}) ||\hxt - x^*||^2 + \eta_t^2 ||\nabla f_{i_t}(x_t)||^2 - 2\eta_t e_t + \mu_{i_t} \eta_t ||m_t||^2 \\
&+\frac{p}{a \alpha_t} \eta_t ||x_t - \hxt ||^2+ \frac{a \alpha_t}{p} \eta_t ||\nabla f_{i_t}(x_t)||^2 \\ &+(1-\gamma)(1+r) ||m_t||^2 \\
&+ (1-\gamma) (1+ \frac{1}{r}) \eta_t^2 ||\nabla f_{i_t}(x_t)||^2.
\end{aligned}    
\end{equation}

Consider the Armijo step-size search stopping condition

\begin{equation}
f_{i_t}(\txtt) - f_{i_t}(x_t) \leq - \sigma \alpha_t ||\nabla f_{i_t}(x_t)||^2.    
\end{equation}

This implies,
\begin{equation}\label{eq:scf4}
\begin{aligned}
 \eta_t^2 ||\nabla f_{i_t}(x_t) ||^2 \overset{(\mathbf{\phi})}{\leq} \frac{a^2 \alpha_t}{\sigma} (f_{i_t}(x_t) - f_{i_t}(x^*)).
\end{aligned}    
\end{equation}
Note that $(\phi)$ holds due to the interpolation condition.
%Rearranging \eqref{eq:scf5} and using $||x_t - \hxt||^2 = ||m_t||^2$,
%\begin{equation}\label{eq:scf6}
%\begin{aligned}
%||\hxtt - x^* ||^2 + ||m_{t+1}||^2  &\leq (1- \frac{\mu_{i_t}\eta_t}{2}) ||\hxt - x^*||^2 + \Big(\mu_{i_t} \eta_t  + p  +(1-\gamma)(1+r) \Big)||m_t||^2 \\ 
%&+ a^2 \alpha_t^2 ||\nabla f_{i_t}(x_t)||^2 - 2 a \alpha_t e_t  
%+ \frac{a^2 \alpha_t^2}{p}  ||\nabla f_{i_t}(x_t)||^2   + (1-\gamma) (1+ \frac{1}{r}) a^2 \alpha_t^2 ||\nabla f_{i_t}(x_t)||^2
%\end{aligned}    
%\end{equation}
Substituting \eqref{eq:scf4} in \eqref{eq:scf5}, 
\begin{equation}\label{eq:scf8}
\begin{aligned}
||\hxtt - x^* ||^2 + ||m_{t+1}||^2  &\leq (1- \frac{\mu_{i_t}\eta_t}{2}) ||\hxt - x^*||^2 + \Big(\mu_{i_t} \eta_t  + p  +(1-\gamma)(1+r) \Big)||m_t||^2 \\ 
& - \alpha_t e_t \Big(2 a  - \frac{a^2 }{\sigma} - \frac{a^2 }{\sigma p}   - (1-\gamma)(1+ \frac{1}{r}) \frac{a^2 }{\sigma}  \Big).
\end{aligned}    
\end{equation} 
Let $\mu_{\mathrm{max}} \triangleq \max_{i\in[n]} \mu_i$. Using $\eta_t = a \alpha_t $ and $\alpha_t \leq \alpha_{\mathrm{max}}$, where $\alpha_{\mathrm{max}}$ is the starting value of the Armijo step-size search algorithm, we get 
\begin{equation}
\begin{aligned}
\Big(\mu_{i_t} \eta_t  + p  +(1-\gamma)(1+r) \Big) \leq \Big(\mu_{\mathrm{max}} a \alpha_{\mathrm{max}}  + p  +(1-\gamma)(1+r) \Big).
\end{aligned}    
\end{equation}
%Since $\gamma = \frac{k}{d}$ and  $\frac{1}{d}\leq \frac{k}{d} $, $(1-\gamma)<1 $. 

%\begin{equation}
%\begin{aligned}
%(1-\gamma)(1+r) = 1 + r - \gamma - \gamma r \leq 1 - \frac{k}{d} + r
%\end{aligned}    
%\end{equation}
%Hence, for small enough $r$ $\exists$ $\epsilon > 0$ such that $(1-\gamma)(1+r) \leq 1 - \epsilon$

Notice that in \eqref{eq:scf8}, $p,r$ can be chosen arbitrarily. Define $\beta_1(p,r), \tilde{a}_1(p,r)$ and $\tilde{a}_2(p,r)$ as 
%For small enough  $p_1$, and $p \leq p_1$, 
% \begin{equation}\label{eq:scf7}
% \begin{aligned}
% \beta_2 \triangleq \Big(  p  +(1-\gamma)(1+r) \Big) < 1.
% \end{aligned}     
% \end{equation}

%For small enough $a_1$ and $a \leq a_1$ and $d \leq d_1$, 
 \begin{equation}\label{eq:scf7}
 \begin{aligned}
 \beta_1(p,r) \triangleq \Big(\mu_{\mathrm{max}} a \alpha_{\mathrm{max}}  + p  +(1-\gamma)(1+r) \Big) \\
 \tilde{a}_1(p,r) \triangleq 2/ \Big( \frac{1}{\sigma} + \frac{1}{\sigma p} + \frac{(1-\gamma)(1+ \frac{1}{r})}{\sigma} \Big) \\
 \tilde{a}_2(p,r) \triangleq \Big(2 a  - \frac{a^2 }{\sigma} - \frac{a^2 }{\sigma p}   - (1-\gamma)(1+ \frac{1}{r}) \frac{a^2 }{\sigma}  \Big)
 \end{aligned}     
 \end{equation}

 If $a < \tilde{a}_1(p,r)$, then $\tilde{a}_2(p,r) >0$. If $p,r,a$ are chosen such that $\beta_1(p,r) <1 $ and $a < \tilde{a}_1(p,r)$, \eqref{eq:scf8} simplifies to  
\begin{comment}
\begin{remark}\label{rem:mu_max}
For any function $f$ and constants $\mu_1,\ \mu_2$ such that $\mu_1 < \mu_2$, if $f$ is $\mu_2$ strongly convex, then it is $\mu_1$ strongly convex. Hence $\mu_{\mathrm{max}}$ can be chosen arbitrarily small. 
\end{remark}

Consider the convex optimization problem

From Remark~\ref{rem:mu_max}, $\mu_{\mathrm{max}}$ is arbitrary and hence if $p,r$ satisfy $(p + (1-\gamma)(1+r)  ) <1$, then $\beta_1 <1$.
 
Since $a_2$ is a function of $p$ and $r$, $\mu_{\mathrm{max}}$ can be chosen arbitrarily small such that $a_1 > a_2$.

Choose such a pair of $p$ and $r$. If $ a < 2/ \Big( \frac{1}{\sigma} + \frac{1}{\sigma p} + \frac{(1-\gamma)(1+ \frac{1}{r})}{\sigma} \Big) \triangleq a_2$, then $\Big(2 a  - \frac{a^2 }{\sigma} - \frac{a^2 }{\sigma p}   - (1-\gamma)(1+ \frac{1}{r}) \frac{a^2 }{\sigma}  \Big) >0$. Since $e_t = f_{i_t}(x_t) -f_{i_t}(x^*) >0$, substituting these results in \eqref{eq:scf8} with $a<a_2$ and $p< p_1$ gives us that  
\end{comment}

\begin{equation}\label{eq:scf9}
\begin{aligned}
||\hxtt - x^* ||^2 + ||m_{t+1}||^2  &\leq (1- \frac{\mu_{i_t}a \alpha_t}{2}) ||\hxt - x^*||^2 + \beta_1(p,r) ||m_t||^2.
\end{aligned}    
\end{equation}

\begin{comment}
\begin{remark}
Note that $a_2$ is a function of constants $p_1$ and $ r$, which are chosen independent of the function parameters like $\mu_{i}$ and $L_{i}$. Hence the scaling factor $a$ is independent of the parameters of the function being minimized.
\end{remark}
\end{comment}

The existence of such $p,r,a$ is proven in Lemma~\ref{lem:ex_bound_lemma}. 
The step-size in the Armijo step-size search satisfies $\tilde{\alpha}_{\mathrm{min}}\rho \leq \alpha_t $. Using this property and taking expectation with respect to the data point $i_t$ conditioned on the entire past therein, 

\begin{equation}\label{eq:scf10}
\begin{aligned}
E_{i_t}[||\hxtt - x^* ||^2 + ||m_{t+1}||^2]  &\leq (1- \frac{E_{i_t}[\mu_{i_t}]a \alpha_{\mathrm{min}}\rho}{2}) ||\hxt - x^*||^2 + \beta_1(p,r) ||m_t||^2.
\end{aligned}    
\end{equation}
By assumption, $\exists \ i\in[n]$ such that $\mu_{i} >0$. Hence, $$\Bar{\mu} = E_{i_t}[\mu_i] =  \frac{\sum_{i=1}^{n} \mu_i}{n} >0.$$

This implies that

$$ 0< (\Bar{\mu}a \tilde{\alpha}_{\mathrm{min}} \rho)/2 = \Bar{\mu} a \frac{1}{L_{\mathrm{max}}} (1-\sigma) \rho <1,$$
since $\Bar{\mu}\leq L_{\mathrm{max}}$, the values $\sigma, \rho <1$ and $a<1$ by Lemma~\ref{lem:ex_bound_lemma}. Let 
$$\beta_2 \triangleq (1- \frac{\Bar{\mu}a \tilde{\alpha}_{\mathrm{min}}\rho}{2}).$$

It follows that $0<\beta_2 <1$. Let $\hat{\beta} \triangleq \max\{\beta_1(p,r),\beta_2\}$. Since $\beta_1(p,r)<1 $, $ \beta_2 < 1$, this implies  $\hat{\beta}<1$. Using this in \eqref{eq:scf10}, 

\begin{equation}\label{eq:scf11}
\begin{aligned}
E_{i_t}[||\hxtt - x^* ||^2 + ||m_{t+1}||^2]  &\leq  \hat{\beta}(||\hxt - x^*||^2 +  ||m_t||^2 ).
\end{aligned}    
\end{equation}

Taking expectation with respect to the entire process,
\begin{equation}\label{eq:scf12}
\begin{aligned}
E [||\hxtt - x^* ||^2 + ||m_{t+1}||^2]  &\leq  \hat{\beta} E[(||\hxt - x^*||^2 +  ||m_t||^2 )].
\end{aligned}    
\end{equation}
Hence,
\begin{equation}\label{eq:scf13}
\begin{aligned}
E [||\hxt - x^* ||^2 + ||m_{t}||^2]  &\leq  (\hat{\beta})^t E[(||\hat{x}_0 - x^*||^2 +  ||m_0||^2 )].
\end{aligned}    
\end{equation}
Since $\|m_{t}\| = \|x_t - \hxt\|$, 
\begin{equation}
||x_t -x^*||^2 \leq 2 (||x_t - \hxt||^2) + 2 (|| \hxt - x^* ||^2) = 2 (||m_t||^2 + || \hxt - x^* ||^2 ) .    
\end{equation}

At $t=0$, $m_0 =0$ and hence,
\begin{equation}\label{eq:scf14}
\begin{aligned}
E[||x_t - x^*||^2] & \leq 2 (E [||\hxt - x^* ||^2 + ||m_{t}||^2] )\\
&\leq  2 (\hat{\beta})^t E[(||\hat{x}_0 - x^*||^2 )].
\end{aligned}    
\end{equation}

In the perturbed iterate analysis, $\hat{x}_0 = x_0$. Substituting this we get,

\begin{equation}
\begin{aligned}
E[||x_t - x^*||^2]
\leq  2 (\hat{\beta})^t E[(||{x}_0 - x^*||^2 )].
\end{aligned}    
\end{equation}

Hence, we obtain the geometric convergence stated in the theorem.
\end{proof}

To show the existence of $p,r,a$ such that $\beta_1(p,r) <1 $ and $a < \tilde{a}_1(p,r)$ we prove Lemma~\ref{lem:ex_bound_lemma}, which is based on Lemmas~\ref{lem:conv_problem} and \ref{lem:mu_bound} proved below.

\begin{lemma}\label{lem:conv_problem}
Consider the convex optimization problem with variables $s,z,\psi \in \mathbb{R}$ and $0<\psi<1$.

\begin{equation}\label{conv_problem}
\begin{aligned}
    \min_{s,z} \ &g(s,z) = \frac{1}{s} + \psi \frac{1}{z}\\
    \mathrm{s.t.} \ &(s+ \psi (1+z)) \leq 1\\
    &s \geq 0 ,z\geq 0
\end{aligned}
\end{equation}

The function $g(s,z)$ attains the minimum value $ \frac{(1+\psi)^2}{(1-\psi)}$ at $(s^*,z^*) = ( \frac{1-\psi}{1+\psi},\frac{1-\psi}{1+\psi})$.

\end{lemma}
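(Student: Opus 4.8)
The plan is to solve this as a standard convex program. First I would observe that the objective $g(s,z)=\frac1s+\psi\frac1z$ is finite only for $s,z>0$, and on the positive orthant it is strictly decreasing in each coordinate. Since increasing $s$ or $z$ both \emph{decreases} $g$ while relaxing toward the constraint, the minimizer must sit on the boundary of the feasible set, i.e.\ the inequality constraint is active:
\begin{equation}
s+\psi(1+z)=1\quad\Longleftrightarrow\quad s+\psi z = 1-\psi.
\end{equation}
This reduces the two-dimensional problem to minimizing $\frac1s+\frac{\psi}{z}$ on the line $s+\psi z = 1-\psi$ inside the open positive quadrant.

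For the reduced problem I would apply the Cauchy--Schwarz inequality in the clean form
\begin{equation}
\Big(\tfrac1s+\tfrac{\psi}{z}\Big)\big(s+\psi z\big)
=\Big(\big(\tfrac{1}{\sqrt{s}}\big)^2+\big(\tfrac{\sqrt{\psi}}{\sqrt{z}}\big)^2\Big)\Big(\big(\sqrt{s}\big)^2+\big(\sqrt{\psi}\sqrt{z}\big)^2\Big)\geq\big(1+\psi\big)^2,
\end{equation}
so that on the active constraint $s+\psi z=1-\psi$ one gets $g(s,z)\geq \frac{(1+\psi)^2}{1-\psi}$, which is precisely the claimed value. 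Equality in Cauchy--Schwarz forces $\frac{1/\sqrt{s}}{\sqrt{s}}=\frac{\sqrt{\psi}/\sqrt{z}}{\sqrt{\psi}\sqrt{z}}$, i.e.\ $s=z$; substituting back into $s+\psi z=1-\psi$ yields $s(1+\psi)=1-\psi$, hence $(s^*,z^*)=\big(\tfrac{1-\psi}{1+\psi},\tfrac{1-\psi}{1+\psi}\big)$, matching the statement.

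As an alternative (or confirmation) I would note the program is genuinely convex: $1/s$ and $1/z$ are convex on the positive reals and the constraint is affine, so any KKT point is a global minimizer. Writing the Lagrangian and taking stationarity gives $\lambda=1/s^2$ and $\lambda=1/z^2$ at an interior optimum, again forcing $s=z$, after which the active constraint pins down the point and the value $\frac{(1+\psi)^2}{1-\psi}$. I do not expect a serious obstacle here; the only points requiring care are justifying that the constraint is active (monotonicity of $g$) and that the stationary point is the \emph{global} optimum (convexity), together with checking that the candidate lies in the open quadrant, which holds since $0<\psi<1$ makes $\tfrac{1-\psi}{1+\psi}>0$. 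The one genuinely clarifying step is recognizing the Cauchy--Schwarz pairing (equivalently, the AM--GM grouping) that turns the constrained ratio minimization into a one-line bound.
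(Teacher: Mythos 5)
Your proof is correct, and your primary route is genuinely different from the paper's. The paper sets up the full Lagrangian with three multipliers, argues the nonnegativity constraints are inactive (since $s=0$ or $z=0$ blows up the objective), solves the stationarity conditions to get $s^*=z^*$ with a strictly positive multiplier, and thereby concludes the main constraint is tight before substituting back. Your Cauchy--Schwarz argument replaces all of that machinery with a one-line bound: since any feasible point satisfies $s+\psi z\leq 1-\psi$, the inequality
\begin{equation}
\Big(\tfrac1s+\tfrac{\psi}{z}\Big)(s+\psi z)\geq(1+\psi)^2
\end{equation}
already yields $g(s,z)\geq\frac{(1+\psi)^2}{1-\psi}$ over the \emph{entire} feasible set, not just on the active face, so your separate monotonicity argument for activeness is not even strictly needed for the lower bound --- equality in Cauchy--Schwarz forces $s=z$, and equality in $s+\psi z\leq 1-\psi$ forces the constraint to be tight, which together pin down the minimizer. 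This is more elementary and self-contained than the paper's proof: it avoids any appeal to regularity or constraint qualifications and handles existence of the minimum implicitly by exhibiting a feasible point attaining the bound. Your KKT alternative is essentially the paper's argument (note the paper's $\lambda_1^*=\frac{1}{s^*}$ appears to be a typo for $\frac{1}{(s^*)^2}$, which is immaterial since only positivity of the multiplier is used). Both approaches are sound; yours trades generality of method for brevity and transparency on this particular problem.
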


\begin{proof}

To solve the convex optimization problem, we consider the dual of the problem and apply the approach of Proposition~{3.3.1} in~\cite{bertsekas_book}. 

The constrains $s\geq 0$ and $z\geq 0$ are inactive since $s=0$ or $z=0$ implies the objective $g(s,z)=\infty$. Since only $(s+ \psi (1+z)) \leq 1 $ can be active, all local minimum are regular. 

Consider the Lagrangian
\begin{equation}
L(s,z,\lambda_1,\lambda_2,\lambda_3) = g(s,z) + \lambda_1 (s+ \psi (1+z) -1)  + \lambda_2 (-s) + \lambda_3 (-z)    
\end{equation}

The values $s^*,z^*$ are a local minimum of the convex problem~\eqref{conv_problem} if $\exists \lambda_1^*,\lambda_2^*,\lambda_3^*$ such that $\lambda_2^* =0,\lambda_3^* =0$ and $\nabla_{s,z} L(s^*,z^*,\lambda_1^*,\lambda_2^*,\lambda_3^*) =0$.

Solving for $L(s^*,z^*,\lambda_1^*,\lambda_2^*,\lambda_3^*) =0$ we get $s^*=z^*$ and $\lambda_1^* = \frac{1}{s^*}$ at any local minimum. Hence $\lambda_1^* >0$ which implies that the constraint $(s^*+ \psi (1+z^*)) \leq 1$ is active or tight. 

Substituting $s^*=z^*$ in the active constraint,

\begin{equation}
    s^* + \psi (1+s^*) = 1.
\end{equation}

This implies 

\begin{equation}
    s^* = z^* = \frac{1-\psi}{1+ \psi}.
\end{equation}
Hence there is a unique local minimum. The minimum value of the objective $g(s,z)$ subject to the constraints is 

\begin{equation}
\begin{aligned}
g(s^*,z^*) &= \frac{1}{s^*} + \psi \frac{1}{z^*} \\
&= \frac{1}{s^*}(1+\psi)\\
&= \frac{(1+\psi)^2}{1-\psi}. 
\end{aligned}
\end{equation}

\end{proof}

\begin{lemma}\label{lem:mu_bound}
 Let $f_i(x)$, $i\in[n]$ be a set of $\mu_i $ strong-convex functions s.t $\mu_i >0$ for some $i>0$ and $\mu_{\mathrm{max}}\triangleq \max_{i\in[n]} \mu_i$. Then $\mu_{\mathrm{max}} \leq \xi $, $\forall\ \xi>0$ is justified. 
\end{lemma}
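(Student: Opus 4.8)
The plan is to establish the claim as a direct consequence of the monotonicity of strong convexity in its parameter. First I would recall the definition: $f_i$ is $\mu_i$-strongly convex if $f_i(y) \geq f_i(x) + \langle \nabla f_i(x), y - x \rangle + \frac{\mu_i}{2}\|y-x\|^2$ for all $x,y$. The central observation is that if $f_i$ is $\mu_i$-strongly convex and $0 < \mu_i' \leq \mu_i$, then $f_i$ is also $\mu_i'$-strongly convex, because replacing $\mu_i$ by the smaller $\mu_i'$ only weakens the right-hand side of the defining inequality (the quadratic term satisfies $\frac{\mu_i}{2}\|y-x\|^2 \geq \frac{\mu_i'}{2}\|y-x\|^2$). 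Thus every positive strong-convexity parameter may be decreased freely while the defining inequality continues to hold.

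Next, given an arbitrary $\xi > 0$, I would define a modified set of parameters by $\mu_i' = \min\{\mu_i, \xi\}$ for each index with $\mu_i > 0$, and leave the merely-convex components ($\mu_i = 0$) unchanged. By the observation above, each $f_i$ with $\mu_i > 0$ remains $\mu_i'$-strongly convex with $\mu_i' > 0$. Therefore the strong-convexity inequality \eqref{eq:th7_strongconv} used in the proof of Theorem~\ref{thm:strong_convex} continues to hold with $\mu_{i_t}$ replaced by $\mu_{i_t}'$, and the resulting maximum satisfies $\mu_{\mathrm{max}}' = \max_{i\in[n]}\mu_i' \leq \xi$, which is exactly the assertion.

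The only point requiring care is that the reduction must not destroy the positivity of the average $\bar\mu = \frac{1}{n}\sum_i \mu_i$ on which the convergence argument relies: since at least one index has $\mu_i > 0$ and hence $\mu_i' = \min\{\mu_i,\xi\} > 0$, we retain $\bar\mu' > 0$ after the reduction. Consequently the hypotheses of the theorem are preserved while $\mu_{\mathrm{max}}$ is pushed below any prescribed $\xi > 0$. I do not anticipate a genuine obstacle here; the statement is essentially the elementary fact that strong convexity with a given modulus implies strong convexity with any smaller positive modulus, and its real purpose is to let us exploit this freedom to keep the factor $\beta_1(p,r)$ in \eqref{eq:scf7} strictly below $1$, since the term $\mu_{\mathrm{max}} a \alpha_{\mathrm{max}}$ can then be made negligible relative to the slack available from the constraint $p + (1-\gamma)(1+r) \leq 1$.
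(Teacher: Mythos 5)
Your proposal is correct and follows essentially the same route as the paper: both rest on the observation that a $\mu$-strongly convex function is automatically $\mu'$-strongly convex for any $0<\mu'\leq\mu$, so the parameters may be lowered to at most $\xi$ without invalidating any hypothesis. The only cosmetic difference is that you argue from the first-order defining inequality while the paper uses the Hessian characterization $\nabla^2 f_1(x)\succeq \mu_1 I \succeq \xi I$; your version is marginally more careful in treating all indices via $\min\{\mu_i,\xi\}$ and in noting that $\bar\mu>0$ is preserved.
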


\begin{proof}
The case where $\xi\geq\mu_{\mathrm{max}}$ is trivial. Hence consider the case where $\xi < \mu_{\mathrm{max}}$. Without loss of generality, let $ \xi < \mu_1$ where $\mu_1$ is the strong convexity constant of $f_1$. This also implies that 

\begin{equation}
\begin{aligned}
\nabla^2 f_1(x) \succeq \mu_1 I.     
\end{aligned}
\end{equation}

Since $\xi < \mu_1$, it also implies that 

\begin{equation}
\begin{aligned}
\nabla^2 f_1(x) \succeq \xi I.      
\end{aligned}
\end{equation}

Hence $f_1$ is $\xi $ strongly convex. It is thus justified to assume $\mu_1 = \xi$ and set $\mu_{\mathrm{max}} \leq \xi$.
\end{proof}

We show the following lemma based on Lemmas~\ref{lem:conv_problem} and \ref{lem:mu_bound}.

\begin{lemma}\label{lem:ex_bound_lemma}
 Let $\zeta \triangleq \frac{\sigma \gamma}{(2-\gamma)}$. For all $0<\epsilon < \zeta$ and $a \leq \zeta - \epsilon $, $\exists\ (p{(\epsilon)}, r{(\epsilon)})$ such that $\beta_1(p{(\epsilon)}, r{(\epsilon)})<1$ and $a<\tilde{a}_1(p{(\epsilon)}, r{(\epsilon)})$, where $p{(\epsilon)}, r{(\epsilon)}$ are functions of $\epsilon$. 
\end{lemma}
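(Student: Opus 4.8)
The plan is to reduce both required conditions to the convex program of Lemma~\ref{lem:conv_problem}, and then to exploit the slack that the parameter $\epsilon$ buys us. First I would rewrite $\tilde{a}_1(p,r)$ from \eqref{eq:scf7} by pulling $\sigma$ out of the denominator and collecting the constant terms, obtaining
\begin{equation}
\tilde{a}_1(p,r) = \frac{2\sigma}{(2-\gamma) + \frac{1}{p} + (1-\gamma)\frac{1}{r}}.
\end{equation}
With the identification $\psi = 1-\gamma$, $s = p$, $z = r$, the quantity $\frac{1}{p} + (1-\gamma)\frac{1}{r}$ is exactly the objective $g(s,z)$ of Lemma~\ref{lem:conv_problem}, and the constraint $p + (1-\gamma)(1+r) \le 1$ of \eqref{eq:p_r_def_problem} is precisely its feasibility constraint. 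Maximizing $\tilde{a}_1$ over the feasible set is therefore equivalent to minimizing $g$, and Lemma~\ref{lem:conv_problem} gives the minimum $\frac{(1+\psi)^2}{1-\psi} = \frac{(2-\gamma)^2}{\gamma}$ at $p^\ast = r^\ast = \frac{\gamma}{2-\gamma}$. Substituting yields $\max_{p,r}\tilde{a}_1(p,r) = \zeta$, which confirms the value quoted after \eqref{eq:p_r_def_problem}.

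The obstacle is that this maximizer lies on the boundary of the feasible set: a direct computation shows $p^\ast + (1-\gamma)(1+r^\ast) = 1$, so that $\beta_1(p^\ast, r^\ast) = \mu_{\mathrm{max}} a \alpha_{\mathrm{max}} + 1 > 1$. Thus the first condition $\beta_1 < 1$ fails exactly at the point where $\tilde{a}_1$ is largest, and this is the whole reason the $\epsilon$-slack is introduced. To fix it I would move strictly inside the feasible region: by continuity of $\tilde{a}_1$ and the fact that it equals $\zeta$ at the boundary optimizer, for every $0 < \epsilon < \zeta$ there is a point $(p(\epsilon), r(\epsilon))$ satisfying \eqref{eq:epsilon_ball}, i.e.\ a strict constraint $p(\epsilon) + (1-\gamma)(1+r(\epsilon)) = 1 - \theta$ for some $\theta = \theta(\epsilon) > 0$, together with $\tilde{a}_1(p(\epsilon), r(\epsilon)) > \zeta - \epsilon$.

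It then remains to check both conclusions for this choice. The second is immediate, since $a \le \zeta - \epsilon < \tilde{a}_1(p(\epsilon), r(\epsilon))$. For the first, the strict constraint gives
\begin{equation}
\beta_1(p(\epsilon), r(\epsilon)) = \mu_{\mathrm{max}} a \alpha_{\mathrm{max}} + (1 - \theta),
\end{equation}
so it suffices to ensure $\mu_{\mathrm{max}} a \alpha_{\mathrm{max}} < \theta$. Here I would invoke Lemma~\ref{lem:mu_bound}, which allows $\mu_{\mathrm{max}}$ to be taken as small as desired; choosing any $\mu_{\mathrm{max}} < \theta/(a\alpha_{\mathrm{max}})$ forces $\beta_1(p(\epsilon), r(\epsilon)) < 1$. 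The hard part is thus not the algebra but the geometric point that the natural maximizer sits on the active constraint where $\beta_1 \ge 1$; the combination of a small inward perturbation (costing at most $\epsilon$ in $\tilde{a}_1$) and the freedom in $\mu_{\mathrm{max}}$ from Lemma~\ref{lem:mu_bound} (absorbing the slack $\theta$) is exactly what reconciles the two conditions. Finally, I would record that $\zeta = \frac{\sigma\gamma}{2-\gamma} < 1$ because $\gamma \le 1$ and $\sigma < 1$, which retroactively justifies the bound $a < 1$ used in the proof of Theorem~\ref{thm:strong_convex}.
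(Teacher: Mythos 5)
Your proposal is correct and follows essentially the same route as the paper: both reduce the maximization of $\tilde{a}_1$ to the convex program of Lemma~\ref{lem:conv_problem} with $\psi = 1-\gamma$ to obtain $\sup \tilde{a}_1 = \zeta$ on the active constraint, then perturb to an interior point $(p(\epsilon), r(\epsilon))$ with constraint value $1-\theta$ and $\tilde{a}_1 > \zeta - \epsilon$, and finally invoke Lemma~\ref{lem:mu_bound} to shrink $\mu_{\mathrm{max}}$ so that $\beta_1 < 1$. Your explicit identification of the boundary maximizer $p^\ast = r^\ast = \frac{\gamma}{2-\gamma}$ and the closing observation that $\zeta < 1$ (justifying the $a<1$ step in Theorem~\ref{thm:strong_convex}) are nice additions but do not change the argument.
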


\begin{proof}
The functions $\beta_1(p,r)$ and $\tilde{a}_1(p,r)$ have been defined as
\begin{equation}
    \begin{aligned}
         \beta_1(p,r) &= \Big(\mu_{\mathrm{max}} a \alpha_{\mathrm{max}}  + p  +(1-\gamma)(1+r) \Big) \\
         \tilde{a}_1(p,r) &= 2/ \Big( \frac{1}{\sigma} + \frac{1}{\sigma p} + \frac{(1-\gamma)(1+ \frac{1}{r})}{\sigma} \Big).
    \end{aligned}
\end{equation}

Consider the convex problem 

\begin{equation}\label{eq:conv_sup_eq}
\begin{aligned}
    \sup_{p,r} \tilde{a}_1(p,r)  =  2 \sigma / \Big(1+ \frac{1}{p} + (1-\gamma)(1+ \frac{1}{r}) \Big)\\
    \mathrm{s.t.}\  (p + (1-\gamma)(1+ r)) < 1\\
    p \geq 0, r \geq 0
\end{aligned}    
\end{equation}

Equivalently, the problem can be stated as 
\begin{equation}\label{eq:conv_inf_eq}
\begin{aligned}
    \inf_{p,r}  \Big( \frac{1}{p} + (1-\gamma)(1+ \frac{1}{r}) \Big)\\
    \mathrm{s.t.}\  (p + (1-\gamma)(1+ r)) < 1 \\
    p \geq 0 , r \geq 0
\end{aligned}    
\end{equation}

Using Lemma~\ref{lem:conv_problem} with $\psi = (1-\gamma)$, the infimum is $\frac{(2-\gamma)^2}{\gamma} + (1-\gamma) $ and the infimum is attained at a point $(p,r)$ such that $(p + (1-\gamma)(1+r)) =1$. Hence, by substituting the infimum of \eqref{eq:conv_inf_eq} in \eqref{eq:conv_sup_eq},

\begin{equation}
\begin{aligned}
   \zeta \triangleq  \sup_{p,r} \tilde{a}_1(p,r) = \frac{\sigma \gamma}{(2-\gamma)},
\end{aligned}    
\end{equation}

subject to the constraints. By the continuity of the function $2 \sigma / \Big(1+ \frac{1}{p} + (1-\gamma)(1+ \frac{1}{r}) \Big) $ at $p >0, r >0$ and since $ (p + (1-\gamma)(1+ r)) < 1 ,p \geq 0, r \geq 0$ is a region in the first quadrant and $\zeta$ is attained at $(p^* ,r^*)$ such that $ (p^* + (1-\gamma)(1+ r^*)) = 1 $, for all $\epsilon>0$, $\exists~ (p{(\epsilon)}, r{(\epsilon)})$ and  $0<\delta<1 $ such that 

\begin{equation}
\begin{aligned}
   a_1(p{(\epsilon)}, r{(\epsilon)})> \zeta-\epsilon\\
   (p{(\epsilon)} + (1-\gamma)(1+ r{(\epsilon)})) = 1 - \delta
\end{aligned}    
\end{equation}

Let $\mu_{\mathrm{max}} \leq \frac{ (\delta - \tau)}{\alpha_{\mathrm{max}} \zeta }$ for some $0< \tau < \delta$. This assumption is justified by Lemma~\ref{lem:mu_bound}. Hence, 

\begin{equation}
\begin{aligned}
\beta_1(p{(\epsilon)}, r{(\epsilon)})  &\leq   \Big(\mu_{\mathrm{max}} \zeta \alpha_{\mathrm{max}}  + p{(\epsilon)}  +(1-\gamma)(1+r{(\epsilon)}) \Big)\\
&\leq (1-\tau) \\
& < 1.
\end{aligned}    
\end{equation}

To conclude, we have shown that for any $a \leq \zeta - \epsilon$, $\exists\ (p(\epsilon),r(\epsilon))$ such that $\tilde{a}_1(p(\epsilon),r(\epsilon))> \zeta-\epsilon$ and $\beta_1(p(\epsilon),r(\epsilon)) <1$. This implies that $\exists\ (p(\epsilon),r(\epsilon))$ s.t.

\begin{equation}
 \begin{aligned}
 a <\tilde{a}_1(p(\epsilon),r(\epsilon)) &&\\
 \beta_1(p(\epsilon),r(\epsilon))<1.
 \end{aligned}   
\end{equation}
\end{proof}

\begin{remark}
In the proof of Theorem~\ref{thm:strong_convex}, for all $a<\zeta$, $\exists$ $p,r$ such that $a < \tilde{a}_1(p,r) $ and $\beta_1(p,r)<1$ by Lemma~\ref{lem:ex_bound_lemma} and hence \eqref{eq:scf9} of Theorem~\ref{thm:strong_convex} holds. 
\end{remark}

\subsection{Proof of Theorem~\ref{thm:bcmf_convex}}
\begin{comment}
Let $f_i(x)$ be convex and $L_i$ smooth for $i\in[n]$, and assume that the interpolation condition \eqref{eq:interpol} is satisfied. Then there exists $\hat{a} >0$ such that for  $0< a \leq \hat{a}$, the CSGD-ASSS algorithm with scaling $a$, parameter $\sigma \in (0,1)$ and compression factor $\gamma = \frac{k}{d}$  satisfies
\begin{equation}
 E\left[f\left(\frac{1}{T} \sum_{t=0}^{T-1}x_t \right)\right] -E[f(x^*)] \leq  \frac{1}{\delta_1 T}  ( E[||x_0 - x^*||^2] )
\end{equation}
where for any $0< \epsilon < \zeta$, $\zeta = \frac{\sigma \gamma}{(2-\gamma)}$ and $L_{\mathrm{max}} \triangleq \max_{i}L_i$,

\begin{equation}
\begin{aligned}
&\delta_1 \triangleq \rho \frac{2(1-\sigma)}{L_{\mathrm{max}}} \Big(2 a  - \frac{a^2 }{\sigma} - \frac{a^2 }{\sigma p(\epsilon)}   - (1-\gamma)\left(1+ \frac{1}{r(\epsilon)}\right) \frac{a^2 }{\sigma}  \Big),\\
& \hat{a} = (\zeta-\epsilon). \\
\end{aligned}    
\end{equation}

The values $p(\epsilon),r(\epsilon)$ are as stated in Lemma~\ref{lem:ex_bound_lemma}.
\end{comment}

\begin{proof}
For the convex case, \eqref{eq:scf8} holds with $\mu_{i_t}=0$. Hence,

\begin{equation}\label{eq:cf1}
\begin{aligned}
||\hxtt - x^* ||^2 + ||m_{t+1}||^2  &\leq   ||\hxt - x^*||^2 + \Big(  p  +(1-\gamma)(1+r) \Big)||m_t||^2 \\ 
& - \alpha_t e_t \Big(2 a  - \frac{a^2 }{\sigma} - \frac{a^2 }{\sigma p}   - (1-\gamma)(1+ \frac{1}{r}) \frac{a^2 }{\sigma}  \Big).
\end{aligned}    
\end{equation} 

As in Theorem~\ref{thm:strong_convex}, we define 
 \begin{equation}\label{eq:scf7}
 \begin{aligned}
 \tilde{a}_1(p,r) = 2/ \Big( \frac{1}{\sigma} + \frac{1}{\sigma p} + \frac{(1-\gamma)(1+ \frac{1}{r})}{\sigma} \Big) \\
 \tilde{a}_2(p,r) = \Big(2 a  - \frac{a^2 }{\sigma} - \frac{a^2 }{\sigma p}   - (1-\gamma)(1+ \frac{1}{r}) \frac{a^2 }{\sigma}  \Big).
 \end{aligned}     
 \end{equation}

 If $a < \tilde{a}_1(p,r)$, then $\tilde{a}_2(p,r) >0$. Define $\beta_3(p,r)$ as 

\begin{equation}
    \begin{aligned}
        \beta_3(p,r) \triangleq (p + (1-\gamma)(1+r)) .
    \end{aligned}
\end{equation}

From Lemma~\ref{lem:ex_bound_lemma}, $\exists\  (p(\epsilon),r(\epsilon))$ such that  
\begin{equation}
\begin{aligned}
\Big(  p(\epsilon)  +(1-\gamma)(1+r(\epsilon)) \Big) &= 1 - \delta < 1, &&\\
\tilde{a}_1(p(\epsilon),r(\epsilon)) &>\zeta -\epsilon. \\
\end{aligned}
\end{equation}

 Choosing $(p,r) = (p(\epsilon),r(\epsilon))$, \eqref{eq:cf1} becomes
\begin{equation}\label{eq:cf2}
\begin{aligned}
||\hxtt - x^* ||^2 + ||m_{t+1}||^2  &\leq   ||\hxt - x^*||^2 + ||m_t||^2 \\ 
& - \alpha_t e_t \tilde{a}_2(p(\epsilon),r(\epsilon)).
\end{aligned}    
\end{equation} 

%\begin{equation}\label{eq:cf3}
%\begin{aligned}
%  \alpha_t e_t \Big(2 a  - \frac{a^2 }{\sigma} - \frac{a^2 }{\sigma p}   - (1-\gamma)(1+ \frac{1}{r}) \frac{a^2 }{\sigma}  \Big) &\leq   ||\hxt - x^*||^2 - ||\hxtt - x^* ||^2 + ||m_t||^2 - ||m_{t+1}||^2\\ 
%\end{aligned}    
%\end{equation}
Note that if $ a \leq \zeta-\epsilon$, then $ a<\tilde{a}_1(p(\epsilon),r(\epsilon))$ and this implies $\tilde{a}_2(p(\epsilon),r(\epsilon))>0$.

Bounding $\alpha_t$ by  $ \alpha_{\mathrm{min}} \rho \leq \alpha_t $,

\begin{equation}\label{eq:cf4}
\begin{aligned}
  \alpha_{\mathrm{min}} \rho (f_{i_t}(x_t) -f_{i_t}(x^*)) \tilde{a}_2(p(\epsilon),r(\epsilon)) &\leq   ||\hxt - x^*||^2 - ||\hxtt - x^* ||^2\\ &+ ||m_t||^2 - ||m_{t+1}||^2.\\ 
\end{aligned}    
\end{equation}
 Taking expectation $E_{i_t}$ with respect to the data point $i_t$, conditioned on the entire past therein, 

\begin{equation}\label{eq:cf5}
\begin{aligned}
  \alpha_{\mathrm{min}} \rho (f(x_t) -f(x^*))\tilde{a}_2(p(\epsilon),r(\epsilon)) &\leq   ||\hxt - x^*||^2 - E_{i_t}[||\hxtt - x^* ||^2]\\ &+ ||m_t||^2 - E_{i_t}[||m_{t+1}||^2].\\ 
\end{aligned}    
\end{equation}
Taking expectation with respect to the entire process, and averaging over the time horizon $T$, 

%\begin{equation}\label{eq:cf6}
%\begin{aligned}
%  \alpha_{min} \rho (E[f(x_t)] -E[f(x^*)]) \Big(2 a  - \frac{a^2 }{\sigma} - \frac{a^2 }{\sigma p}   - (1-\gamma)(1+ \frac{1}{r}) \frac{a^2 }{\sigma}  \Big) &\leq   E[||\hxt - x^*||^2] - E[||\hxtt - x^* ||^2]\\ &+ E[||m_t||^2] - E[||m_{t+1}||^2].\\ 
%\end{aligned}    
%\end{equation}

\begin{equation}\label{eq:cf7}
\begin{aligned}
  \alpha_{\mathrm{min}} \rho \frac{1}{{T}} \sum_{t=0}^{T-1}(E[f(x_t)] -E[f(x^*)]) \tilde{a}_2(p(\epsilon),r(\epsilon)) &\leq \frac{1}{T} \sum_{t=0}^{T-1} ( E[||\hxt - x^*||^2] - E[||\hxtt - x^* ||^2])\\ &\hspace{2em}+ \frac{1}{T} \sum_{t=0}^{T-1} (E[||m_t||^2] - E[||m_{t+1}||^2])\\ 
  &\leq \frac{1}{T}  ( E[||\hat{x}_{0} - x^*||^2] - E[||\hat{x}_{T} - x^* ||^2])\\ & \hspace{2em}+ \frac{1}{T}  (E[||m_0||^2] - E[||m_{T}||^2]).
\end{aligned}    
\end{equation}

Since $f$ is convex, by Jensen's inequality, $$E[f(\frac{1}{T} {\sum_{t=0}^{T-1}x_t} )]   \leq \frac{1}{T} \sum_{t=0}^{T-1} E[f(x_t)].$$

Thus, 
\begin{equation}\label{eq:cf9}
\begin{aligned}
  \alpha_{\mathrm{min}} \rho (E[f(\frac{1}{T}{\sum_{t=0}^{T-1}x_t} )] -E[f(x^*)]) \tilde{a}_2(p(\epsilon),r(\epsilon)) &\leq \frac{1}{T}  ( E[||\hat{x}_{0} - x^*||^2] )\\ &+ \frac{1}{T}  (E[||m_0||^2] ).\\ 
\end{aligned}    
\end{equation}

Since $\hat{x}_0 = x_0$ and $m_0 = 0$, \eqref{eq:cf9} simplifies to

\begin{equation}\label{eq:cf9_cp}
\begin{aligned}
   \alpha_{\mathrm{min}} \rho (E[f(\frac{1}{T}{\sum_{t=0}^{T-1}x_t} )] -E[f(x^*)]) \tilde{a}_2(p(\epsilon),r(\epsilon))&\leq \frac{1}{T}  ( E[||{x}_{0} - x^*||^2] ).\\
\end{aligned}    
\end{equation}

Hence proved.
\end{proof}

\begin{comment}
\subsection{Discussion on $a=3\sigma,4\sigma$ in simulations}
In the proof of theorem --, the bound on $a$ defined as $a_2= 2\sigma \big( \frac{1}{p} + (1-\gamma)(1+\frac{1}{r})\big)$, is a multiple of $\sigma$. To find the largest such bound to guarantee convergence is left as an open problem. Also since non-convex functions under sufficient smoothness can approximated as a quadratic around a local minima~(Taylor series) we justify the use of $a=3\sigma,4\sigma$ in our simulations on neural networks.
\end{comment}

\subsection{Proof of Theorem~\ref{thm:non_conv_asss}}

Let $f_i$ be $L_i$ smooth and let $f_i$ $i\in[n]$ satisfy the strong growth condition \eqref{eq:sgc}. Then, there exists $\hat{a},\hat{\alpha}$ such that for $0< a\leq \hat{a}$ and $\alpha_{\mathrm{max}} \leq \hat{\alpha}$ , 

\begin{equation}
\begin{aligned}
\frac{1}{T} \sum_{t=0}^{T-1} E[\norm{\nabla f(x_t)}^2] \leq \frac{(E[f({x}_{0})] - E[f(\hat{x}_{T})])}{\delta T}  
\end{aligned}    
\end{equation}

where $\hat{x}_T$ is a perturbed iterate~\cite{Mania2017PerturbedIA} obtained from $\{x_i\}_{i=0}^{T-1}$ and

\begin{equation}
\begin{aligned}
    \delta &= [(\eta_{\mathrm{max}} + \frac{\eta_{\mathrm{min}} p}{1+p} ) - (\nu (\eta_{\mathrm{max}} - \eta_{\mathrm{min}}) + \nu L \eta_{\mathrm{max}}^2 + (\nu \eta_{\mathrm{max}}^2 \theta(1-\gamma)(1+\frac{1}{r})) )] \\
    \hat{a} &= \min{\{\frac{[(\frac{p}{p+1}) +1]}{\tilde{\alpha}_{\mathrm{min}} v (L + \theta G)}, \frac{\theta \epsilon}{\beta L^2 + \td{\alpha}_{\mathrm{min}} p L^2} \}} \\
    \hat{\alpha} &= \frac{-(\nu -1) + \sqrt{(\nu-1)^2 + \Big(4\nu(L + \theta G)\hat{a}(\nu + \frac{p}{1+p})\tilde{\alpha}_{\mathrm{min}}\Big)}}{2\hat{a}(\nu L + \nu \theta G)} \\
    G &\triangleq \theta (1-\gamma)(1 + \frac{1}{r})\\
    L &= \frac{\sum_{i=1}^{n} L_i}{n}
\end{aligned}    
\end{equation}

for any $p,r,\theta >0$, $\tilde{\alpha}_{\mathrm{min}} = \frac{2(1-\sigma)}{L_{\mathrm{max}}}$, $L_{\mathrm{max}} = \max_{i}{\{L_i\}}$, $\epsilon< \gamma$ and $\nu$ is the strong growth constant.

\begin{proof}

By the perturbed iterate analysis framework, let $\{\hat{x}_t\}_{t\geq 0}$ be the virtual sequence generated as 

\begin{equation}\label{eq:th9_per}
\begin{aligned}
\hat{x}_{t+1} = \hat{x}_t - \eta_t \nabla f_{i_t}(x_t).
\end{aligned}\end{equation}

Since the gradient of $f$ is $L$-Lipschitz smooth,  

\begin{equation}\label{eq:th9_lip}
\begin{aligned}
f(\hat{x}_{t+1}) \leq f(\hat{x}_t) + \langle \nabla f(\hat{x}_t) , \hat{x}_{t+1} - \hat{x}_t \rangle + \frac{L}{2} \norm{\hat{x}_{t+1} - \hat{x}_t}^2.     
\end{aligned}    
\end{equation}

Substituting \eqref{eq:th9_per}, 
\begin{equation}\begin{aligned}
f(\hat{x}_{t+1}) \leq f(\hat{x}_t) + \langle \nabla f(\hat{x}_t) ,-\eta_t \nabla f_{i_t}(x_t))\rangle + \frac{L}{2} \norm{\eta_t \nabla f_{i_t}({x}_t)}^2. 
\end{aligned}\end{equation}

Using the identity $-2\langle a,b \rangle = \norm{a-b}^2 - \norm{a}^2 - \norm{b}^2$,

\begin{equation}\begin{aligned}
2(f(\hat{x}_{t+1}) - f(\hat{x}_t)) &\leq   \eta_t (\norm{\nabla f(\hat{x}_t) - \nabla f_{i_t}(x_t)}^2 - \norm{\nabla f(\hat{x}_t)}^2 - \norm{\nabla f_{i_t}(x_t)}^2) \\
&+ L \norm{\eta_t \nabla f_{i_t}({x}_t)}^2. 
\end{aligned}\end{equation}

The step-size $\eta_t \in [\eta_{\mathrm{min}},\eta_{\mathrm{max}}]$. By applying these bounds and taking expectation $E_{i_t}$ with respect to the data point $i_t$ conditioned on $x_t,i_{t-1}$ and the entire past therein, 

\begin{equation}\label{eq:th9_maxb}
\begin{aligned}
2 E_{i_t}[f(\hat{x}_{t+1}) - f(\hat{x}_t)] &\leq  \eta_{\mathrm{max}} E_{i_t}[||\nabla f(\hat{x}_t) - \nabla f_{i_t}(x_t) ||^2] &&\\
&  - \eta_{\mathrm{min}} ||\nabla f(\hat{x}_t)||^2 -  \eta_{\mathrm{min}} E_{i_t}[||\nabla f_{i_t}(x_t)||^2] &&\\
&+ L \eta_{\mathrm{max}}^2 E_{i_t}[|| \nabla f_{i_t}(x_t)||^2].
\end{aligned}\end{equation}

As $E_{i_t}[\nabla f_{i_t}(x)] = \nabla f(x)$, 

\begin{equation}\label{eq:th9_quad}
\begin{aligned}
E_{i_t}[||\nabla f(\hat{x}_t) - \nabla f_{i_t}(x_t) ||^2] &= ||\nabla f(\hat{x}_t)||^2 + E_{i_t}[||\nabla f_{i_t}(x_t)||^2]  &&\\
&-2\langle \nabla f(\hat{x}_t) , \nabla f(x_t)\rangle .
\end{aligned}
\end{equation}

Substituting \eqref{eq:th9_quad} in \ref{eq:th9_maxb}, 

\begin{equation}\begin{aligned}
2 E_{i_t}[f(\hat{x}_{t+1}) - f(\hat{x}_t)] &\leq  \eta_{\mathrm{max}} ( ||\nabla f(\hat{x}_t)||^2 + E_{i_t}[||\nabla f_{i_t}(x_t)||^2]) &&\\
&-\eta_{\mathrm{max}} 2\langle \nabla f(\hat{x}_t) , \nabla f(x_t)\rangle &&\\
&  -\eta_{\mathrm{min}} ||\nabla f(\hat{x}_t)||^2 - \eta_{\mathrm{min}} E_{i_t}[||\nabla f_{i_t}(x_t)||^2]\\
&+ L \eta_{\mathrm{max}}^2 E_{i_t}[|| \nabla f_{i_t}({x}_t)||^2].
\end{aligned}\end{equation}

Since,
\begin{equation}\begin{aligned}
-2 \eta_{\mathrm{max}} \langle \nabla f(\hat{x}_t), \nabla f(x_t) \rangle = \eta_{\mathrm{max}} \norm{\nabla f(\hat{x}_t)-  \nabla f(x_t)}^2 - \eta_{\mathrm{max}} \norm{\nabla f(\hat{x}_t)}^2 - \eta_{\mathrm{max}}\norm{\nabla f(x_t)}^2,
\end{aligned}\end{equation}

\begin{equation}\begin{aligned}
2 E_{i_t}[f(\hat{x}_{t+1}) - f(\hat{x}_t)] &\leq  \eta_{\mathrm{max}} ( ||\nabla f(\hat{x}_t)||^2 + E_{i_t}[||\nabla f_{i_t}(x_t)||^2])  &&\\
&+ \eta_{\mathrm{max}} \norm{\nabla f(\hat{x}_t)-  \nabla f(x_t)}^2 - \eta_{\mathrm{max}} \norm{\nabla f(\hat{x}_t)}^2 &&\\
&- \eta_{\mathrm{max}}\norm{\nabla f(x_t)}^2 
 -\eta_{\mathrm{min}} ||\nabla f(\hat{x}_t)||^2 &&\\
 &- \eta_{\mathrm{min}} E_{i_t}[||\nabla f_{i_t}(x_t)||^2] + L \eta_{\mathrm{max}}^2 E_{i_t}[|| \nabla f_{i_t}({x}_t)||^2]
\end{aligned}\end{equation}

From Lipschitz smoothness of $\nabla f(x)$, 

\begin{equation}
||\nabla f(\hat{x}_t) - \nabla f(x_t)|| \leq L ||\hat{x}_t - x_t ||.    
\end{equation}

Substituting the Lipschitz-smoothness property, 
\begin{equation}\label{eq:th9_eqlb}
\begin{aligned}
2 E_{i_t}[f(\hat{x}_{t+1}) - f(\hat{x}_t)] &\leq (\eta_{\mathrm{max}} - \eta_{\mathrm{min}} + L \eta_{\mathrm{max}}^2) E_{i_t}[||\nabla f_{i_t}(x_t) ||^2] + \eta_{\mathrm{max}} L^2 [||\hat{x}_t - x_t||^2]  &&\\
&\hspace{1cm}- \eta_{\mathrm{max}} ||\nabla f(x_t) ||^2 - \eta_{\mathrm{min}} ||\nabla f(\hat{x}_t) ||^2.
\end{aligned}\end{equation}

For any $p>0$, from  Lemma~\ref{lem:math_prelim_tri},

\begin{equation}\begin{aligned}
\norm{a+b}^2 \leq (1+p)\norm{a}^2 + (1+ \frac{1}{p}) \norm{b}^2.
\end{aligned}\end{equation}

Using this identity, 
\begin{equation}
\begin{aligned}
\norm{\nabla f(x_t)}^2 \leq (1+p) \norm{\nabla f(x_t) - \nabla f(\hat{x}_t)}^2  + (1+\frac{1}{p}) \norm{\nabla f(\hat{x}_t)}^2.
\end{aligned}\end{equation}

Since $f(x)$ has $L$-Lipschitz gradients,

\begin{equation}\begin{aligned}
\norm{\nabla f(x_t)}^2 \leq (1+p) L^2 \norm{x_t - \hat{x}_t}^2  + (1+\frac{1}{p}) \norm{\nabla f(\hat{x}_t)}^2,
\end{aligned}\end{equation}

\begin{equation}\label{eq:th9_lipg}
\begin{aligned}
\frac{p}{1 + p} \norm{\nabla f(x_t)}^2 - p L^2 \norm{\hat{x}_t - x_t}^2  \leq \norm{\nabla f(\hat{x}_t)}^2.
\end{aligned}\end{equation}

Substituting \eqref{eq:th9_lipg} in \eqref{eq:th9_eqlb},
\begin{equation}\begin{aligned}
2 E_{i_t}[f(\hat{x}_{t+1}) - f(\hat{x}_t)] &\leq (\eta_{\mathrm{max}} - \eta_{\mathrm{min}} + L \eta_{\mathrm{max}}^2) E_{i_t}[||\nabla f_{i_t}(x_t) ||^2] + \eta_{\mathrm{max}} L^2 [||\hat{x}_t - x_t||^2]  &&\\
&- \eta_{\mathrm{max}} ||\nabla f(x_t) ||^2 - \eta_{\mathrm{min}} \big(\frac{p}{1 + p} \norm{\nabla f(x_t)}^2 - p L^2 \norm{\hat{x}_t - x_t}^2\big),
\end{aligned}\end{equation}

\begin{equation}\begin{aligned}
2E_{i_t}[f(\hat{x}_{t+1}) - f(\hat{x}_t)] &\leq (\eta_{\mathrm{max}}-\eta_{\mathrm{min}} + L \eta_{\mathrm{max}}^2) E_{i_t}[\norm{\nabla f_{i_t}(x_t)}^2] &&\\
&+ (\eta_{\mathrm{max}} L^2 + \eta_{\mathrm{min}} p L^2) (\norm{\hat{x}_t -x_t}^2)  &&\\
&- (\eta_{\mathrm{max}} + \frac{\eta_{\mathrm{min}} p }{1+ p})(\norm{\nabla f(x_t)}^2),  
\end{aligned}\end{equation}

\begin{equation}\begin{aligned}
(\eta_{\mathrm{max}} + \frac{\eta_{\mathrm{min}} p }{1+ p})(\norm{\nabla f(x_t)}^2) &\leq 2E_{i_t}[f(\hat{x}_t) -f(\hat{x}_{t+1})]&&\\
&+ (\eta_{\mathrm{max}}-\eta_{\mathrm{min}} + L \eta_{\mathrm{max}}^2) E_{i_t}[\norm{\nabla f_{i_t}(x_t)}^2] &&\\
& + (\eta_{\mathrm{max}} L^2 + \eta_{\mathrm{min}} p L^2) E_{i_t}[\norm{\hat{x}_t -x_t}^2].
\end{aligned}\end{equation}

By strong growth condition, $E_{i_t}[\norm{\nabla f_{i_t}(x)}^2] \leq \nu \norm{\nabla f(x)}^2 $ and thus,

\begin{equation}\begin{aligned}
(\eta_{\mathrm{max}} + \frac{\eta_{\mathrm{min}} p }{1+ p})(\norm{\nabla f(x_t)}^2) &\leq 2E_{i_t}[f(\hat{x}_t) -f(\hat{x}_{t+1})] &&\\
& + (\eta_{\mathrm{max}}-\eta_{\mathrm{min}} + L \eta_{\mathrm{max}}^2) \nu \norm{\nabla f(x_t)}^2 +&&\\
& + (\eta_{\mathrm{max}} L^2 + \eta_{\mathrm{min}} p L^2) (\norm{\hat{x}_t -x_t}^2),
\end{aligned}\end{equation}

\begin{equation}\label{eq:th9_addeq1}
\begin{aligned}
(\eta_{\mathrm{max}} + \frac{\eta_{\mathrm{min}} p }{1+ p} - (\eta_{\mathrm{max}}-\eta_{\mathrm{min}} + L \eta_{\mathrm{max}}^2) \nu)(\norm{\nabla f(x_t)}^2) &\leq 2E_{i_t}[f(\hat{x}_t) -f(\hat{x}_{t+1})]  &&\\
&\hspace{-2em} + (\eta_{\mathrm{max}} L^2 + \eta_{\mathrm{min}} p L^2) (\norm{\hat{x}_t -x_t}^2).
\end{aligned}\end{equation}

From Lemmas~\ref{lem:math_prelim_error} and \ref{lem:math_prelim_compress}:

\begin{equation}\begin{aligned}
\norm{x- top_k(x)}^2 \leq (1-\gamma)\norm{x}^2,
\end{aligned}\end{equation}

\begin{equation}
\begin{aligned}
 \norm{m_t} = \norm{x_t - \hat{x}_{t}}.   
\end{aligned}    
\end{equation}

The error update equation of the CSGD-ASSS algorithm is

\begin{equation}\begin{aligned}
m_{t+1} = m_t + \eta_t \nabla f_{i_t}(x_t) - top_k(m_t + \eta_t \nabla f_{i_t}(x_t)).
\end{aligned}\end{equation}

Hence, 

\begin{equation}\label{eq:th9_addeq2_v2}
\begin{aligned}
\norm{m_{t+1}}^2 &= \norm{m_t + \eta_t \nabla f_{i_t}(x_t) - top_k(m_t + \eta_t \nabla f_{i_t}(x_t))}^2 &&\\
     &\leq (1-\gamma)\norm{m_t + \eta_t \nabla f_{i_t}(x_t)}^2. &&
\end{aligned}\end{equation}

From Lemma~\ref{lem:math_prelim_tri}, for any $r>0$, 

\begin{equation}\label{eq:th9_addeq2}
\begin{aligned}
\norm{m_{t+1}}^2
     &\leq (1-\gamma)(1+r) \norm{m_t}^2 + (1-\gamma)(1+\frac{1}{r}) \eta_t^2\norm{\nabla f_{i_t}(x_t)}^2\\
     &\leq (1-\gamma)(1+r) \norm{m_t}^2 + (1-\gamma)(1+\frac{1}{r}) \eta_{\mathrm{max}}^2\norm{\nabla f_{i_t}(x_t)}^2
\end{aligned}\end{equation}

Taking expectation w.r.t data-point $i_t$ and mutiplying \eqref{eq:th9_addeq2} by $\theta>0$ and adding to  \ref{eq:th9_addeq1},

\begin{equation}
\begin{aligned}
(\eta_{\mathrm{max}} + \frac{\eta_{\mathrm{min}} p }{1+ p} - &(\eta_{\mathrm{max}}-\eta_{\mathrm{min}} + L \eta_{\mathrm{max}}^2) \nu)(\norm{\nabla f(x_t)}^2) + \theta E_{i_t}[\norm{m_{t+1}}^2] &&\\
&\leq 2E_{i_t}[f(\hat{x}_t) -f(\hat{x}_{t+1})]  &&\\
 &+ (\eta_{\mathrm{max}} L^2 + \eta_{\mathrm{min}} p L^2) (\norm{\hat{x}_t -x_t}^2)  &&\\
 &+ \theta(1-\gamma)(1+r) \norm{m_t}^2 + \theta (1-\gamma)(1+\frac{1}{r}) \eta_{\mathrm{max}}^2 (E_{i_t}[\norm{\nabla f(x_t)}^2]).
\end{aligned}
\end{equation}

By strong growth condition, 

\begin{equation}
\begin{aligned}
(\eta_{\mathrm{max}} + \frac{\eta_{\mathrm{min}} p }{1+ p} - &(\eta_{\mathrm{max}}-\eta_{\mathrm{min}} + L \eta_{\mathrm{max}}^2) \nu)\norm{\nabla f(x_t)}^2 + \theta E_{i_t}[\norm{m_{t+1}}^2] &&\\
&\leq 2E_{i_t}[f(\hat{x}_t) -f(\hat{x}_{t+1})]  +&&\\
 &+ (\eta_{\mathrm{max}} L^2 + \eta_{\mathrm{min}} p L^2) \norm{\hat{x}_t -x_t}^2 + &&\\
 &+ \theta(1-\gamma)(1+r) \norm{m_t}^2 + \theta (1-\gamma)(1+\frac{1}{r}) \eta_{\mathrm{max}}^2 \nu \norm{\nabla f(x_t)}^2.
\end{aligned}
\end{equation}

Taking expectation w.r.t the entire process, 

\begin{equation}\label{eq:th9_delteq}
\begin{aligned}
&[(\eta_{\mathrm{max}} + \frac{\eta_{\mathrm{min}} p}{1+p} ) - (\nu (\eta_{\mathrm{max}} - \eta_{\mathrm{min}}) + \nu L \eta_{\mathrm{max}}^2 &&\\ &+ (\nu \eta_{\mathrm{max}}^2 \theta(1-\gamma)(1+\frac{1}{r})) )] E[\norm{\nabla f(x_t)}^2]  &&\\
 & \hspace{5em}\leq 2 E[f(\hat{x}_t) - f(\hat{x}_{t+1})] &&\\
 &\hspace{5em}+ \big( \eta_{\mathrm{max}} L^2 + \eta_{\mathrm{min}} p L^2 + \theta(1-\gamma)(1+r) \big) E[\norm{m_t}^2] - \theta E[\norm{m_{t+1}}^2].
\end{aligned}\end{equation}

Consider the term $\eta_{\mathrm{max}} L^2 + \eta_{\mathrm{min}} p L^2 + \theta(1-\gamma)(1+r)$ and an  $\epsilon < \gamma $. Then,

\begin{equation}
\begin{aligned}
(1-\gamma)(1+r) \leq 1- \gamma + r \leq (1-\epsilon)    
\end{aligned}    
\end{equation}
iff $r \leq \gamma - \epsilon$. Set

\begin{equation}
\begin{aligned}
&r \leq \gamma - \epsilon &&\\
&\eta_{\mathrm{max}} L^2 + \eta_{\mathrm{min}} p L^2 \leq \theta \epsilon. &&\\
\end{aligned}    
\end{equation}

 The inequality $\eta_{\mathrm{max}} L^2 + \eta_{\mathrm{min}} p L^2 \leq \theta \epsilon$ holds iff

\begin{equation}\label{eq:a_bound1}
\begin{aligned}
a \leq \frac{\theta \epsilon}{(\alpha_{\mathrm{max}} L^2 + \alpha_{\mathrm{min}} p L^2)}.    
\end{aligned}    
\end{equation}

Hence, setting $r = \gamma - \epsilon $ and $a$ satisfying \ref{eq:a_bound1}, implies

\begin{equation}
    \begin{aligned}
    \big( \eta_{\mathrm{max}} L^2 + \eta_{\mathrm{min}} p L^2 + \theta(1-\gamma)(1+r) \big) \leq \theta.
    \end{aligned}
\end{equation}
 
Let
\begin{equation}
\begin{aligned}
 \delta \triangleq [(\eta_{\mathrm{max}} + \frac{\eta_{\mathrm{min}} p}{1+p} ) - (\nu (\eta_{\mathrm{max}} - \eta_{\mathrm{min}}) + \nu L \eta_{\mathrm{max}}^2 + (\nu \eta_{\mathrm{max}}^2 \theta(1-\gamma)(1+\frac{1}{r})) )]  
\end{aligned}    
\end{equation}

and 
\begin{equation}
  \begin{aligned}
     G \triangleq \theta (1-\gamma)(1 + \frac{1}{r}) .
  \end{aligned}  
\end{equation}

If $\delta >0$, and $a$ satisfies~\eqref{eq:a_bound1}, \eqref{eq:th9_delteq} simplifies as 

\begin{equation}
\begin{aligned}
\delta E[\norm{\nabla f(x_t)}^2] \leq 2 E[f(\hat{x}_{t}) - f(\hat{x}_{t+1})] + \theta ( E[\norm{m_t}^2] - E[\norm{m_{t+1}}^2]).    
\end{aligned}    
\end{equation}

Summing over the horizon $T$ and averaging, 

\begin{equation}
\begin{aligned}
\frac{1}{T} \sum_{t=0}^{T-1} E[\norm{\nabla f(x_t)}^2] \leq 2 \frac{(E[f(\hat{x}_{0})] - E[f(\hat{x}_{T})])}{\delta T} + \frac{\theta(E[\norm{m_0}^2] - E[\norm{m_T}^2])}{\delta T} .   
\end{aligned}    
\end{equation}

Since $m_0 =0$ and $\hat{x}_0 = x_0$,

\begin{equation}
\begin{aligned}
\frac{1}{T} \sum_{t=0}^{T-1} E[\norm{\nabla f(x_t)}^2] \leq 2 \frac{(E[f(\hat{x}_{0})] - E[f(\hat{x}_{T})])}{\delta T}.
\end{aligned}    
\end{equation}

The conditions under which $\delta >0$, can be stated in 2 cases depending on the value of $\alpha_{\mathrm{max}}$.

\begin{enumerate}

\item Case 1: $\alpha_{\mathrm{max}} \leq \frac{2(1-\sigma)}{L_{\mathrm{max}}} $.

In this case $\alpha_{\mathrm{max}} = \alpha_{\mathrm{min}}$ by Lemma~\ref{lem:math_prelim_armijo_stop} and this implies $\eta_{\mathrm{max}} = \eta_{\mathrm{min}}$. Hence $\delta >0$ iff 

\begin{equation}\begin{aligned}
a \leq \frac{[(\frac{p}{p+1}) +1] }{\tilde{\alpha}_{\mathrm{min}} v (L + \theta G)}
\end{aligned}\end{equation}

\item Case 2: $ \alpha_{\mathrm{max}} > \frac{2(1-\sigma)}{L_{\mathrm{max}}} $. In this case $\delta >0 $ iff 

\begin{equation}
\begin{aligned}
\alpha_{\mathrm{max}} \leq  \frac{-(\nu -1) + \sqrt{(\nu-1)^2 + \Big(4\nu(L + \theta G)a(\nu + \frac{p}{1+p})\tilde{\alpha}_{\mathrm{min}} \rho\Big)}}{2a(\nu L + \nu \theta G)}
\end{aligned}    
\end{equation}

Let

\begin{equation}
\begin{aligned}
\mathrm{UB}(a) \triangleq \frac{-(\nu -1) + \sqrt{(\nu-1)^2 + \Big(4\nu(L + \theta G)a(\nu + \frac{p}{1+p})\tilde{\alpha}_{\mathrm{min}}\rho\Big)}}{2a(\nu L + \nu \theta G)}    
\end{aligned}    
\end{equation}

\end{enumerate}

An interval $\alpha_{\mathrm{max}} \in (\frac{2(1-\sigma)}{L_{\mathrm{max}}} , \mathrm{UB}(a)]$ exists if $\frac{2(1-\sigma)}{L_{\mathrm{max}}} \leq \mathrm{UB}(a)$. The bound $\mathrm{UB}(a)$ is a monotonically decreasing function of $a$ and the interval exists if 

\begin{equation}\label{eq:a_bound2}
\begin{aligned}
a \leq \frac{[(\frac{p}{p+1}) +1]}{\tilde{\alpha}_{\mathrm{min}} v (L + \theta G)}.
\end{aligned}    
\end{equation}

Combining the bounds on $a$ for $\delta >0$ and $\frac{1}{T}$ convergence from \eqref{eq:a_bound1},\eqref{eq:a_bound2},

\begin{equation}
\begin{aligned}
a \leq \min{\{\frac{[(\frac{p}{p+1}) +1]}{\tilde{\alpha}_{\mathrm{min}} v (L + \theta G)}, \frac{\theta \epsilon}{\beta L^2 + \td{\alpha}_{\mathrm{min}} p L^2} \}}.    
\end{aligned}    
\end{equation}

Hence, in the statement of the theorem, 
\begin{equation}
    \begin{aligned}
        \hat{a} &= \min{\{\frac{[(\frac{p}{p+1}) +1]}{\tilde{\alpha}_{\mathrm{min}} v (L + \theta G)}, \frac{\theta \epsilon}{\beta L^2 + \td{\alpha}_{\mathrm{min}} p L^2} \}}, \\
    \end{aligned}
\end{equation}

and if $a = \hat{a}$,  then 
\begin{equation}
    \begin{aligned}
        \hat{\alpha} &= \mathrm{UB}(\hat{a}).
    \end{aligned}
\end{equation}
Hence proved.

\end{proof}

\section{Distributed Compressed SGD with Armijo Step-Size Search~(DCSGD-ASSS)}

The CSGD-ASSS algorithm can be extended to the distributed setting with the same convergence rates as the central/single worker node setting. In this section, we present the algorithm and convergence results for DCSGD-ASSS. 

\subsection{Notation}
\begin{flushleft}
In the distributed setting, the superscript indicates the corresponding worker node. For instance, $g_{t}^{(k)}$ is the compressed gradient of worker node $k$ at time $t$. We assume for simplicity, the dataset is equally split among the workers and $n= NM $, where $n$ is the total number of data points, $M$ is the number of data points in each worker and $N$ is the number of worker nodes. The $i$~th data point at worker $k$ is $f_{i}^{(k)}(x)$.
\end{flushleft}

\subsection{Algorithm}
\begin{flushleft}
Consider the distributed setting in which there is one central node and $N$ worker nodes. The function to be minimized is 
\begin{equation}
\begin{aligned}
f(x)= \frac{1}{N} {\sum_{k=1}^{N} f^{(k)}(x)}
\end{aligned}    
\end{equation}
where $ f^{(k)}(x) = \frac{1}{M} {\sum_{i=1}^{M} f_i^{(k)}(x)} $.
We consider a distributed algorithm consistent with \cite{aji_hea,dan_convergence}. Each worker node samples a data point $f_{i_t}^{(k)}(x)$, searches for a step-size $\alpha_t^{(k)}$ with the Armijo step-size search algorithm,  computes $a \stt^{(k)} g_{t}^{(k)} $, and transmits it to the central node, where $g_{t}^{(k)}$ is the compressed gradient. The central node averages the compressed gradients and computes the descent step $x_{t+1} = x_{t} - \big(\sum_{k=1}^{N} a \stt^{(k)} g_{t}^{(k)} /N \big)$ and transmits $x_{t+1}$  to the worker nodes.
\end{flushleft}

\begin{algorithm}[H]
\caption{Distributed Compressed SGD with Armijo Step-Size Search (DCSGD-ASSS) }
\label{alg:distri_armijo}
\begin{algorithmic}[1]
\State \textbf{At worker node $(k),\ k\in [N]$ :}
\For{$t = 0,\cdots, T-1$}
    \State Sample data $i_t^{(k)}$
    \State $\alpha_{t}^{(k)} \leftarrow \text{Armijo step-size search}(f_{i_t}^{(k)},\alpha_{\mathrm{max}},x_t)$
    \State $g_{t}^{(k)} = top_k(m_t^{(k)} + a \alpha_t^{(k)} \nabla f_{i_t}^{(k)} (x_t) )$ \label{step:dist_armi_inde}
    \State Transmit $g_{t}^{(k)} $  to \textit{central node}
    \State $m_{t+1}^{(k)} = m_t^{(k)} + a \alpha_t^{(k)} \nabla f_{i_t}^{(k)} (x_t) - g_t^{(k)}$ 
\EndFor
\State \textbf{At central node:} 
    \State $x_{t+1} = x_t - \big(\sum_{k=1}^{N}  g_{t}^{(k)} /N \big)$
    \State Transmit $x_{t+1}$ to worker nodes 
\end{algorithmic}
\end{algorithm}

\subsection{Results}
\begin{flushleft}
The convergence results for DCSGD-ASSS are proved using the perturbed iterate analysis method. Let $\hat{x}_0 = x_0$ and $\hat{x}_{t+1} = \hat{x}_t - \frac{1}{N} \sum_{k=1}^N  {a \alpha_t^{(k)} \nabla f_{i_t}^{(k)}(x_t) }$. The Lemma~\ref{lem:memory} is used in the proof of convergence.

\end{flushleft}

\begin{lemma}\label{lem:memory}
 For any time $t$, 
 \begin{equation}
     x_{t} - \hat{x}_{t} = \frac{1}{N}\sum_{k=1}^N m_t^{(k)}. 
 \end{equation}
\end{lemma}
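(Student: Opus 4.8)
The plan is to prove the identity by induction on $t$, mirroring the single-node argument of Lemma~\ref{lem:math_prelim_error} but now averaging the per-node feedback-memory terms. For the base case $t=0$, the perturbed iterate is initialized as $\hat{x}_0 = x_0$, so the left-hand side vanishes; since every worker initializes its memory at $m_0^{(k)} = 0$, the right-hand side $\frac{1}{N}\sum_{k=1}^N m_0^{(k)}$ is also zero, and the claim holds.

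For the inductive step, I would assume $x_t - \hat{x}_t = \frac{1}{N}\sum_{k=1}^N m_t^{(k)}$ and substitute the central-node update $x_{t+1} = x_t - \frac{1}{N}\sum_{k=1}^N g_t^{(k)}$ together with the perturbed-iterate recursion $\hat{x}_{t+1} = \hat{x}_t - \frac{1}{N}\sum_{k=1}^N a\alpha_t^{(k)} \nabla f_{i_t}^{(k)}(x_t)$. Subtracting the two gives
\begin{equation}
x_{t+1} - \hat{x}_{t+1} = (x_t - \hat{x}_t) + \frac{1}{N}\sum_{k=1}^N \left( a\alpha_t^{(k)} \nabla f_{i_t}^{(k)}(x_t) - g_t^{(k)} \right).
\end{equation}
Applying the inductive hypothesis to $x_t - \hat{x}_t$ and recognizing the per-node error update $m_{t+1}^{(k)} = m_t^{(k)} + a\alpha_t^{(k)} \nabla f_{i_t}^{(k)}(x_t) - g_t^{(k)}$ from Step~\ref{step:dist_armi_inde} and the memory line of Algorithm~\ref{alg:distri_armijo}, the two sums merge term by term and collapse to $\frac{1}{N}\sum_{k=1}^N m_{t+1}^{(k)}$, closing the induction.

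The argument is essentially bookkeeping, so there is no deep obstacle; the only point requiring care is to match the quantity compressed at each worker, namely $m_t^{(k)} + a\alpha_t^{(k)} \nabla f_{i_t}^{(k)}(x_t)$ with the scaled step-size $\eta_t^{(k)} = a\alpha_t^{(k)}$ already folded in, against the corresponding term in the perturbed-iterate recursion, so that the gradient contributions cancel and exactly the residual memory survives. Because both the descent step and the virtual iterate are driven by the same average $\frac{1}{N}\sum_{k=1}^N a\alpha_t^{(k)} \nabla f_{i_t}^{(k)}(x_t)$, the cancellation is exact, and the linearity of the average lets the single-node telescoping of Lemma~\ref{lem:math_prelim_error} go through unchanged for the distributed mean.
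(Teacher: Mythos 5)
Your proof is correct and follows essentially the same route as the paper's: induction on $t$, subtracting the perturbed-iterate recursion from the central-node update and identifying the residual with the per-node memory update $m_{t+1}^{(k)} = m_t^{(k)} + a\alpha_t^{(k)}\nabla f_{i_t}^{(k)}(x_t) - g_t^{(k)}$. The only cosmetic difference is that you anchor the base case at $t=0$ (both sides zero) while the paper verifies $t=1$ explicitly; the inductive step is identical.
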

\begin{proof}
 We prove this by induction. Note that $m_0^{(k)} = 0$ for all $k$. For $t = 1$, 
 \begin{align}
     x_1 &= x_0 - \frac{1}{N} \sum_{k=1}^N  {top_k(a \alpha_0^{(k)} \nabla f_{i_0}^{(k)} (x_0))}, \\
     \hat{x}_1 &= \hat{x}_0 -  \frac{1}{N}\sum_{k=1}^N {a \alpha_0^{(k)} \nabla f_{i_0}^{(k)} (x_0)}.
 \end{align}
 Then, 
 \begin{equation}
     x_1 - \hat{x}_1 = \frac{1}{N} \sum_{k=1}^N ({a \alpha_0^{(k)} \nabla f_{i_0}^{(k)} (x_0) - top_k(a \alpha_0^{(k)} \nabla f_{i_0}^{(k)} (x_0))})= \frac{1}{N} \sum_{k=1}^N  {m_1^{(k)}}.
 \end{equation}
 Let $x_{t} - \hat{x}_{t} = \frac{1}{N}\sum_{k=1}^N m_t^{(k)}$. For $t+1$, 
 \begin{align}
     x_{t+1} &= x_t - \frac{1}{N} \sum_{k=1}^N {top_k(m_t^{(k)} + a \alpha_t^{(k)} \nabla f_{i_t}^{(k)} (x_t))}, \\
     \hat{x}_{t+1} &= \hat{x}_t -  \frac{1}{N}\sum_{k=1}^N {a \alpha_t^{(k)} \nabla f_{i_t}^{(k)} (x_t)}.
 \end{align}
 
 Thus, 
 \begin{equation}
 \begin{aligned}
     x_{t+1} - \hat{x}_{t+1} &= \frac{1}{N}\sum_{k=1}^N \big({m_t^{(k)} + a \alpha_t^{(k)} \nabla f_{i_t}^{(k)} (x_t)) -top_k(m_t^{(k)} + a \alpha_t^{(k)} \nabla f_{i_t}^{(k)} (x_t))}\big) &&\\
     &= \frac{1}{N}\sum_{k=1}^N {m_{t+1}^{(k)}}.&&\\
      \end{aligned}
\end{equation}
\end{proof}

\begin{theorem}
(Distributed-Convex) Let $f_{i}^{(k)}(x)$ be convex and $L_i^{(k)}$ smooth for $i\in[M]$, $k \in [N]$, and $f_{i}^{(k)}$ satisfy the interpolation condition. Then there exists $\hat{a}$ such that if $a < \hat{a}$, the distributed  compressed SGD with Armijo step-size search for $\sigma \in (0,1)$ and scale factor $a$ satisfies
\begin{equation}
\begin{aligned}
  &E\left[f\left(\frac{1}{T}{\sum_{t=0}^{T-1}x_t} \right)\right] -E[f(x^*)] \\ &\leq  \frac{1}{\delta_2 T}  ( E[||\hat{x}_{0} - x^*||^2] )\\
\end{aligned}    
\end{equation}

where for any $0< \epsilon < \zeta$, $\zeta \triangleq \frac{\sigma \gamma}{(2-\gamma)}$ and $L_{\mathrm{max}} \triangleq \max_{i,k}L_i^{(k)}$,

\begin{equation}
\begin{aligned}
&\delta_2 \triangleq \rho \frac{2(1-\sigma)}{L_{\mathrm{max}}} \Big(2 a  - \frac{a^2 }{\sigma} - \frac{a^2 }{\sigma p(\epsilon)}   - (1-\gamma)\left(1+ \frac{1}{r(\epsilon)}\right) \frac{a^2 }{\sigma}  \Big),\\
& \hat{a} = (\zeta-\epsilon). \\
\end{aligned}    
\end{equation}

The values $p(\epsilon),r(\epsilon)$ are as stated in Lemma~\ref{lem:ex_bound_lemma}.

\end{theorem}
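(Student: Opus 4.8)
The plan is to follow the template of the single-worker convex proof (Theorem~\ref{thm:bcmf_convex}) almost verbatim, replacing the single perturbed iterate by the distributed virtual sequence $\hat{x}_{t+1} = \hat{x}_t - \frac{1}{N}\sum_{k=1}^N a\alpha_t^{(k)}\nabla f_{i_t}^{(k)}(x_t)$ and tracking the \emph{averaged} memory. I would work with the potential $V_t \triangleq \|\hat{x}_t - x^*\|^2 + \frac{1}{N}\sum_{k=1}^N \|m_t^{(k)}\|^2$ and use Lemma~\ref{lem:memory} in the form $x_t - \hat{x}_t = \frac{1}{N}\sum_{k=1}^N m_t^{(k)}$, together with interpolation, which guarantees $e_t^{(k)} \triangleq f_{i_t}^{(k)}(x_t) - f_{i_t}^{(k)}(x^*) \geq 0$ at every worker.

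First I would expand the squared distance. Writing $G_t \triangleq \frac{a}{N}\sum_{k=1}^N \alpha_t^{(k)}\nabla f_{i_t}^{(k)}(x_t)$,
\[
\|\hat{x}_{t+1}-x^*\|^2 = \|\hat{x}_t - x^*\|^2 + \|G_t\|^2 - 2\langle \hat{x}_t - x_t, G_t\rangle - 2\langle x_t - x^*, G_t\rangle.
\]
For the last inner product I would invoke convexity of each $f_{i_t}^{(k)}$ to obtain $-2\langle x_t - x^*, G_t\rangle \leq -\frac{2a}{N}\sum_k\alpha_t^{(k)}e_t^{(k)}$. For the middle cross term, rather than expanding the double average worker-by-worker, I would treat $x_t - \hat{x}_t$ and $G_t$ as aggregate vectors and apply Lemma~\ref{lem:math_prelim_tri} \emph{once} with a free parameter $q$, then use Jensen's inequality $\|\frac{1}{N}\sum_k v_k\|^2 \leq \frac{1}{N}\sum_k\|v_k\|^2$ twice to replace $\|x_t - \hat{x}_t\|^2$ by $\frac{1}{N}\sum_k\|m_t^{(k)}\|^2$ and $\|G_t\|^2$ by $\frac{a^2}{N}\sum_k(\alpha_t^{(k)})^2\|\nabla f_{i_t}^{(k)}(x_t)\|^2$.

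Next I would apply the per-worker Armijo stopping condition together with interpolation (exactly as in \eqref{eq:scf4}) to convert each $(\alpha_t^{(k)})^2\|\nabla f_{i_t}^{(k)}(x_t)\|^2$ into $\frac{\alpha_t^{(k)}}{\sigma}e_t^{(k)}$, and bound the averaged memory recursion using Lemmas~\ref{lem:math_prelim_compress} and \ref{lem:math_prelim_tri2} per worker before averaging; this contributes the factor $(1-\gamma)(1+r)$ on $\frac{1}{N}\sum_k\|m_t^{(k)}\|^2$ and a term $(1-\gamma)(1+\frac{1}{r})\frac{a^2}{\sigma}\cdot\frac{1}{N}\sum_k\alpha_t^{(k)}e_t^{(k)}$ on the descent. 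Choosing $q = p$ makes all constants coincide with the centralized case, yielding
\[
V_{t+1} \leq \|\hat{x}_t - x^*\|^2 + \big(p + (1-\gamma)(1+r)\big)\tfrac{1}{N}\textstyle\sum_k\|m_t^{(k)}\|^2 - \tfrac{1}{N}\textstyle\sum_k\alpha_t^{(k)}e_t^{(k)}\,\tilde{a}_2(p,r),
\]
the exact distributed analogue of \eqref{eq:cf1}. I would then invoke Lemma~\ref{lem:ex_bound_lemma} to pick $(p(\epsilon),r(\epsilon))$ with $p(\epsilon) + (1-\gamma)(1+r(\epsilon)) < 1$ and $\tilde{a}_2(p(\epsilon),r(\epsilon)) > 0$ whenever $a \leq \zeta - \epsilon$, which lets me drop the memory coefficient to $1$ so that the potential telescopes; lower-bound $\alpha_t^{(k)} \geq \tilde{\alpha}_{\mathrm{min}}\rho$; and take conditional expectation, using the unbiasedness identity $E_{i_t}\!\left[\frac{1}{N}\sum_k e_t^{(k)}\right] = f(x_t) - f(x^*)$, which follows from $E_i[f_i^{(k)}] = f^{(k)}$ and $f = \frac{1}{N}\sum_k f^{(k)}$. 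Summing over $t$, applying Jensen to the convex $f$, and using $m_0^{(k)} = 0$ and $\hat{x}_0 = x_0$ then gives the stated bound with $\delta_2 = \rho\frac{2(1-\sigma)}{L_{\mathrm{max}}}\tilde{a}_2(p(\epsilon),r(\epsilon))$.

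The main obstacle is the middle cross term $\langle x_t - \hat{x}_t, G_t\rangle$, which in the distributed setting is an inner product of two \emph{different} averages and would produce unwieldy cross-worker contributions $\langle m_t^{(j)}, \nabla f_{i_t}^{(k)}\rangle$ if expanded naively. The key observation that makes the proof succeed is that this inner product can be bounded at the aggregate level by a single application of Lemma~\ref{lem:math_prelim_tri}, and only \emph{afterwards} split into per-worker norms by Jensen; with $q = p$ the distributed constants collapse to exactly those of the centralized bound, which is precisely why $\delta_2$ has the same form as $\delta_1$. A secondary point requiring care is that the per-worker samples $i_t^{(k)}$ are drawn independently, so the unbiasedness identity must be applied to the joint expectation across workers.
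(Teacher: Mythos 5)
Your proposal is correct and follows essentially the same route as the paper's proof: expand $\|\hat{x}_{t+1}-x^*\|^2$ around the averaged virtual iterate, bound the cross term $\langle x_t-\hat{x}_t, G_t\rangle$ by a single aggregate application of Lemma~\ref{lem:math_prelim_tri} followed by Jensen's inequality, convert gradient norms via the per-worker Armijo condition plus interpolation, average the per-worker memory recursions, and invoke Lemma~\ref{lem:ex_bound_lemma} before telescoping. The one detail you flag as delicate—handling the cross term at the aggregate level rather than worker-by-worker—is exactly how the paper does it, and your choice $q=p$ reproduces the paper's constants verbatim.
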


\begin{proof}

We have that

\begin{equation}
    \begin{aligned}
        \|\hat{x}_{t+1} - x^*\|^2 &= \|\hat{x}_t - x^* - \frac{1}{N} \sum_{k=1}^N {a \alpha_t^{(k)} \nabla f_{i_t}^{(k)}(x_t) }\|^2 \\
        &= \|\hat{x}_t - x^* \|^2 + \frac{1}{N}\|\sum_{k=1}^N {a \alpha_t^{(k)} \nabla f_{i_t}^{(k)}(x_t) }\|^2 - 2 \langle \hat{x}_t - x^*, \frac{1}{N} \sum_{k=1}^N {a \alpha_t^{(k)} \nabla f_{i_t}^{(k)}(x_t) } \rangle \\
        &\leq \|\hat{x}_t - x^* \|^2 + \frac{1}{N} \sum_{k=1}^N {a^2 (\alpha_t^{(k)})^2 \|\nabla f_{i_t}^{(k)}(x_t)\|^2 } - \frac{2}{N} \sum_{k=1}^N \langle \hat{x}_t - x^*, {a \alpha_t^{(k)} \nabla f_{i_t}^{(k)}(x_t) } \rangle \\
        &= \|\hat{x}_t - x^* \|^2 +\frac{1}{N} \sum_{k=1}^N {a^2 (\alpha_t^{(k)})^2 \|\nabla f_{i_t}^{(k)}(x_t)\|^2 }\\ & + \frac{2}{N} \sum_{k=1}^N \langle x_t - \hat{x}_t, {a \alpha_t^{(k)} \nabla f_{i_t}^{(k)}(x_t) } \rangle + \frac{2}{N} \sum_{k=1}^N \langle  x^* - {x}_t , {a \alpha_t^{(k)} \nabla f_{i_t}^{(k)}(x_t) } \rangle,
    \end{aligned}
\end{equation}

where the inequality arises from Jensen's inequality. 

Since each $f_{i_t}^{(k)}$ is convex, 

\begin{equation}
    \langle x^* - x_t , a \alpha_{t}^{(k)} \nabla f_{i_t}^{(k)}(x_t) \rangle \leq a \alpha_{t}^{(k)}(f_{i_t}^{(k)}(x^*) - f_{i_t}^{(k)}(x_t)).
\end{equation}

From Lemma~\ref{lem:math_prelim_tri}, for any $p >0$,

\begin{equation}
     \langle x_t - \hat{x}_t, \frac{2}{N} \sum_{k=1}^N {a \alpha_t^{(k)} \nabla f_{i_t}^{(k)}(x_t) } \rangle \leq p \|x_t - \hat{x}_t\|^2 + \frac{1}{p}  \|\frac{1}{N}\sum_{k=1}^N {a^2 (\alpha_t^{(k)})^2 \nabla f_{i_t}^{(k)}(x_t)}\|^2.
\end{equation}

From Jensen's inequality, 
\begin{equation}
     \langle x_t - \hat{x}_t, \frac{2}{N} \sum_{k=1}^N {a \alpha_t^{(k)} \nabla f_{i_t}^{(k)}(x_t) } \rangle \leq p \|x_t - \hat{x}_t\|^2 + \frac{1}{p N} \sum_{k=1}^Na^2 (\alpha_t^{(k)})^2 \|\nabla f_{i_t}^{(k)}(x_t)\|^2.
\end{equation}

Thus, 
\begin{align}
    \|\hat{x}_{t+1} - x^*\|^2 &\leq \|\hat{x}_t - x^* \|^2 + \frac{1}{N} \sum_{k=1}^N \big({a^2 (\alpha_t^{(k)})^2 \|\nabla f_{i_t}^{(k)}(x_t)\|^2 }\big) +  p \|x_t - \hat{x}_t\|^2 \\
    &+ \frac{1}{p N} \sum_{k=1}^N a^2 (\alpha_t^{(k)})^2 \|\nabla f_{i_t}^{(k)}(x_t)\|^2 + \frac{2}{N} \sum_{k=1}^N a \alpha_{t}^{(k)}(f_{i_t}^{(k)}(x^*) - f_{i_t}^{(k)}(x_t)) \\
    &= \|\hat{x}_t - x^* \|^2 + p \| \frac{1}{N}\sum_{k=1}^N{m_t^{(k)}}\|^2 + (1 + \frac{1}{p}) \frac{1}{N} \sum_{k=1}^N {a^2 (\alpha_t^{(k)})^2 \|\nabla f_{i_t}^{(k)}(x_t)\|^2 } &&\\
    &\hspace{3em}- \frac{2}{N} \sum_{k=1}^N a \alpha_{t}^{(k)} e_t^{(k)} \label{eq:lemma3} \\ 
    &\leq \|\hat{x}_t - x^* \|^2 + \frac{p}{N} \sum_{k=1}^N \|m_t^{(k)}\|^2 + \frac{(1 + \frac{1}{p})}{N} \sum_{k=1}^N {a^2 (\alpha_t^{(k)})^2 \|\nabla f_{i_t}^{(k)}(x_t)\|^2 } &&\\
    &\hspace{3em}- \frac{2}{N} \sum_{k=1}^N a \alpha_{t}^{(k)} e_t^{(k)}, \label{eq:jensen1}
\end{align}
where $e_t^{(k)} \triangleq f_{i_t}^{(k)}(x_t) - f_{i_t}^{(k)}(x^*)$, \eqref{eq:lemma3} is from Lemma \ref{lem:memory} and \eqref{eq:jensen1} is from Jensen's inequality.

We have from the compression property~(Lemma~\ref{lem:math_prelim_error}) and Lemma~\ref{lem:math_prelim_tri2} that for each $k$ and some $r > 0$,
\begin{equation}
    \begin{aligned}
||m_{t+1}^{(k)}||^2 &\leq (1-\gamma) ||m_t^{(k)} + a \alpha_{t}^{(k)} \nabla f^{(k)}_{i_t}(x_t) ||^2 \\
 & \leq (1- \gamma) \big( (1+r) ||m_t^{(k)}||^2 + (1+ \frac{1}{r}) a^2 (\alpha_{t}^{(k)})^2 ||\nabla f^{(k)}_{i_t}(x_t) ||^2 \big)\\
 & \leq (1-\gamma)(1+r) ||m^{(k)}_t||^2 + (1-\gamma) (1+ \frac{1}{r}) a^2 (\alpha_{t}^{(k)})^2 ||\nabla f^{(k)}_{i_t}(x_t)||^2.
\end{aligned} 
\end{equation}

Averaging the above over all $k$~(the workers),
\begin{equation}\label{eq:avgmem}
    \frac{1}{N}\sum_{k=1}^N ||m_{t+1}^{(k)}||^2 \leq \frac{(1-\gamma)(1+r)}{N} \sum_{k=1}^N {||m^{(k)}_t||^2} + \frac{(1-\gamma) (1+ \frac{1}{r})}{N} \sum_{k=1}^N {a^2 (\alpha_{t}^{(k)})^2 ||\nabla f^{(k)}_{i_t}(x_t)||^2}.
\end{equation}

Adding \eqref{eq:avgmem} and \eqref{eq:jensen1}, we get, 
\begin{equation}\label{eq:convexsum}
\begin{aligned}
    \|\hat{x}_{t+1} - x^*\|^2 + \frac{1}{N}\sum_{k=1}^N {||m_{t+1}^{(k)}||^2} &\leq \|\hat{x}_t - x^* \|^2 + \frac{p}{N} \sum_{k=1}^N \|m_t^{(k)}\|^2 &&\\
    &+ (\frac{p+1}{pN}) \sum_{k=1}^N {a^2 (\alpha_t^{(k)})^2 \|\nabla f_{i_t}^{(k)}(x_t)\|^2 }
    - \frac{2}{N} \sum_{k=1}^N a \alpha_{t}^{(k)} e_t^{(k)} \\ &+ \frac{(1-\gamma)(1+r)}{N} \sum_{k=1}^N {||m^{(k)}_t||^2} \\
    &+ (1-\gamma) (1+ \frac{1}{r})\frac{1}{N} \sum_{k=1}^N {a^2 (\alpha_{t}^{(k)})^2 ||\nabla f^{(k)}_{i_t}(x_t)||^2}.
\end{aligned}
\end{equation}

For any worker $k$, from the Armijo step-size search stopping condition,
\begin{align}
    \|\nabla f_{i_t}^{(k)}(x_t)\|^2  \leq \frac{1}{\sigma  \alpha_{t}^{(k)}}  ( f_{i_t}^{(k)}(x_t) - f_{i_t}^{(k)}(x_t - \alpha_{t}^{(k)} \nabla f_{i_t}^{(k)}(x_t))).
\end{align}
From the interpolation condition we have that $f_{i_t}^{(k)}(x_t - \alpha_{t}^{(k)} \nabla f_{i_t}^{(k)}(x_t)) \geq f_{i_t}^{(k)}(x^*)$. Thus,  
\begin{equation}
    a^2 (\alpha_{t}^{(k)})^2 \|\nabla f_{i_t}^{(k)}(x_t)\|^2 \leq \frac{a^2 \alpha_{t}^{(k)}}{\sigma} (f_{i_t}^{(k)}(x_t) - f_{i_t}^{(k)}(x^*)).
\end{equation}

Summing over all $k$ we get, 
\begin{equation}
     \sum_{k=1}^N a^2 (\alpha_{t}^{(k)})^2 \|\nabla f_{i_t}^{(k)}(x_t)\|^2 \leq \frac{1}{\sigma}\sum_{k=1}^N {a^2 \alpha_{t}^{(k)}} e_t^{(k)}.
\end{equation}

Using the above in \eqref{eq:convexsum}, 

\begin{equation}\label{eq:dist_p_r_subs}
    \begin{aligned}
        \|\hat{x}_{t+1} - x^*\|^2 + \sum_{k=1}^N \frac{||m_{t+1}^{(k)}||^2}{N} &\leq \|\hat{x}_t - x^* \|^2 + \frac{p + (1-\gamma)(1+r)}{N} \sum_{k=1}^N {||m^{(k)}_t||^2} \\
        &+ \frac{\frac{a^2}{\sigma}(1 + \frac{1}{p} + (1-\gamma)(1 + \frac{1}{r})) - 2a}{N} \sum_{k=1}^N { \alpha_{t}^{(k)}} e_t^{(k)} .
    \end{aligned}
\end{equation}

From the definitions in Theorems~\ref{thm:bcmf_convex},\ref{thm:strong_convex},
 \begin{equation}\label{eq:dist_def}
 \begin{aligned}
 \beta_3(p,r) = \Big( p  +(1-\gamma)(1+r) \Big) \\
 \tilde{a}_1(p,r) = 2/ \Big( \frac{1}{\sigma} + \frac{1}{\sigma p} + \frac{(1-\gamma)(1+ \frac{1}{r})}{\sigma} \Big) \\
 \tilde{a}_2(p,r) = \Big(2 a  - \frac{a^2 }{\sigma} - \frac{a^2 }{\sigma p}   - (1-\gamma)(1+ \frac{1}{r}) \frac{a^2 }{\sigma}  \Big)
 \end{aligned}     
 \end{equation}

From Lemma~\ref{lem:ex_bound_lemma}, $\exists\  (p(\epsilon),r(\epsilon))$ such that 

\begin{equation}
\begin{aligned}
\Big(  p(\epsilon)  +(1-\gamma)(1+r(\epsilon)) \Big) = 1 - \delta < 1 &&\\
\tilde{a}_1(p(\epsilon),r(\epsilon)) >\zeta -\epsilon. \\
\end{aligned}    
\end{equation}

Choosing $(p,r) = (p(\epsilon),r(\epsilon))$ in~\ref{eq:dist_def}, 
if $ a \leq  \zeta - \epsilon$, then $a \leq  \tilde{a}_1(p(\epsilon),r(\epsilon))$.

It is shown in the proof of Theorem~\ref{thm:strong_convex}, that  $a \leq  \tilde{a}_1(p(\epsilon),r(\epsilon))$ implies  $\tilde{a}_2(p(\epsilon),r(\epsilon))>0$. Also, $\alpha_t \geq \tilde{\alpha}_{\mathrm{min}}\rho$ and $e_t^{(k)} = f_{i_t}^{(k)}(x_t) - f_{i_t}^{(k)}(x^*)$. Substituting these values in~\eqref{eq:dist_p_r_subs},

\begin{equation}
    \begin{aligned}
        \|\hat{x}_{t+1} - x^*\|^2 + \sum_{k=1}^N \frac{1}{N} {||m_{t+1}^{(k)}||^2} &\leq \|\hat{x}_t - x^* \|^2 + \frac{1}{N} \sum_{k=1}^N {||m^{(k)}_t||^2} \\
        &- \frac{\tilde{a}_2(p(\epsilon),r(\epsilon))}{N} \sum_{k=1}^N { \tilde{\alpha}_{\mathrm{min}}\rho} (f_{i_t}^{(k)}(x_t) - f_{i_t}^{(k)}(x^*) ),
    \end{aligned}
\end{equation}

\begin{equation}
\begin{aligned}
    \frac{\alpha_{\mathrm{min}}\rho \big(\tilde{a}_2(p(\epsilon),r(\epsilon))\big)}{N} \sum_{k=1}^N (f_{i_t}^{(k)}(x_t) - f_{i_t}^{(k)}(x^*) ) &\leq \|\hat{x}_t - x^* \|^2 -  \|\hat{x}_{t+1} - x^*\|^2 &&\\
    &+ \frac{1}{N} \sum_{k=1}^N (||m^{(k)}_t||^2 - ||m_{t+1}^{(k)}||^2).
\end{aligned}    
\end{equation}

Taking expectation $E_{i_t}$ with respect to the sampled data points chosen at all nodes $k = 1, \ldots, N$ at time $t$ conditioned on the entire past, 

\begin{equation}
\begin{aligned}
    \alpha_{\mathrm{min}}\rho\big(\tilde{a}_2(p(\epsilon),r(\epsilon))\big (f(x_t) - f(x^*)) &\leq  \|\hat{x}_t - x^* \|^2 - E_{i_t}[\|\hat{x}_{t+1} - x^*\|^2] \\
    &+ \frac{1}{N} \sum_{k=1}^N (||m^{(k)}_t||^2 - E_{i_t}[||m_{t+1}^{(k)}||^2]).
\end{aligned}    
\end{equation}

Taking expectation $E$ over the whole process and averaging over all $t$, 

\begin{equation}
\begin{aligned}
    \alpha_{\mathrm{min}}\rho \big(\tilde{a}_2(p(\epsilon),r(\epsilon))\big) &\frac{1}{T} {\sum_{t=0}^{T-1} (E[f(x_t)] - E[f(x^*)]) }\\
    &\leq \frac{1}{T} \sum_{t=0}^{T-1} (E[\|\hat{x}_t - x^* \|^2] - E[\|\hat{x}_{t+1} - x^*\|^2]) \\
    &+ \frac{1}{NT} \sum_{k=1}^N \sum_{t=0}^{T-1} (E[||m^{(k)}_t||^2] - E[||m_{t+1}^{(k)}||^2]) \\
    &\leq \frac{1}{T} (E[\|\hat{x}_0 - x^* \|^2] - E[\|\hat{x}_{T} - x^*\|^2]) \\
    &+ \frac{1}{NT} \sum_{k=1}^N (E[||m^{(k)}_0||^2] - E[||m_{T}^{(k)}||^2]).
    \end{aligned}
\end{equation}

Using Jensen's inequality and $m_0^{(k)} = 0$ for all $k$, 
\begin{equation}
     \alpha_{\mathrm{min}}\rho\big(\tilde{a}_2(p(\epsilon),r(\epsilon))\big) \left(E[f(\frac{1}{T} {\sum_{t=0}^{T-1} x_t})] - E[f(x^*)]\right) \leq \frac{1}{T} \left( E[\|{x}_0 - x^* \|^2]  \right).
\end{equation}
Hence proved.
\end{proof}

\begin{remark}
Note that geometric convergence in the distributed setting for CSGD-ASSS can be obtained similarly from the single node case under the assumption of smoothness and that at least one function $f_{i}^{(k)}$ is strongly convex. 
\end{remark}

\section{Proof of Scaled Armijo Step-Size Search GD}\label{sec:det_gd}
A proof for $O(\frac{1}{T})$ convergence of Armijo step-size search GD~(CSGD-ASSS without the stochasticity in $f_{i_t}$) with scaling for all $\sigma \in (0,1)$ and convex function $f$ is presented in this section.
\begin{theorem}
(Deterministic-Uncompressed-Convex) Let $f(x)$ be convex and $L$ smooth. The Armijo step-size search gradient descent for $\sigma \in (0,1)$ and scale factor $a < 2 \sigma$ satisfies
\begin{equation}
\begin{aligned}
f(\Bar{x}_T) - f(x^*) \leq \frac{ ||x_0- x^*||^2  }{\tilde{\alpha}_{\mathrm{min}} \rho (2  a  - \frac{a^2 }{ \sigma}) T}
\end{aligned}    
\end{equation}

where $\Bar{x}_T =  \frac{1}{T} {\sum_{i=0}^{T-1} x_i}$.
\end{theorem}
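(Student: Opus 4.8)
The plan is to treat this as the deterministic, uncompressed specialization of Theorem~\ref{thm:bcmf_convex}: with a single function $f$ there is no sampling (so no expectations are needed) and no compression (so the memory term vanishes and $\hat{x}_t = x_t$), and the interpolation condition holds automatically since $\nabla f(x^*) = 0$ at the minimizer. First I would write the one-step distance recursion for the scaled update $x_{t+1} = x_t - \eta_t \nabla f(x_t)$ with $\eta_t = a\alpha_t$, expanding $\|x_{t+1} - x^*\|^2 = \|x_t - x^*\|^2 - 2\eta_t \langle x_t - x^*, \nabla f(x_t)\rangle + \eta_t^2 \|\nabla f(x_t)\|^2$, and then apply convexity in the form $\langle x_t - x^*, \nabla f(x_t)\rangle \geq f(x_t) - f(x^*)$ to control the cross term.

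The key step is to bound the quadratic term $\eta_t^2 \|\nabla f(x_t)\|^2$ using the Armijo stopping condition. Since $x^*$ is a global minimizer, $f(x_t - \alpha_t \nabla f(x_t)) \geq f(x^*)$, so the stopping criterion \eqref{eq:armijo_ls} yields $\sigma \alpha_t \|\nabla f(x_t)\|^2 \leq f(x_t) - f(x^*)$, hence $\alpha_t \|\nabla f(x_t)\|^2 \leq \frac{1}{\sigma}(f(x_t) - f(x^*))$. Multiplying through by $a^2 \alpha_t$ gives $\eta_t^2 \|\nabla f(x_t)\|^2 \leq \frac{a^2 \alpha_t}{\sigma}(f(x_t) - f(x^*))$. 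Substituting both bounds collapses the recursion to $\|x_{t+1} - x^*\|^2 \leq \|x_t - x^*\|^2 - \alpha_t \bigl(2a - \frac{a^2}{\sigma}\bigr)(f(x_t) - f(x^*))$.

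The crux is the sign of the coefficient $2a - \frac{a^2}{\sigma} = a\bigl(2 - \frac{a}{\sigma}\bigr)$: the hypothesis $a < 2\sigma$ is exactly what makes this positive, which is precisely where scaling buys convergence for every $\sigma \in (0,1)$ rather than only $\sigma \geq 1/2$ (the unscaled case $a = 1$ would require $2 - 1/\sigma > 0$). With positivity in hand, I would lower-bound $\alpha_t \geq \tilde{\alpha}_{\mathrm{min}}\rho$ via Lemma~\ref{lem:armijo_bound} (with $L_{\mathrm{max}} = L$), legitimate because $f(x_t) - f(x^*) \geq 0$, to obtain $\tilde{\alpha}_{\mathrm{min}}\rho \bigl(2a - \frac{a^2}{\sigma}\bigr)(f(x_t) - f(x^*)) \leq \|x_t - x^*\|^2 - \|x_{t+1} - x^*\|^2$. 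Summing over $t = 0, \dots, T-1$ telescopes the right-hand side to at most $\|x_0 - x^*\|^2$, and Jensen's inequality applied to the convex $f$ replaces $\frac{1}{T}\sum_t (f(x_t) - f(x^*))$ with $f(\bar{x}_T) - f(x^*)$, delivering the stated bound. No step is genuinely hard here; the one point to state carefully is that the global-minimizer inequality $f(x_t - \alpha_t \nabla f(x_t)) \geq f(x^*)$ is what converts the Armijo criterion into a bound on the suboptimality gap.
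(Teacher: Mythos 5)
Your proposal is correct and follows essentially the same route as the paper's own proof: expand $\|x_{t+1}-x^*\|^2$, bound the cross term by convexity, convert the Armijo stopping condition into $\eta_t^2\|\nabla f(x_t)\|^2 \leq \frac{a^2\alpha_t}{\sigma}(f(x_t)-f(x^*))$ using $f(\tilde{x}_{t+1})\geq f(x^*)$, observe that $2a - a^2/\sigma > 0$ exactly when $a < 2\sigma$, lower-bound $\alpha_t$ by $\tilde{\alpha}_{\mathrm{min}}\rho$, telescope, and finish with Jensen. The only difference is presentational (you fold the minimizer inequality into the Armijo bound in one step), so there is nothing substantive to flag.
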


\begin{proof}

The descent step is $x_{t+1} = x_t- \alpha_t a \nabla f(x_t)$ where $a$ is the scaling factor and $x^*$ is the minimum.

\begin{equation}
\begin{aligned}
||x_{t+1}-x^* ||^2 &= ||x_t - \alpha_t a \nabla f(x_t) -x^*||^2  \\
&= ||x_{t} - x^*||^2 + \alpha_t^2 a^2 ||\nabla f(x_t)||^2\\
&\hspace{1cm}- 2 \alpha_t a \langle x_t - x^*, \nabla f(x_t) \rangle 
\end{aligned}
\end{equation}

From the convexity of $f$,

\begin{equation}
      \langle  x^*- x_t, \nabla f(x_t) \rangle \leq f(x^*) - f(x_t).
\end{equation}

This implies 
\begin{equation}\label{eq:da_eq1}
\begin{aligned}
||x_{t+1}-x^* ||^2 &\leq ||x_t - x^*||^2 + \alpha_t^2 a^2 || \nabla f(x_t)||^2 \\
&\hspace{1cm}+ 2\alpha_t  a (f(x^*) - f(x_t )).\\
\end{aligned}    
\end{equation}

The non-scaled iterate $\tilde{x}_t$~($\tilde{x}_t = x_t - \alpha_t \nabla f(x_t)$) corresponding to the step-size $\alpha_t$ returned by the Armijo step-size search satisfies

\begin{equation}
\begin{aligned}
f(\tilde{x}_{t+1})- f(x_t) \leq -\alpha_t \sigma|| \nabla f(x_t)||^2.
\end{aligned}    
\end{equation}

This implies 

\begin{equation}\label{eq:da_eq2}
\begin{aligned}
\alpha_t^2 a^2  ||\nabla f(x_t)||^2 \leq \frac{a^2 \alpha_t}{ \sigma} (f(x_t)- f(\tilde{x}_{t+1})).
\end{aligned}    
\end{equation}

Substituting \eqref{eq:da_eq2} in \eqref{eq:da_eq1},
\begin{equation}
\begin{aligned}
||x_{t+1}-x^* ||^2 &\leq ||x_t - x^*||^2 +\frac{a^2 \alpha_t}{ \sigma} (f(x_t)- f(\tilde{x}_{t+1}))\\
&\hspace{1cm}+ 2\alpha_t  a (f( x^*) - f(x_t)).\\
\end{aligned}    
\end{equation}

Since $f(\tilde{x}_{t+1}) \geq f(x^*)$,
\begin{equation}
\begin{aligned}
||x_{t+1}-x^* ||^2 &\leq ||x_t - x^*||^2 +\frac{a^2 \alpha_t}{ \sigma} (f(x_t)- f(x^*))\\
& \hspace{1cm}+ 2\alpha_t  a (f(x^*) - f(x_t)),&&\\
\end{aligned}    
\end{equation}

\begin{equation}
\begin{aligned}
\alpha_t (2  a  - \frac{a^2 }{ \sigma}) ( f(x_t) - f(x^*) )  \leq ||x_t - x^*||^2 -  ||x_{t+1}-x^* ||^2 \\
\end{aligned}    
\end{equation}

{$ (2  a  - \frac{a^2 }{ \sigma}) > 0 \iff a<2\sigma $ }. \textit{Without scaling the factor $ (2  a  - \frac{a^2 }{ \sigma})$ would be $(2 - \frac{1}{\sigma})$, thus limiting $\sigma\in[0.5,1)$ in \cite{stoc_painless}}.

For Armijo step-size search, $\alpha_t \geq \frac{2(1-\sigma)}{L}\rho$. Using this lower bound, summing over the horizon $T$ and using Jensen's inequality,

\begin{equation}
\begin{aligned}
 f(\frac{1}{T}{\sum_{t=1}^{T} x_t}) - f(x^*)   \leq \frac{||x_0 - x^*||^2}{\frac{2(1-\sigma)}{L}\rho (2  a  - \frac{a^2 }{ \sigma}) T}.  \\
\end{aligned}    
\end{equation}
Hence proved.

\end{proof}

\section{Additional Simulation Results and Discussion}

\subsection{Validation Accuracy Comparison between Adaptive and Non-Adaptive Compressed SGD}
A summary of validation accuracy of our experiments on ResNets and DenseNets is presented in Table \ref{tab:val}, where NN, DS, CP, R34, R18, D121, C10, C100 refer to Neural Network, Data Set, Compression Percentage, ResNet 34, ResNet 18, DenseNet 121, CIFAR 10, and CIFAR 100 respectively. The columns $0.1,0.05,0.01$ state validation accuracy of non-adaptive compressed SGD \cite{aji_hea,stich2018sparsified} with step-sizes $0.1,0.05$ and $0.01$ respectively. The column $3\sigma$ corresponds to our proposed CSGD-ASSS algorithm with $a=3\sigma$. The compression percentages we have used are not fine-tuned, and are arbitrary. Since we do not compress layers with less than $1000$ parameters, consistent with~\cite{basu_qsparselocalsgd}, the batch normalization layers of DenseNet 121 are not compressed. Compressing the layers excluding the batch normalization layers at $1\%$ results in an average of $4\%$ per layer compression. From the table, we infer that, the validation accuracy of CSGD-ASSS is competitive with tuned non-adaptive compressed SGD.

\begin{table*}
\centering
\begin{tabular}{|c|c|c|c|c|c|c|c|}
\hline
\textbf{NN} & \textbf{DS}& \textbf{CP}  &  \textbf{0.1}&  \textbf{0.05}& \textbf{0.01} & $\mathbf{3 \sigma}$\\
\hline
R34& C100&{1.5}&72.0 & 71.39 & 70.27& 71.64\\
\hline
R18& C100&{1.5}&71.55 & 71.74 & 69.85& 72.82\\
\hline
R34& C10&{1.5}&92.42& 92.19& 91.16& 90.42\\
\hline
R18& C10&{1.5}&92.34& 92.23& 91.28& 91.1\\
\hline
R34& C100&{10}&71.81& 71.21& 69.24& 72.55\\
\hline
R18& C100&{10}&71.64& 71.79& 70.04& 72.46\\
\hline
R34& C10&{10}&92.58& 92.44& 91.42& 91.05\\
\hline
R18& C10&{10}&92.28& 92.1& 91.11& 91.46\\
\hline
D121& C100&{4}&69.18& 68.45& 64.64& 71.63\\
\hline
D121& C100&{10}&68.65& 67.91& 63.26& 70.82\\
\hline
D121& C10&{4}&91.09& 91.06& 89.09& 90.89\\
\hline
D121& C10&{10}&91.39& 90.8& 89.42& 91.25\\
\hline
\end{tabular}
\captionsetup{justification=centering}
\caption{{
Validation accuracy of ResNets, DenseNet \\ 
NN- Neural Network, DS - Dataset, CP- Compression Percentage, R34 - ResNet 34, R18 - ResNet18, D121- DenseNet 121, Non-adaptive step-sizes - 0.1,0.05,0.01, CSGD-ASSS - 3$\sigma$}}
\label{tab:val}
\end{table*}

\begin{comment}

\subsection{DenseNet 121 on CIFAR}
In Figures~\ref{fig:d121_c100_4per},\ref{fig:d121_c100_10per} , we present our simulation on DenseNet 121 on the CIFAR 100 dataset. The curves non-adap~$0.1$ and non-adap~$0.05$ correspond to fixed step-size compressed SGD and the curve $a=3\sigma$ corresponds to the CSGD-ASSS algorithm with $a=3\sigma$. It can be clearly seen that the CSGD-ASSS algorithm outperforms fixed step-size compressed SGD at $4\%$ and $10\%$ compression.
\end{comment}
\subsection{Effect of Scaling on Armijo Step-Size Search}
\begin{figure*}%
\centering
\begin{subfigure}{.49\columnwidth}
\includegraphics[width=\columnwidth]{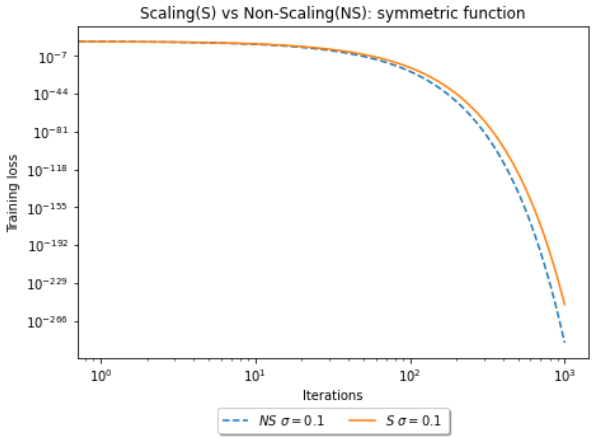}%
\caption{{ GD on $\sum_{i=1}^{10} \frac{x_i^2}{2^5}$}}%
\label{fig:gd_symmetric}%
\end{subfigure}\hfill%
\begin{subfigure}{.49\columnwidth}
\includegraphics[width=\columnwidth]{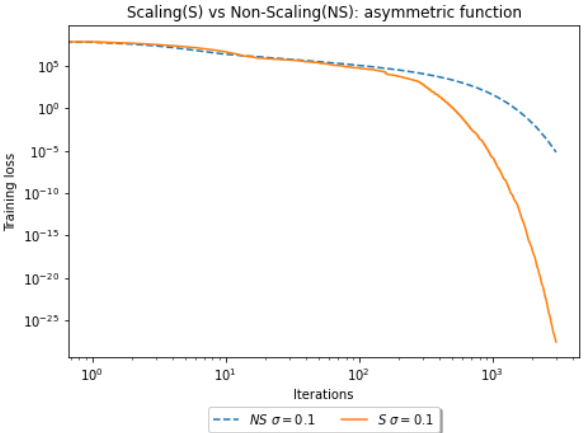}%
\caption{{ GD on $\sum_{i=1}^{10} \frac{x_i^2}{2^i}$} }%
\label{fig:gd_asymmetric}%
\end{subfigure}%
\caption{Scaled~(S $\sigma =0.1$) vs non-scaled~(NS $\sigma=0.1$) GD at $\sigma = 0.1$ and $a = 1.5 \sigma$}
\label{fig:scale_plots}
\end{figure*}

To demonstrate the effect of scaling on GD with Armijo step-size search, we plot scaled and non-scaled Armijo step-size search on the symmetric curve $\sum_{i=1}^{10} \frac{x_i^2}{2^5}$ and the asymmetric curve  $\sum_{i=1}^{10} \frac{x_i^2}{2^i}$ in Figures~\ref{fig:gd_symmetric} and \ref{fig:gd_asymmetric}, respectively. We set the parameters of the step-size search to be $\sigma =0.1$ and scale value $a = 1.5 \sigma$. On the symmetric curve, both the scaled and non-scaled step-size search algorithms have comparable performance, however, on the asymmetric curve, the scaled algorithm outperforms the non-scaled algorithm by several orders of magnitude. Machine learning tasks can have loss functions that are asymmetric, and hence scaling can prove to be a useful technique for adaptive step-size methods.

\begin{figure*}%
\centering
\begin{subfigure}{.49\columnwidth}
\includegraphics[width=\columnwidth]{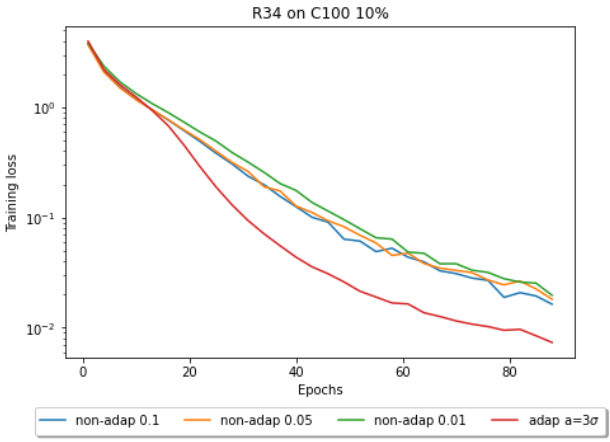}%
\caption{{ ResNet 34, CIFAR100, $\approx 10\%$} }%
\label{fig:r34_c100_10per}%
\end{subfigure}\hfill%
\begin{subfigure}{.49\columnwidth}
\includegraphics[width=\columnwidth]{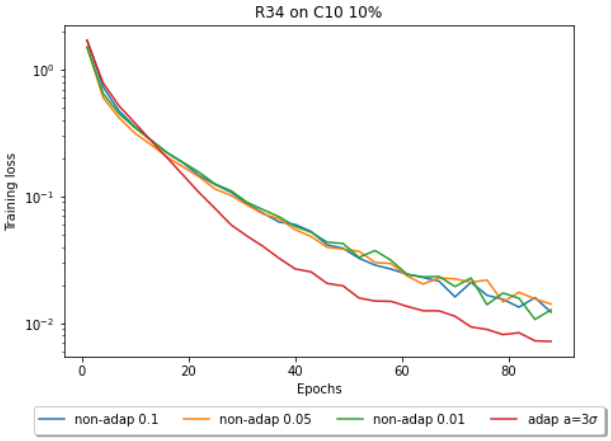}%
\caption{{ ResNet 34, CIFAR10, $\approx 10\%$}}%
\label{fig:r34_c10_10per}%
\end{subfigure}%

\caption{Training loss of ResNet 34 on CIFAR100 and CIFAR10 at $\approx10\%$ compression}
\label{fig:4}
\end{figure*}

\begin{figure*}%
\centering
\begin{subfigure}{.49\columnwidth}
\includegraphics[width=\columnwidth]{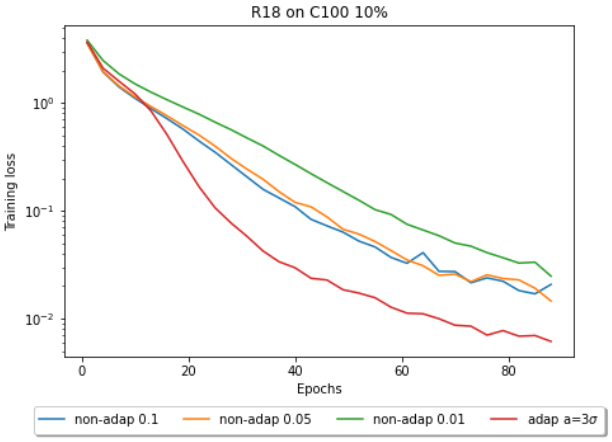}%
\caption{{ ResNet 18, CIFAR100, $\approx 10\%$} }%
\label{fig:r18_c100_10per}%
\end{subfigure}\hfill%
\begin{subfigure}{.49\columnwidth}
\includegraphics[width=\columnwidth]{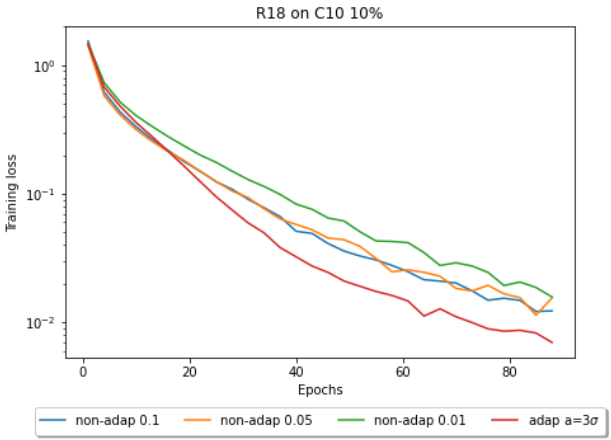}%
\caption{{ ResNet 18, CIFAR10, $\approx 10\%$}}%
\label{fig:r18_c10_10per}%
\end{subfigure}%

\caption{Training loss of ResNet 18 on CIFAR100 and CIFAR10 at $\approx10\%$ compression}
\label{fig:5}
\end{figure*}

\end{document}